\def\eqref#1{equation~\ref{#1}}
\def\1{\bm{1}}
\DeclareMathAlphabet{\mathsfit}{\encodingdefault}{\sfdefault}{m}{sl}
\SetMathAlphabet{\mathsfit}{bold}{\encodingdefault}{\sfdefault}{bx}{n}
\newcommand{\E}{\mathbb{E}}
\newcommand{\R}{\mathbb{R}}
\theoremstyle{plain}
\newtheorem{theorem}{Theorem}[section]
\newtheorem{proposition}[theorem]{Proposition}
\newtheorem{lemma}[theorem]{Lemma}
\theoremstyle{definition}
\newtheorem{observation}[theorem]{Observation}
\theoremstyle{remark}
\DeclareMathOperator{\wf}{wf}
\DeclareMathOperator{\cost}{cost}
\DeclareMathOperator{\rank}{rank}
\DeclareMathOperator{\follower}{Follower}
\DeclareMathOperator{\robust}{Robust}
\DeclareMathOperator{\OPT}{OPT}
\DeclareMathOperator{\ALG}{ALG}
\DeclareMathOperator{\OFF}{OFF}
\DeclareMathOperator{\belady}{Belady}
\DeclareMathOperator{\FR}{F\&R}
\newcommand{\N}{\mathbb{N}}
\definecolor{beaublue}{rgb}{0.74, 0.83, 0.9}
\def\FULL{}
\title{Algorithms for Caching and MTS with reduced number of predictions%
\thanks{Full version of this paper can be found in Appendix and at \url{https://arxiv.org/abs/??}}
}
\title{
Algorithms for Caching and MTS\\ with reduced number of predictions}
\author{%
Karim Abdel Sadek
\\
University of Amsterdam%
\thanks{The presentation of this paper  was financially supported
by the Amsterdam ELLIS Unit and Qualcomm.
Work completed while Abdel Sadek was in his final year of BSc at Bocconi University}%
\\
\texttt{karim.abdel.sadek@student.uva.nl}
\And
Marek Eliáš\\
Department of Computing Sciences\\
Bocconi University\\
\texttt{marek.elias@unibocconi.it}
}
\begin{document}

\maketitle

\begin{abstract}
ML-augmented algorithms utilize predictions to achieve performance
beyond their worst-case bounds.
Producing these predictions might be a costly operation
-- this motivated \citet{Im0PP22} to introduce the study of algorithms which use
predictions parsimoniously.
We design parsimonious algorithms for caching and MTS with {\em action
predictions}, proposed by \citet{AntoniadisCE0S20},
focusing on the parameters of consistency (performance with perfect
predictions) and smoothness (dependence of their performance on the prediction
error).
Our algorithm for caching is $1$-consistent, robust, and its smoothness
deteriorates with the decreasing number of available predictions.
We propose an algorithm for general MTS whose consistency and smoothness
both scale linearly with the decreasing number of predictions.
Without the restriction on the number of available predictions, both algorithms
match the earlier guarantees achieved by \citet{AntoniadisCE0S20}.
\end{abstract}

\section{Introduction%
}
\label{sec:intro}

Caching, introduced by \citet{ST85}, is a fundamental problem in online
computation important both in theory and practice.
Here, we have a fast memory (cache) which can contain up to $k$ different pages
and we receive a sequence of requests to pages in an online manner.
Whenever a page is requested, it needs to be loaded in the cache.
Therefore, if the requested page is already in the cache, it can be
accessed at no cost. Otherwise, we suffer a {\em page fault}:
we have to evict one page from the cache and load the requested page in its
place. The page to evict is to be chosen without knowledge of the future requests
and our target is to minimize the total number of page faults.

Caching is a special case of Metrical Task Systems introduced by
\citet{BorodinLS92} as a generalization of
many fundamental online problems.
In the beginning, we are given a metric space $M$ of states which can be
interpreted as actions or configurations of some system.
We start at a predefined state $x_0\in M$.
At time steps $t=1, 2, \dotsc$, we receive a cost function
$\ell_t\colon M\to \mathbb{R}^+\cup\{0, +\infty\}$
and we need to make a decision: either to stay at $x_{t-1}$ and pay a cost
$\ell_t(x_{t-1})$, or to move to another, possibly cheaper state
$x_t$ and pay $\ell_t(x_t) + d(x_{t-1},x_t)$, where the distance
$d(x_{t-1},x_t)$ represents the transition cost between states $x_{t-1}$ and
$x_t$.

The online nature of both caching and MTS forces an algorithm to make decisions
without knowledge of the future which leads to very suboptimal
results in the worst case \citep{BorodinLS92,ST85}.
A recently emerging field of learning-augmented algorithms,
introduced in seminal papers by \citet{KraskaBCDP18} and \citet{LykourisV21},
investigates approaches
to improve the performance of algorithms using predictions, possibly generated
by some ML model.
In general, no guarantee on the accuracy of these predictions is assumed.
Therefore, the performance of learning-augmented algorithms is usually evaluated using the following three parameters:

{\em Consistency.} Performance with perfect predictions, preferably close to
optimum.

{\em Robustness.} Performance with very bad predictions, preferably no worse than what is achievable by known algorithms which do not utilize predictions.

{\em Smoothness.} Algorithm's performance should deteriorate smoothly
with increasing prediction error between the consistency and robustness bound.

These three parameters express a desire to design algorithms that work very well when receiving reasonably accurate predictions most of the time and,
in the rest of the cases, still satisfy state-of-the-art worst-case guarantees.
See the survey by \citet{MV20} for more information.

Producing predictions is often a computationally intensive task,
therefore it is interesting to understand the interplay between the number
of available predictions and the achievable performance.
In their inspiring work, \citet{Im0PP22} initiated the study of 
learning-augmented algorithms which use the predictions parsimoniously.
In their work, they study caching with next-arrival-time predictions
introduced by \citet{LykourisV21}.
Their algorithm uses $O(b\log_{b+1} k)\OPT$ predictions, where $\OPT$ is
the number of page faults incurred by the offline optimal solution
and $b \in \N$ is a parameter. It 
achieves smoothness linear in the prediction error.
It satisfies tight consistency bounds:
with perfect predictions, it incurs at most
$O(\log_{b+1}k)\OPT$ page faults and no algorithm can do better.
In other words, it achieves a constant competitive ratio
with unrestricted access to predictions ($b=k$) 
and, with $b$ a small constant, its competitive ratio deteriorates to $O(\log k)$ 
which is comparable to the best competitive ratio achievable without
predictions.
One of their open questions is whether a similar result could be proved
for MTS.

In this paper, we study parsimonious algorithms for MTS
working with {\em action predictions} which were introduced by
\citet{AntoniadisCE0S20}.
Here, each prediction describes the state of an
optimal algorithm at the given time step and its error is defined as the distance
from the actual state of the optimal algorithm.
The total prediction error is the sum of errors of the individual predictions.
In the case of caching, action predictions
have a very concise representation, see Section~\ref{sec:Prelim_action}.
Unlike next-arrival-time predictions, action predictions can be used
for any MTS. Using the method of
\citet{BlumB00}, it is easy to achieve near-optimal
robustness for any MTS losing only a factor $(1+\epsilon)$ in consistency and smoothness. Therefore, we study how the reduced number
of predictions affects the consistency and smoothness parameters.
We consider the following two regimes.

{\em Bounded number of predictions:}
The algorithm can request a prediction whenever it prefers as 
far as the total number of requested predictions is bounded by
$b\OPT$, where $b$ is a parameter.
This regime is similar to \citet{Im0PP22}.

{\em Well-separated queries to the predictor:}
The queries to the predictor need to be separated by at least $a$ time
steps, for some parameter $a$.
This captures the situation when producing each prediction takes more than one
time step.

\subsection{Our results}

We evaluate the algorithm's performance using {\em competitive ratio}
which is, roughly speaking, the worst-case ratio between the cost
incurred by the algorithm and the cost of the offline optimum,
see Section~\ref{sec:Prelim} for a formal definition.
We say that an algorithm achieves consistency $\alpha$ and robustness $\beta$
if its competitive ratio is at most $\alpha$ when provided with perfect
predictions and at most $\beta$ with arbitrarily bad predictions.
For a given function $g$, we call
an algorithm $g(\eta)$-smooth
if its competitive ratio is at most $g(\eta)$ whenever provided with
predictions with the total error at most $\eta$.

Our first contribution is an algorithm for caching which receives
action predictions describing the states of the optimal offline algorithm
$\belady$ proposed by \citet{Belady66}.
High quality
such predictor based on imitation learning was already designed by
\citet{LiuHSRA20}. Its empirical evaluation within existing algorithms
designed for action predictions was performed by
\citet{ChledowskiSZ21}.

\begin{theorem}
\label{thm:FnR}
Let $f$ be an increasing convex function such that $f(0)=0$ and $f(i)\leq
2^i-1$ for each $i\geq 0$.
There is an algorithm for caching requiring
$O(f(\log k))\OPT$ predictions which achieves consistency $1$,
robustness $O(\log k)$, and smoothness
$O(f^{-1}(\eta/OPT))$, where $\eta$ denotes the total prediction error
with respect to $\belady$ and $\OPT$ is the number of page faults of $\belady$.
\end{theorem}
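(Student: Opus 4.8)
The plan is to build the algorithm $\FR$ (``Follow \& Robustify'') as an error-driven interleaving of two components. The first is a \emph{Follower}: at every page fault it evicts the page named by the current action prediction, so with perfect predictions it reproduces $\belady$ fault for fault. The second is a \emph{Robust} component — a randomized marking algorithm, $O(\log k)$-competitive and prediction-free. $\FR$ runs in episodes, aligned with the phases of the marking algorithm, and maintains a \emph{distrust counter} incremented only when the Follower can certify that a past prediction was wrong (typically: it faults on a page it evicted earlier because a prediction told it to). The counter, normalized against $\belady$'s progress so far, defines a \emph{level} $i\ge 0$ that throttles the Follower: at level $i$ the algorithm is allowed a bounded ``following budget'' of consultations before it must hand control to the Robust component for the rest of the episode (switching cache configurations at amortized cost $O(1)$ per episode). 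The budgets and thresholds are chosen so that three quantities are simultaneously controlled: (a) reaching level $i$ certifies prediction error $\Omega(f(i))$ per unit of $\belady$ cost, (b) the competitive overhead contributed while at level $i$ is $O(i)$, and (c) the consultations used while at level $i$ are $O(f(i))$ per unit of $\belady$ cost. Since the Robust component is already $O(\log k)$-competitive, the level is capped at $\log_2 k$; the hypotheses ``$f$ increasing and convex, $f(0)=0$, $f(i)\le 2^i-1$'' are exactly what let budgets of the form $f(i)-f(i-1)$ telescope and keep (a)--(c) mutually consistent.

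With the algorithm in place, \textbf{consistency} and \textbf{robustness} are short. If all predictions are correct the distrust counter never moves, the Follower runs uninterrupted and tracks $\belady$, so $\FR$ incurs $\OPT$ faults and makes $\OPT\le O(f(\log k))\,\OPT$ consultations. For robustness, $\FR$ stripped of its predictions is the $O(\log k)$-competitive marking algorithm, so wrapping the construction in the combiner of \citet{BlumB00} (mentioned in the introduction) gives robustness $(1+\epsilon)O(\log k)=O(\log k)$ at the price of a $(1+\epsilon)$ factor in the other parameters, which I absorb into constants.

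The main work is \textbf{smoothness} (and, with it, the prediction count). Here I would invoke the Follower analysis of \citet{AntoniadisCE0S20}: on any maximal stretch in which the Follower is active, its number of faults exceeds $\belady$'s on that stretch by at most $O(1)$ times the prediction error accrued there, and every such excess fault is detectable and is charged to the distrust counter. From this I would derive, for each episode $p$, a share $\eta_p$ of the total error $\eta$, the shares pairwise disjoint with $\sum_p\eta_p=O(\eta)$, such that if $\FR$ sits at level $i$ during episode $p$ then $\eta_p=\Omega(f(i)\cdot\OPT_p)$, where $\OPT_p$ is $\belady$'s cost on episode $p$; combined with (b), $\FR$'s cost on episode $p$ is $O\big(\OPT_p\cdot f^{-1}(\eta_p/\OPT_p)\big)$, capped by $O(\log k)\cdot\OPT_p$ from the Robust tail. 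Summing over the $O(\OPT)$ episodes and using that $(w,x)\mapsto w\,f^{-1}(x/w)$ is jointly concave (as $f^{-1}$ is increasing, concave, with $f^{-1}(0)=0$), Jensen's inequality yields $\sum_p\OPT_p f^{-1}(\eta_p/\OPT_p)\le \OPT\cdot f^{-1}(\eta/\OPT)$ up to constants, i.e.\ competitive ratio $O(f^{-1}(\eta/\OPT))$, capped at $O(\log k)$. The identical bookkeeping, now with (c), bounds the consultations: episode $p$ uses $O(f(i_p)\OPT_p)$ of them with $i_p\le\log_2 k$, so the total is at most $O(f(\log k))\sum_p\OPT_p=O(f(\log k))\OPT$.

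The step I expect to be the real obstacle is engineering the level thresholds and following budgets so that (a)--(c) hold at once, and pushing through the charging scheme behind smoothness: defining the per-episode error shares $\eta_p$ so that they are disjoint, sum to $O(\eta)$, and genuinely certify $\Omega(f(i_p)\OPT_p)$; reconciling the marking phases with the stretches on which the \citet{AntoniadisCE0S20} Follower bound is clean; carrying out the concavity/Jensen step despite episodes of very different sizes; and showing each Follower$\leftrightarrow$Robust handover costs only $O(1)$ amortized against its episode. A related subtlety is guaranteeing that the distrust counter is a true lower bound on the incurred error, so that correct predictions never burn the following budget and the consistency bound is not compromised.
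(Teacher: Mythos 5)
Your high-level architecture matches the paper's: a $1$-consistent $\follower$ that mimics $\belady$ and detects errors by recomputing $\belady$ on the prefix, an error-triggered switch to a marking-based $\robust$ component, and a final concavity/Jensen aggregation of per-episode bounds of the form $\OPT_p\, f^{-1}(\eta_p/\OPT_p)$. However, the core of the smoothness proof is missing, and you have correctly identified the spot where it is missing. Your condition (a) --- that reaching level $i$ certifies error $\Omega(f(i))$ per unit of $\belady$'s cost while conditions (b) and (c) cap the overhead at $O(i)$ and the consultations at $O(f(i))$ --- is asserted, not established, and the distrust-counter mechanism you describe (increment once per \emph{certified wrong prediction}) only yields error linear in the number of certifications, hence linear smoothness; this is exactly the obstruction the paper flags. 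The sublinear bound requires a different mechanism: within a single marking phase, $\robust$ partitions the arrivals into $\log k$ geometrically shrinking windows, requests $f(i)-f(i-1)$ predictions in window $i$, and charges a \emph{persistent} wrong prediction to \emph{every query made while it remains wrong}, so that a mistake surviving $j$ windows accrues error $\approx f(j)$. Making this rigorous is nontrivial because the error is measured against $\belady$'s moving cache: the paper introduces $\rank(W_i)=|E_{i+1}\cap B_{i+1}|-\Delta^B(W_i)$, proves $\eta(W_i)\ge |F\cap W_i|\cdot\rank(W_i)$, bounds the number of rank ``level crossings'' by $2\Delta^B(H)+c(H)$ (charging rank increases to clean pages and $\belady$'s faults), and then performs a rearrangement over levels to get $\sum_i\rank(W_i)\le 2Q\,f^{-1}(\eta(H)/Q)$. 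None of this machinery, nor a substitute for it, appears in your proposal; you acknowledge it as ``the real obstacle'' but do not resolve it.

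A second, smaller but genuine, error: you obtain robustness by wrapping the construction in the combiner of Blum and Burch, conceding a $(1+\epsilon)$ factor ``absorbed into constants.'' That factor cannot be absorbed into the consistency bound, which the theorem claims is exactly $1$, not $1+\epsilon$. The paper avoids the combiner entirely for caching: robustness is proved directly by bounding each $\robust$ phase by $O(\log k)\,\Delta^B(X_i)$ via the classical marking analysis (using $\rank(W_i)\le c(H)$) and bounding the $\follower$ stretches by $\belady$'s cost plus the divergence accumulated in the preceding phase (Lemma~\ref{lem:gap}); consistency $1$ then follows because $\robust$ is only ever entered after a certified error, so with $\eta=0$ the algorithm is $\belady$ verbatim.
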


In fact, the number of required predictions is slightly smaller than
what is stated in the theorem.
Table~\ref{tab:f_smooth} shows numbers of predictions
and achieved smoothness for some natural choices of $f$.
Already with $O(\sqrt{k})\OPT$ predictions, our bounds are comparable to
\citet{AntoniadisCE0S20} whose algorithm asks for
a prediction in every step, its consistency is constant and
its smoothness is logarithmic in $\eta$.
The algorithm also works with $f(i)=0$. In that case, it asks
for at most $2\OPT$ predictions and still remains $1$-consistent.
However, its smoothness is not very good.
We use sliding marking phases and a careful distribution of queries of the
predictor over the time horizon.
This allows us to avoid dealing with so called "ancient" pages
considered by \citet{Rohatgi20} and \citet{AntoniadisCE0S20}, resulting
in an algorithm with better consistency and a simpler analysis.

\begin{fullversion}
\begin{table}[t]
\caption{Smoothness vs. number of predictions.}
\label{tab:f_smooth}
\hfill
\begin{tabular}{lcc}
\toprule
$f(i)$ & \# of predictions & smoothness\\
\midrule
$2^i-1$ & $O(\sqrt{k}) \OPT$ & $O(1+\log(\frac\eta\OPT + 1))$\\[1ex]
$i^2$ & $O(\log^2 k) \OPT$ & $O(\sqrt{2\frac\eta\OPT})$\\[1ex]
$i$ & $O(\log k) \OPT$ & $O(\frac\eta\OPT)$\\[.5ex]
0& $2\OPT$ & $O(\frac{k\eta}{\OPT})$\\[.5ex]
\bottomrule
\end{tabular}
\hfill\ %
\end{table}
\end{fullversion}
\begin{shortversion}
\begin{table}[t]
\caption{Smoothness vs. number of predictions.}
\label{tab:f_smooth}
\hfill
\begin{tabular}{lcccc}
\toprule
$f(i)$ & $2^i-1$ & $i^2$ & $i$ & 0\\
\midrule
\# of predictions  & $O(\sqrt{k}) \OPT$ & $O(\log^2 k) \OPT$ & $O(\log k) \OPT$ & $2\OPT$\\
smoothness & $O(1+\log(\frac\eta\OPT + 1))$ & $O(\sqrt{2\frac\eta\OPT})$ & $O(\frac\eta\OPT)$ & $O(\frac{k\eta}{\OPT})$\\
\bottomrule
\end{tabular}
\hfill\ %
\end{table}
\end{shortversion}

We discuss tightness of our bounds in
\begin{fullversion}
Section~\ref{sec:LBs}.
\end{fullversion}
\begin{shortversion}
Section 7 in the full version of this paper (see Appendix).
\end{shortversion}
We show that with, for example, only $0.5 OPT$ available
predictions, no algorithm can be better than $O(\log k)$-competitive
-- a guarantee comparable to the best classical online algorithms without
predictions.
We also show that the number of predictions used by our algorithm is close
to optimal.

\begin{theorem}
\label{thm:FnR_LB}
Let $f$ be an increasing function. Any $f(\eta)$-smooth algorithm for caching with action predictions, i.e., an algorithm
whose competitive ratio with predictions of error $\eta$ is $f^{-1}(\eta)$ for
any $\eta>0$, has to use at least $f(\ln k)\OPT$ predictions.
\end{theorem}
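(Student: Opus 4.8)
We argue the contrapositive: from any $f^{-1}(\cdot)$-smooth algorithm $\ALG$ we build a request sequence together with a $\belady$-predictor of a suitably chosen total error $\eta$ on which $\ALG$ is forced to issue at least $f(\ln k)\OPT$ queries; equivalently, an algorithm issuing asymptotically fewer queries cannot be $f^{-1}$-smooth. Since the bound must also hold for randomized algorithms, the first step is Yao's principle: it suffices to exhibit a distribution over instances (request sequence plus predictor) on which every \emph{deterministic} algorithm that queries the predictor fewer than $f(\ln k)\OPT$ times in expectation incurs expected cost exceeding $f^{-1}(\eta/\OPT)\cdot\OPT$.

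\textbf{The instance.} It is a concatenation of $N$ independent \emph{blocks}, each contributing $\Theta(1)$ to the cost of $\belady$, so $\OPT=\Theta(N)$. A single block is a layered variant of the classical randomized lower-bound instance for caching on $k+1$ pages: the unique eviction that $\belady$ performs inside the block is governed by a hidden object $X$ drawn from a set of size $k$ and organized as a chain of $\Theta(\ln k)$ binary choices that the request sequence exposes only gradually; the predictor, queried at a time $t$ inside the block, reveals the choices exposed up to $t$, but the adversary corrupts a carefully chosen subset of the predictor's answers so that the error charged to the block is $\Theta(\eta/N)$. The crucial ingredient is a per-block trade-off lemma: an algorithm that queries the predictor $q$ times inside a block can pin down $X$ only up to a residual set of size $\Omega(k/g(q))$ for an appropriate slowly growing $g$, and on the remainder of the block — a random marking phase supported on that residual set — it then pays $\Omega\big(\log(k/g(q))\big)$ faults while $\belady$ pays $\Theta(1)$. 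The layering is calibrated so that $\log(k/g(q))=\Theta(\ln k)$ whenever $q<f(\ln k)$; this is precisely where the hypothesis $f(i)\le 2^i-1$ is used (as in Theorem~\ref{thm:FnR}), since it keeps $f(\ln k)$ polynomially smaller than $k$ and hence the residual uncertainty polynomially large.

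\textbf{Putting it together.} Assume $\ALG$ issues fewer than $\tfrac12 f(\ln k)\OPT$ queries in total. An averaging (Markov) argument shows that a constant fraction of the $N$ blocks receive fewer than $f(\ln k)$ queries, and on each such block the trade-off lemma forces $\ALG$ to pay $\Omega(\ln k)$ in expectation; hence $\ALG$ pays $\Omega(\ln k)\cdot N=\Omega(\ln k)\cdot\OPT$ in total. We now fix the per-block corruption so that the total error is the value $\eta$ with $f^{-1}(\eta/\OPT)=\tfrac12\ln k$; then $f^{-1}$-smoothness would force cost at most $\tfrac12\ln k\cdot\OPT$ — a contradiction. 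So $\ALG$ must use at least $\tfrac12 f(\ln k)\OPT$ queries, and the constant is absorbed by the usual amplification (rescaling $N$ and repeating the construction), yielding the stated $f(\ln k)\OPT$ bound.

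\textbf{Main difficulty.} The heart of the argument is the per-block trade-off lemma: a single block must degrade in hardness with the number of queries at exactly the rate dictated by $f^{-1}$, rather than collapsing after $O(1)$ queries, which is what happens for a plain marking phase, where one well-timed query already exposes $\belady$'s entire configuration. Making queries genuinely "expensive" requires (i) hiding $\belady$'s eviction behind $\Theta(\ln k)$ layers exposed only gradually, so an algorithm cannot short-cut by querying reactively at its own faults, and (ii) spreading the corruption across the predictor's answers so that an algorithm with few queries cannot cross-validate them. Tying together the residual uncertainty, the induced marking-phase cost, and the consumed error budget, and making all three scale consistently with $f$, is the technical core, and it mirrors the structure of the algorithm behind Theorem~\ref{thm:FnR}.
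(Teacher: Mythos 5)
Your proposal is a sketch, not a proof: everything rests on the ``per-block trade-off lemma'' (that $q$ queries inside a block leave residual uncertainty $\Omega(k/g(q))$ and hence cost $\Omega(\log(k/g(q)))$ while consuming only a controlled amount of error budget), and you neither construct the layered gadget precisely nor prove the lemma. You yourself flag it as the technical core, but without it the argument does not close. A second concrete problem: you invoke the hypothesis $f(i)\le 2^i-1$ to keep $f(\ln k)$ polynomially smaller than $k$, but that hypothesis belongs to Theorem~\ref{thm:FnR}, not to Theorem~\ref{thm:FnR_LB}, which assumes only that $f$ is increasing; your calibration step is therefore not available in the generality the theorem demands.

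More importantly, you missed the one observation that makes all of this machinery unnecessary. The paper's proof works on the classical lower-bound universe of $k+1$ pages and simply hands the algorithm \emph{arbitrary} predictions fixed in advance. With the predictions fixed, the algorithm is just a standard randomized online algorithm, so Proposition~\ref{prop:std_caching_LB} forces expected cost at least $\OPT\ln k$. The punchline is that on a $(k+1)$-page universe every action prediction differs from the optimal cache in at most one page, so each prediction contributes at most $1$ to $\eta$, giving $\eta\le(\text{number of queries})$ with no need to corrupt answers carefully, hide information behind layers, or control what a query reveals. Smoothness then forces $\ln k\le f^{-1}(\eta/\OPT)$, i.e.\ $\eta\ge f(\ln k)\OPT$, hence at least $f(\ln k)\OPT$ queries. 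Your worry that ``one well-timed query exposes $\belady$'s entire configuration'' evaporates once the adversary is allowed to answer queries uselessly: the error charged for a useless answer is still only $1$. I recommend abandoning the block construction and the trade-off lemma entirely.
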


For general MTS, we cannot bound the number of used predictions as
a function of $\OPT$. The reason is that any instance of MTS
can be scaled to make $\OPT$ arbitrarily small, allowing us to use only
very few predictions.
We propose an algorithm which
queries the predictor once in every $a$ time steps, making at most
$T/a$ queries in total, where $T$ denotes the length of the input sequence.

\begin{theorem}
\label{thm:mts}
There is a deterministic algorithm for any MTS which receives a prediction only
once per each $a$ time steps and its cost is at most
$O(a) \cdot (\OFF + 2\eta)$,
where $\OFF$ denotes the cost of an arbitrary offline algorithm and $\eta$ the
error of predictions with respect to this algorithm.
\end{theorem}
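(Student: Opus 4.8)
The plan is to process the requests in consecutive blocks of $a$ steps, query the predictor once at the start of each block, and inside a block run a deterministic \emph{tethered greedy} rule anchored at the latest prediction. The factor $O(a)$ will appear only because a single prediction is reused for $a$ consecutive steps.

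\emph{Algorithm.} Query at times $q_1=1,q_2=a+1,q_3=2a+1,\dots$; the query at time $q_i$ returns $p_i\in M$, a guess for the state $o_{q_i}$ of the reference offline solution $\OFF$, with error $\eta_i:=d(p_i,o_{q_i})$ and $\sum_i\eta_i=\eta$. Let block $i$ be $W_i:=\{q_i,\dots,q_{i+1}-1\}$. Throughout block $i$ we keep $p_i$ as a fixed anchor and at every step $s\in W_i$ move to
\[
x_s:=\argmin_{y\in M}\bigl(\ell_s(y)+2\,d(p_i,y)\bigr),
\]
staying put between steps. This is deterministic and uses only the current block's prediction, so it queries once per $a$ steps. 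Put $v_s:=\min_{y}\bigl(\ell_s(y)+2d(p_i,y)\bigr)=\ell_s(x_s)+2d(p_i,x_s)$; then $v_s\le\ell_s(p_i)$ and $v_s<\infty$ whenever $\ell_s(o_s)<\infty$, so the anchor is used only to bias the minimizer and the algorithm never has to physically sit on $p_i$ (which matters when $\ell_s(p_i)=+\infty$). Intuitively, the algorithm sits near $p_i$ and makes a cheap excursion whenever the current cost at $p_i$ is too high.

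\emph{Role of the coefficient $2$.} For every $s\in W_i$ it gives at once (i) bounded service $\ell_s(x_s)\le v_s$ and (ii) $d(p_i,x_s)\le\tfrac12 v_s$, so a within-block transition satisfies $d(x_{s-1},x_s)\le d(x_{s-1},p_i)+d(p_i,x_s)\le\tfrac12 v_{s-1}+\tfrac12 v_s$, while a between-block transition is routed through $p_{i-1}$ and $p_i$. With plain greedy ($x_s=\argmin_y(\ell_s(y)+d(x_{s-1},y))$, no anchor) the algorithm-to-$\OFF$ distance can double at every step, giving a $2^{a}$ blow-up inside a block; the anchor caps that distance by $\tfrac12 v_s$ and is what makes the dependence on $a$ linear.

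\emph{Charging.} Let $S_i:=\sum_{s\in W_i}\ell_s(o_s)$ and $D_i:=\sum_{s=q_i+1}^{q_{i+1}-1}d(o_{s-1},o_s)$ be $\OFF$'s service and movement inside block $i$. Since $\OFF$ moves at most $D_i$ within the block, $d(p_i,o_s)\le\eta_i+D_i$, hence $v_s\le\ell_s(o_s)+2d(p_i,o_s)\le\ell_s(o_s)+2\eta_i+2D_i$; summing over the $\le a$ steps, $\sum_{s\in W_i}v_s\le S_i+2a\eta_i+2aD_i$. By the transition bound the algorithm's total cost during block $i$ is $O\!\bigl(\sum_{s\in W_i}v_s\bigr)+d(p_{i-1},p_i)\le O(S_i+a\eta_i+aD_i)+\eta_{i-1}+d(o_{q_{i-1}},o_{q_i})+\eta_i$, where we used $d(p_{i-1},p_i)\le\eta_{i-1}+d(o_{q_{i-1}},o_{q_i})+\eta_i$. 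Summing over all blocks and using that the windows $[q_{i-1},q_i]$ partition the horizon (so $\sum_i d(o_{q_{i-1}},o_{q_i})\le\OFF$ and $\sum_i D_i\le\OFF$), together with $\sum_i S_i\le\OFF$ and the fact that each $\eta_i$ is charged $O(a)$ times, yields $\ALG\le O(a)\,(\OFF+2\eta)$. The first block is handled identically with the conventions $p_0:=x_0$, $\eta_0:=0$.

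\emph{Main obstacle.} The crux is keeping the $a$-dependence linear rather than exponential: one must fix the tether weight to an absolute constant and check that consecutive tethered minimizers are joined by transitions that telescope against the $v_s$'s, and, symmetrically, that reusing a prediction for $a$ steps multiplies only the ``offset'' terms $\eta_i$ and $D_i$ (not the genuine service $S_i$) by $a$. The rest is bookkeeping — block boundaries and never occupying an anchor with infinite cost.
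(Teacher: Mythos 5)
Your proof is correct, and it takes a genuinely different route from the paper. The paper first passes from the raw prediction $p_i$ to a ``supported'' point $q_i$ via the work function of the block, proves that the offline trajectory through the $q_i$'s costs at most $\OFF+2\eta$ (this is the key lemma), and then invokes the advice-complexity algorithm of Emek et al.\ as a black box to get $O(a)$-competitiveness against that trajectory. You instead anchor a greedy rule directly at the raw prediction and do an explicit step-by-step charging: the service cost of each tethered minimizer is dominated by $\ell_s(o_s)+2d(p_i,o_s)$, the within-block movement telescopes against the $v_s$'s because every $x_s$ lies within $\tfrac12 v_s$ of the common anchor, and the drift $d(p_i,o_s)\le\eta_i+D_i$ is paid at most $a$ times per block with $\sum_i D_i\le\OFF$, which is exactly where the linear (rather than exponential) dependence on $a$ comes from. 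Your argument is self-contained and avoids both work functions and the external $O(a)$-competitive subroutine, which is a real simplification; what the paper's detour buys is the clean additive bound $\cost(Q)\le\OFF+2\eta$ on the auxiliary offline path \emph{before} the $O(a)$ factor is applied, which is what yields the sharp $1+2\eta/\OFF$ smoothness claimed for $a=1$ (your constants multiply $\OFF$ even when $a=1$, so you do not recover that refinement, though the theorem as stated does not require it). Two cosmetic points: the specific tether weight $2$ is not essential (any fixed positive constant gives $d(p_i,x_s)=O(v_s)$ and hence the same telescoping), and your motivational claim about plain greedy doubling the distance at every step is not load-bearing and could be dropped.
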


This is a more general statement than Theorem~\ref{thm:FnR} which requires
$\OFF$ to be $\belady$.
Considering any offline optimal algorithm $\OFF$,
Theorem~\ref{thm:mts} implies a smoothness $O(a)\cdot(1+2\eta/\OPT)$
and consistency $O(a)$.
Our algorithm is based on work functions.
For $a=1$, its smoothness is $1+2\eta/\OFF$, see Section~\ref{sec:once_in_a},
which improves upon the smoothness bound of $1+4\eta/\OFF$ by
\citet{AntoniadisCE0S20}.
It is not robust on its own. However, it can be combined with any
online algorithm for the given MTS using the result of
\citet{BlumB00} achieving robustness comparable to that algorithm and losing
only a factor of $(1+\epsilon)$ in smoothness and consistency.

No algorithm receiving a prediction only once in $a$ time steps can be
$o(a)$-consistent. This follows from the work
of \citet{EmekFKR09} on advice complexity, see
\begin{fullversion}
Section~\ref{sec:LBs}
\end{fullversion}
\begin{shortversion}
Section 7 of the full version (in Appendix)
\end{shortversion}
for more details.
The same can be shown for smoothness by modifying the lower bound construction
of \citet{AntoniadisCE0S20}.

\begin{theorem}
\label{thm:mts_LB}
There is no $o(a\eta/\OPT)$-smooth algorithm for MTS with action predictions
which receives predictions only once in $a$ time steps.
\end{theorem}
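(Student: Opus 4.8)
The plan is to take the smoothness lower bound of \citet{AntoniadisCE0S20} for MTS with action predictions and ``stretch'' it, replacing each individual time step of their hard instance by a block of $a$ consecutive steps during which the algorithm is allowed exactly one query to the predictor. I would work in a uniform metric space $M$ on $m$ points (all pairwise distances $1$), with $m$ large compared to the total length, and let the request sequence consist of $N$ blocks of $a$ steps each, so $T=Na$. Inside every block I run the classical ``cruel adversary'': at each step the cost function charges $1$ to the state currently occupied by the algorithm and $0$ to every other state. Since the charge always follows the algorithm, it pays at least $1$ at each step regardless of whether it stays (it pays the charge) or moves (it pays the unit distance), hence at least $a$ per block and at least $aN$ in total; this holds \emph{no matter what the predictor returns and no matter when the algorithm chooses to query}, because a prediction ``where a good offline solution currently is'' is useless against an adversary that chases the algorithm.

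For the error I would answer every query with a fixed state $p_0$ that is not the state of the optimal offline solution at that time; in the uniform metric each such answer contributes error exactly $1$, so the total error $\eta$ equals the number of queries, which is at most $N+1$. On the other hand, over the whole instance the algorithm occupies at most $T+1$ distinct states, so for $m$ large there is a state $q^\star\neq p_0$ it never occupies; the optimal offline solution moves to $q^\star$ in the first step and stays there, paying $0$ for all cost functions and $1$ in movement, i.e.\ $\OPT=O(1)$. Consequently $\cost(\ALG)\ge aN\ge a\eta$ while $\OPT=O(1)$, so the competitive ratio on this instance is at least $aN/\OPT\ge a\eta/\OPT$, and letting $N$ grow (so $\eta$ grows with it) shows that no algorithm querying the predictor once per $a$ steps can be $o(a\eta/\OPT)$-smooth. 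If one prefers a hard instance in which $\OPT=\Theta(\eta)=\Theta(N)$ rather than $O(1)$, it suffices to prepend to each block one extra step whose cost function is $0$ at a point $r_j$ and $+\infty$ elsewhere, with the $r_j$ alternating between two fixed states so that the offline optimum is forced to relocate in a constant fraction of the blocks while the per-block online cost stays $\Omega(a)$.

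The part I expect to require the most care — and the only place the ``one prediction per $a$ steps'' restriction is genuinely used — is justifying that the single per-block prediction cannot be exploited no matter where inside the block it is requested: the within-block cost sequence is generated online as a function of the algorithm's own trajectory, so any state revealed to it is uncorrelated with the portion of the trajectory that has not yet occurred, and even if the algorithm moves onto a revealed state it keeps being charged one unit per step, so the within-block bound $a$ survives intact; this ``chasing'' property, together with the bookkeeping that keeps $q^\star$ globally unvisited (which fixes how large $m$ must be) and that measures $\eta$ correctly, is the technical core. Finally, to obtain the statement against randomized algorithms one should replace the uniform-metric cruel adversary inside each block by a hierarchically separated construction in the style of the advice-complexity lower bounds of \citet{EmekFKR09}, which still forces expected cost $\Omega(a)$ per block against $O(1)$ offline cost; that substitution is where I would expect the remaining work to lie.
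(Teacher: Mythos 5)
Your deterministic argument is sound and takes a genuinely different route from the paper. You build the hard instance from scratch: a cruel adversary on a large uniform metric charges the algorithm's current position, forcing cost exactly $1$ per step and hence $Na$ over $N$ blocks, while the query budget caps the number of predictions at roughly $N$, each contributing error at most $1$; parking the offline solution at a never-visited point gives $\OPT=O(1)$, so the ratio is $\Theta(a\eta/\OPT)$ with $\eta/\OPT\to\infty$. The bookkeeping checks out (fixing $p_0$ first, simulating the deterministic algorithm, and only then picking $q^\star$ removes the apparent circularity; an algorithm that queries fewer than $N$ times only shrinks $\eta$ and strengthens the bound). The paper instead argues by a black-box reduction to the smoothness lower bound of \citet{AntoniadisCE0S20}: it adjoins a single point $p_\infty$ of infinite cost, fills every non-query step with the prediction $p_\infty$ (error exactly $1$ each), and notes that the completed prediction sequence has error $\eta'\ge a\eta$ because each per-step error in the uniform metric is at most $1$; the $\Omega(1+\eta'/\OPT)$ bound for full predictions then yields $\Omega(1+a\eta/\OPT)$. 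Your construction is more elementary and makes the source of the factor $a$ (cost forced per step versus error accrued only per query) completely transparent; the paper's reduction is shorter and inherits the $\min\{\alpha_n,\cdot\}$ form and the randomized case for free.

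That last point is the one genuine gap. The competitive ratio in this paper is defined in expectation, and the paper's full statement of this lower bound explicitly covers randomized algorithms. Your cruel adversary is inherently adaptive: an oblivious adversary cannot place the unit charge on the algorithm's current (random) state, and against any oblivious charging pattern a randomized algorithm on the uniform metric loses only a $\Theta(\log n)$ factor per phase, so forcing expected cost $\Omega(a)$ per block against $O(1)$ offline cost with a single query per block requires a different gadget. You acknowledge this and point to \citet{EmekFKR09}, but the substitution is where the actual work lies, and it is precisely the content of the constructions the paper imports. Either restrict your claim to deterministic algorithms, or cite the randomized lower bound of \citet{AntoniadisCE0S20} as a black box --- at which point the paper's $p_\infty$-completion argument is the cleaner way to finish.
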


We can modify our algorithm for caching to ensure that the moments
when the predictions are queried are separated by at least $a$ time steps,
not losing too much of its performance.
\begin{theorem}
\label{thm:FnR_a}
There is an algorithm for caching which receives prediction at most once
in $a\leq k$ time steps and using at most $O(f(\log k)) \OPT$ predictions
in total which is $O(1)$-consistent, $O(\log k)$-robust
and $O( f^{-1}(a\eta/\OPT))$-smooth.
\end{theorem}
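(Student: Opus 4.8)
The plan is to adapt the caching algorithm of Theorem~\ref{thm:FnR}, which already spreads its queries across the time horizon through its sliding marking phases, by adding a \emph{cooldown} rule: the algorithm keeps the time stamp of its most recent query and, whenever the base schedule would call for a new query while fewer than $a$ steps have elapsed since that one, it postpones. During a postponement it still must answer page faults; it does so with the default marking rule --- evict an arbitrary unmarked page, or the page indicated by the most recent (now stale) prediction if one is available --- and it issues the deferred query at the first step at which the cooldown has expired. Since the base algorithm's queries are already distributed over time, this essentially rounds each query time up to the next admissible slot, and the result queries at most once per $a$ consecutive steps as required. Because we only ever remove or postpone queries relative to the base algorithm, the total number of queries stays at most $O(f(\log k))\OPT$.

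Two of the three guarantees are then almost immediate. Robustness: the modified algorithm, like the base one, evicts only unmarked pages, so it remains a marking algorithm and hence $O(\log k)$-competitive irrespective of the predictions. For consistency and smoothness we re-run the per-phase analysis of Theorem~\ref{thm:FnR}, tracking the effect of the cooldown. The feature specific to caching is that $\belady$ changes its cache by at most one page per request, so a prediction acted upon $s\le a$ steps after it was produced can be charged an additional \emph{staleness error} of at most $s\le a$ on top of its error with respect to the current state of $\belady$. Since $a\le k$ and every sliding marking phase spans at least $k$ requests, each phase still admits $\Omega(k/a)\ge 1$ well-separated query slots, so postponements never erase a phase's query budget entirely; inside the bounds of Theorem~\ref{thm:FnR} they only replace the per-phase error by its value inflated by a factor $O(a)$. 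Substituting this into the base smoothness bound $O(f^{-1}(\eta/\OPT))$ and into its $1$-consistency bound yields $O(f^{-1}(a\eta/\OPT))$-smoothness and $O(1)$-consistency, the constant (rather than exactly $1$) absorbing the extra faults served by the default rule during cooldowns.

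The main obstacle is the bookkeeping in the previous paragraph: one must argue that the extra faults caused by postponing a query by up to $a$ steps are bounded, \emph{phase by phase}, by $a$ times the error of the predictions in that phase --- so that the inflation is genuinely a factor of $a$ inside $f^{-1}$, and only a constant when the predictions are perfect --- rather than by an additive term proportional to $a$ times the number of queries. This requires combining the marking-phase structure (all evictions hit unmarked pages, of which there are at most $k\ge a$, and $\belady$'s eviction choices inside a phase move slowly) with the level/potential argument underlying Theorem~\ref{thm:FnR}, and checking that a stale-by-$s$ prediction steers the algorithm into the same ``level $j$'' behaviour as a fresh prediction whose error is larger by $s$. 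The remaining points --- the handling of phase boundaries, the degenerate case $f(i)=0$ where essentially no useful query budget survives, and the exact constants --- are routine modifications of the proof of Theorem~\ref{thm:FnR}.
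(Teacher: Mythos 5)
Your plan diverges from the paper's in a structural way, and the divergence hides the real difficulty. The paper does \emph{not} keep $\follower$ and postpone its queries: it observes that $\follower$ cannot survive $a$-separated queries, discards it, and serves the entire sequence by back-to-back phases of $\robust_f$ (which is only $O(1)$-consistent, matching the weakened consistency in the statement), with the query set $F$ thinned so that its arrivals are $a$-separated. The factor $a$ inside $f^{-1}$ then comes from a precise place: the error-to-rank inequality $\eta(W_i)\geq |F\cap W_i|\cdot\rank(W_i)$ (Lemma~\ref{lem:error_to_rank}) says that a window's rank must be ``witnessed'' by every query in that window, so when a window of length $|W_i|$ can host only $|W_i|/a$ queries instead of $f(i)-f(i-1)$, the same total prediction error can hide an $a$-fold larger rank. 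Making this quantitative requires redoing the convexity argument of the key proposition with a new threshold window $i^*$ defined by $|W_{i^*}|/a < f(i^*)-f(i^*-1)$ (Proposition~\ref{prop:robust_rank_vs_QH_a}); this is the genuinely new content of the theorem, not a routine modification of Theorem~\ref{thm:FnR}.

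Your proposal attributes the factor $a$ to a different mechanism --- ``staleness'' of a prediction acted upon $s\leq a$ steps late --- and this is where the gap lies. Staleness inflates the error of a single query by at most the cost $\belady$ incurs during the postponement gap, which summed over a phase is an \emph{additive} $O(\Delta^B)$ term, not a multiplicative factor $a$ on $\eta$; it neither yields nor substitutes for the loss coming from the reduced count $|F\cap W_i|$, which is the binding constraint when $f$ grows quickly (e.g.\ $f(i)=2^i-1$, where the later windows are shorter than $a\,(f(i)-f(i-1))$ and most of the scheduled queries simply cannot be placed). Two further claims need repair: (i) the modified algorithm is \emph{not} a marking algorithm (the paper states explicitly that $\FR$ is not marking --- $\follower$ freely evicts marked pages when synchronizing with the predictor), so robustness does not follow from ``evicts only unmarked pages''; and (ii) $O(1)$-consistency of a cooldown-modified $\follower$ is asserted but not argued --- a naive count gives up to $a-1$ deferred faults per query, i.e.\ $\Theta(a)\OPT$ extra faults, and each wrong eviction during a cooldown triggers a switch to $\robust$ at line~\ref{alg:foll_error}, so one must separately bound the cost of these spurious $\robust$ phases. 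The paper sidesteps all of this by abandoning $\follower$ altogether.
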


In Section~\ref{sec:Experiments}, we provide empirical results suggesting that our
algorithm's performance can be comparable to the performance of algorithms
imposing no limitations on their use of predictions.
Our algorithm may therefore be useful especially with
heavy-weight predictors like \citep{LiuHSRA20}.

In Section~\ref{sec:FitF_oracle},
we provide an algorithm for an alternative prediction
setup which we call FitF oracle: each prediction says which of the pages in the current
algorithms cache will be requested furthest in the future.

\subsection{Related work}
The most related work is by \citet{Im0PP22}, who studied caching
with next arrival time predictions.
A smaller number of predictions affects the consistency of their algorithm: with
$b(\log k/\log b)\OPT$ predictions, they achieve consistency
$O(\log k/\log b)$ and they show that this is tight. They also show
that their algorithm achieves linear smoothness.
In contrast, our algorithm is $1$-consistent when receiving at least $\OPT$
predictions.
This demonstrates that action predictions,
although not containing more bits, seem to contain
useful information about the input instance in a more condensed form.
See \citep{AntoniadisCE0S20} for comparison and connections between these
prediction setups.
\citet{DrygalaNS23} study ski rental and bahncard problems with
predictions of a fixed cost.

There are several other papers on caching with predictions, including
\citep{LykourisV21,Rohatgi20,Wei20,EmekFKR09,AntoniadisCE0S20,AntoniadisBEFHLPS22} which
design algorithms asking for a prediction at each time step.
Consistency parameters achieved by these algorithms are constants greater than
$1$. Note that
those using black-box methods to achieve robustness are
$(1+\epsilon)$-consistent (e.g. \citep{Wei20}).
We can explicitly compare our smoothness bounds to \citet{AntoniadisCE0S20}
who use the same kind of predictions: their smoothness is
$O(1+\log(\frac{\eta}{\OPT}+1))$ with unlimited use of predictions while
our algorithm achieves the same smoothness bound with $O(\sqrt{k})\OPT$
predictions.
We compare the smoothness of the other algorithms experimentally in
Section~\ref{sec:Experiments}.
\citet{AntoniadisBEFHLPS22} study a prediction setup where each prediction
is only a single bit, however their algorithms need to receive
it in every time step.
\citet{GuptaPSS22} study several problems including caching in a setting
where each prediction is correct with
a constant probability.

\citet{AntoniadisCE0S20} proposed a $1$-consistent and
$(1+4\eta/\OPT)$-smooth algorithm for MTS with action predictions
which can be robustified by loosing factor $(1+\epsilon)$ in consistency and
smoothness.
Getting smoothness bounds sublinear in $\eta/\OPT$ for specific MTS problems
other than caching remains a challenging open problem even with unlimited
number of predictions and this holds even for weighted caching.
Specific results on weighted caching are by \citet{JiangPS22}
who studied it in a setup requiring very verbose predictions
and by \citet{BansalCKPV22} whose bounds depend on the
number of weight classes.
There is also a consistency/robustness trade-off by \citet{LindermayrMS22}
for $k$-server.

Since the seminal papers by \citet{KraskaBCDP18} and \citet{LykourisV21}
which initiated the study of learning-augmented algorithms,
many computational problems were considered.
\begin{shortversion}
There are papers on
ski rental~\citep{PurohitSK18},
secretary problem~\citep{DuttingLLV21},
online TSP \citep{BerardiniLMMSS22},
energy efficient scheduling \citep{BamasMRS20},
flow-time scheduling \citep{AzarLT21,AzarLT22},
and online page migration \citep{IndykMMR22a}.
Further related works can be found in References and are
discussed in the full version of this paper (see Appendix).
\end{shortversion}
\nocite{BFKLM17,DKP09,HKK10,BKKKM17}
\nocite{PurohitSK18,AntoniadisCEPS21,LindermayrM22,ImKQP21,Zeynali0HW21,BoyarFL22,DuttingLLV21,AntoniadisGKK20,EberleLMNS22,BamasMRS20,AntoniadisGS22,DinitzILMV21,chen22v,ErgunFSWZ22,SO22,PZ22,website}

\begin{fullversion}
There are papers on
ski rental~\citep{PurohitSK18,AntoniadisCEPS21},
secretary and matching problems~\citep{DuttingLLV21,AntoniadisGKK20},
online-knapsack~\citep{ImKQP21,Zeynali0HW21,BoyarFL22},
graph exploration~\citep{EberleLMNS22},
online TSP \citep{BerardiniLMMSS22},
energy efficient scheduling \citep{BamasMRS20},
flow-time scheduling \citep{AzarLT21,AzarLT22},
restricted assignment \citep{LattanziLMV20},
non-clairvoyant scheduling~\cite{PurohitSK18,LindermayrM22},
and online page migration \citep{IndykMMR22a}.
In offline setting, there is a work of
Dinitz et al.~\citep{DinitzILMV21} on matching,
Chen et al.~\citep{chen22v} on graph algorithms,
Polak and Zub~\citep{PZ22} on flows,
Sakaue and Oki~\citep{SO22} on discrete optimization problems,
and Ergun et al.~\citep{ErgunFSWZ22} on clustering.
We also refer to~\citep{website} to an updated list of results
in the area.

There are numerous works on advice complexity of online problems, where
algorithms are given certain number of bits of information about the future
which are guaranteed to be correct. This is different from our setting, where
we receive predictions of unknown quality.
We refer to the survey by \citet{BFKLM17}, work of
\citet{DKP09} on caching, \citet{EmekFKR09} on MTS, and further papers by
\citet{HKK10,BKKKM17}.

There are already works on predictors for caching.
\citet{Hawkey16} proposed a binary classifier called Hawkey which
predicts which pages will be kept in cache by $\belady$, providing
us with action predictions.
Their result was later improved by \citet{Glider19} who designed
a model called Glider for the same classification problem.
There is a very precise model
by \citet{LiuHSRA20} whose main output can be interpreted as an action
prediction although it has a second prediction head which produces
next arrival predictions. This model is large and relatively slow
and served as a motivation for this work.
\citet{ChledowskiSZ21} evaluated the performance of existing ML-augmented
algorithms with this predictor.
\end{fullversion}

\section{Preliminaries}
\label{sec:Prelim}

Consider an algorithm $\ALG$ for MTS which produces a solution
$x_0, x_1, \dotsc, x_T$
for an instance $I$ consisting of cost functions $\ell_1, \dotsc, \ell_T$.
We denote $\cost(\ALG(I))= \sum_{t=1}^T (\ell_t(x_t) + d(x_{t-1},x_t))$.
We say that $\ALG$ is $r$-competitive with respect to an
offline algorithm $\OFF$ if there is an absolute constant $\alpha \in \R$ such
that $\E[\cost(\ALG(I))] \leq r\cdot \cost(\OFF(I)) + \alpha$
for any instance $I$. If $\ALG$ is $r$-competitive with respect to an optimal
offline algorithm, we say that $\ALG$ is $r$-competitive and call $r$ the
competitive ratio of $\ALG$.
In the classical setting (without predictions), the best achievable competitive
ratios are $\Theta(\log k)$ for caching \citep{FiatKLMSY91} and of order
$poly\log n$ for MTS \citep{BBM06,BCLL19},
where
$n$ is the number of points in the underlying metric space $M$.
We refer to \citep{BEY98} for a textbook treatment.

\subsection{Action predictions for MTS}
\label{sec:Prelim_action}
\citet{AntoniadisCE0S20} proposed a prediction setup which they call
{\em action predictions}, where the predictions tell us what a good algorithm
would do. More precisely, at each time $t$, the algorithm receives a prediction
$p_t$ of a state where some offline algorithm $\OFF$ moves to at time $t$.
The error of prediction $p_t$ is then
$\eta_t = d(p_t, o_t)$, where $o_t$ is the real state of $\OFF$ at time $t$.
The total prediction error is defined as $\eta = \sum_{t=1}^T \eta_t$.

Considering the case of caching, the state corresponds to a cache content, and the prediction
error is the number of pages present in the cache of $\OFF$ and absent
from the predicted cache content.
A whole cache content may seem like a huge piece of information, but
action predictions for caching can be implemented in a very succinct way.
\citet{AntoniadisCE0S20} explain how to represent them with only $O(\log k)$ bits
per time step when they are received at each time step.
Our algorithm asks, in each query, a specific number of indices of pages which
are present in its cache but absent from the predicted cache.
When we talk about a bound on the number of provided predictions,
this bound applies both to the number of such queries as well as
to the total number of indices reported by the predictor during the running time
of the algorithm.
There are predictors which can generate predictions of a similar kind by
\citet{Hawkey16,Glider19,LiuHSRA20}. See \citep{AntoniadisCE0S20} for
a detailed treatment of this prediction setup and a comparison to other setups
for caching.

\subsection{Caching: Belady's algorithm, Marking, and Lazy algorithms}
The classical optimal offline algorithm for caching proposed
by \citet{Belady66} is denoted $\belady$ in this paper.
At each page fault, it evicts a page which is going to be requested furthest
in the future. In the case of a tie, i.e., if there are several pages in the cache
which will not be requested anymore, it chooses one of them arbitrarily.
Our caching algorithm assumes that the predictor is trying
to simulate $\belady$.
The following useful property allows us to detect
errors in the predictions quickly.
It was recently used by \citet{EliasKMM24}.
\begin{observation}
\label{lem:FitF_monot}
Consider request sequence $r_1, \dotsc, r_T$. For any $t\leq T$, the cost
incurred by $\belady$ for $r_1, \dotsc, r_T$ until time $t$ is the same
as the cost of $\belady$ with input $r_1, \dotsc, r_t$.
\end{observation}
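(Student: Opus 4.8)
The plan is to argue by induction on $t$ that $\belady$, when run on the longer sequence $r_1,\dotsc,r_T$, makes exactly the same eviction decisions on the prefix $r_1,\dotsc,r_t$ as $\belady$ run on the truncated sequence $r_1,\dotsc,r_t$, and in particular incurs the same cost up to time $t$. The only subtlety is that ``$\belady$'' is not a single deterministic algorithm: ties (pages never requested again) are broken arbitrarily, so one must be slightly careful about what ``the same'' means. I would phrase the claim as: for any tie-breaking rule used by the $\belady$ run on the truncated input, there is a tie-breaking rule for the $\belady$ run on the full input producing the identical cache trajectory on steps $1,\dotsc,t$ (and vice versa), hence the costs on the prefix coincide.

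First I would set up the induction. At time step $s\le t$, suppose both runs have identical cache contents just before serving $r_s$ (true at $s=1$, when both caches are empty or equal to the fixed initial content). If $r_s$ is already in the cache, neither run faults and the caches remain identical. If $r_s$ causes a fault, each run evicts a page maximizing the time until its next request. Here is the key point: for a page $p$ currently in the cache, its ``next request time after $s$'' is the same quantity whether we look only at $r_1,\dotsc,r_t$ or at $r_1,\dotsc,r_T$, \emph{as long as that next request occurs at or before time $t$}. The pages whose next request is strictly after $t$ (including those never requested again in the truncated sequence) are exactly the pages that the truncated $\belady$ treats as ``$+\infty$'' ties. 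For the full-sequence run, among these same pages the next-request times may differ, but they are all still $>t\ge s$, so any of them is a legitimate choice for the truncated run under some tie-break; conversely the page evicted by the full run, if its next request is $>t$, is among the truncated run's tie set. So we can match the eviction, preserving identical caches after step $s$, completing the induction step.

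The main obstacle — really the only thing to get right — is this tie-handling: making sure that a page the full-sequence $\belady$ is ``forced'' to evict (because within $r_1,\dotsc,r_T$ it has the farthest next request) is genuinely available to the truncated $\belady$ to evict as well. This holds precisely because having the farthest next request in $r_1,\dotsc,r_T$ among the cached pages implies either that next request is $>t$ (so it is in the truncated tie set) or all cached pages are requested by time $t$ and the argument reduces to a plain equality of next-request times. Once the trajectory-matching is established, the cost statement is immediate: the cost of $\belady$ on $r_1,\dotsc,r_T$ up to time $t$ is $\sum_{s=1}^t \ell_s(x_s)$-type terms (here just the number of faults among steps $1,\dotsc,t$), which by the matching equals the total cost of $\belady$ on input $r_1,\dotsc,r_t$. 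I would close by remarking that this is exactly the property that lets the caching algorithm detect prediction errors online without waiting for the sequence to end: $\belady$'s behavior on a prefix is intrinsic to that prefix.
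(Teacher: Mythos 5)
Your argument is correct and is essentially the paper's proof spelled out as an induction: the paper likewise observes that $\belady$ on $r_1,\dotsc,r_T$, restricted to the first $t$ steps, coincides with a run of $\belady$ on $r_1,\dotsc,r_t$ that breaks its $+\infty$ ties according to the arrival order in $r_{t+1},\dotsc,r_T$, with cost-equality then following because every tie-breaking of $\belady$ is offline-optimal on the prefix. One small caveat: the parenthetical ``(and vice versa)'' direction of your trajectory-matching claim is false in general (the full-sequence run may be forced to evict one specific page among those the truncated run sees as tied), but your induction only ever uses the true direction, so nothing breaks.
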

To see this, it is enough to note that the solution produced by
$\belady$ with input $r_1, \dotsc, r_T$ agrees
until time $t$ with the solution produced by
$\belady$ on $r_1, \dotsc, r_t$ which breaks ties based
on the arrival times in $r_{t+1}, \dotsc, r_T$.

We use properties of {\em marking} algorithms in this work.
Such algorithms split the input sequence into phases, i.e., maximal
subsequences where at most $k$ distinct pages are requested.
Usually, the first phase starts in the beginning and the next phase
follows just after the end of the previous one. However, we will consider
phases starting at arbitrary moments.
Let $O$ be the cache content of an algorithm in the beginning of the phase.
Whenever a page is requested for the first time during the phase,
we call this moment an {\em arrival}
and we {\em mark} the page. At the end of the
phase, the set $M$ of marked pages will have size $k$:
some of them belong to $O$ and are called {\em old}
while those in $C=M\setminus O$ are called {\em clean}.
Exactly $|C|$ pages from $O$ remain unmarked until the end of the phase.

Marking algorithms is a class of algorithms which never evict a marked page
and all of them have cache content $M$ at the end of the phase.
$\belady$ is not marking and our algorithm is not marking either,
although it uses ideas from marking to achieve desired robustness and
smoothness properties. At the end of each phase, we can bound the
difference between the cache content of some algorithm and marking.

\begin{observation}
\label{obs:marking_diverg}
Let $c$ be the cost incurred by some algorithm during a marking phase.
Then, $c\geq |M\setminus S|$, where $S$ is the cache content
of the algorithm at the end of the phase
and $M$ is the set of $k$ pages requested during the phase.
\end{observation}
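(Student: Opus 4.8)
The plan is a simple charging argument: I will exhibit an injection from $M\setminus S$ into the set of page faults the algorithm makes during the phase, which immediately yields $c\ge|M\setminus S|$. The only delicate point is a normalization step, so I would get it out of the way first.

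As is standard for caching, and in line with the discussion of lazy algorithms in this section, we may assume without loss of generality that the algorithm is \emph{lazy}: it evicts a page only in order to make room for a requested page that is not currently cached, one eviction per page fault. Lazifying an algorithm never increases its cost, so it suffices to prove the bound in this case. Consequently, since the cache content at the start of the phase is $O$ with $|O|=k$, the cache stays full throughout the phase and the number of evictions occurring during the phase equals $c$. This is exactly where laziness is needed: a genuinely non-lazy algorithm could drop a page ``for free'', so the number of evictions could exceed $c$, and in fact the inequality can fail without this assumption; I expect spelling out this normalization to be the only place in the proof that requires a careful word.

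Now fix $p\in M\setminus S$. Since $p\in M$, it is requested during the phase; let $t_p$ be the time of its last request in the phase, so immediately after $t_p$ the page $p$ lies in the cache. Because $p\notin S$, the page $p$ must be evicted at some page fault within the phase occurring after $t_p$; let $e_p$ be the last such eviction. The assignment $p\mapsto e_p$ is well defined on $M\setminus S$, and it is injective because the eviction event $e_p$ removes precisely the page $p$, so two distinct pages cannot be assigned the same event. Hence $|M\setminus S|$ is at most the number of evictions in the phase, which is $c$. (Equivalently, one can argue more statically: every page of $S\setminus M$ lies in $O$, since it is never requested — hence never loaded — during the phase; as $|O|=|S|=|M|=k$ this forces $|M\setminus O|\ge|S\setminus M|=|M\setminus S|$, and each page of $M\setminus O$ incurs a fault on its first request in the phase.)
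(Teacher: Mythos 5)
Your proof is correct and takes essentially the same route as the paper: each page of $M\setminus S$ must be in the cache when it is requested during the phase and hence must be evicted afterwards, and charging each such page to its (last) eviction gives $c\geq|M\setminus S|$. The only difference is that the paper states this in two sentences and leaves the lazy/full-cache normalization implicit (it only ever invokes the observation for lazy algorithms such as $\belady$ and the lazified predictor), whereas you make that normalization, and the alternative static counting via $S\setminus M\subseteq O$, explicit.
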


This is because
each page in $p\in M$ has to be present in algorithm's cache
when requested during the phase. If $p \notin S$, then
the algorithm must have evicted it during the phase incurring cost $1$.

\begin{observation}
\label{obs:marking_FitF_err}
If a page $p$ is evicted by $\belady$ at time $t$, then
$p$ is not going to be requested in the marking phase containing $t$ anymore.
\end{observation}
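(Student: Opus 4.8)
The plan is to argue by contradiction. Suppose $\belady$ evicts $p$ at time $t$ and $p$ is also requested again somewhere within the marking phase containing $t$. Let $t'>t$ be the \emph{first} request to $p$ after time $t$ (if there is no such request the statement is vacuous), and assume for contradiction that $t'$ lies in the same marking phase as $t$. Since an eviction of an actual page occurs at time $t$, the cache of $\belady$ is full just before that step; write its contents as $\{p,q_1,\dots,q_{k-1}\}$. The page $r_t$ requested at time $t$ caused a fault, so $r_t\notin\{p,q_1,\dots,q_{k-1}\}$.

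The crucial step is to exploit the defining rule of $\belady$: among the cached pages it evicts the one whose next request is furthest in the future. Because exactly one page is requested per time step, any two distinct pages that are requested again have distinct next-request times, so the "furthest" page is the unique maximizer whenever it is actually requested again. As $\belady$ evicts $p$ and $p$ \emph{is} requested again (at $t'$), each $q_i$ must therefore have its next request after time $t$ strictly earlier than $t'$; in particular every $q_i$ is requested at some time in the open interval $(t,t')$. Now count the distinct pages requested on $[t,t']$: the page $r_t$ at time $t$, the $k-1$ pages $q_1,\dots,q_{k-1}$ at times in $(t,t')$, and $p$ at time $t'$. These are pairwise distinct ($r_t$ differs from all of $p,q_1,\dots,q_{k-1}$ as noted, $p$ differs from the $q_i$'s, and the $q_i$'s are distinct as cache contents), so $[t,t']$ contains requests to at least $k+1$ distinct pages.

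Finally, the marking phase containing $t$ has, by definition, at most $k$ distinct requested pages, hence it cannot contain an interval with $k+1$ distinct requests; since $t$ lies in that phase, $t'$ cannot. Thus the first — and therefore every — request to $p$ after time $t$ occurs strictly after the end of the phase, which is precisely the claim.

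The main obstacle is making the "unique furthest page" argument airtight against the tie-breaking rule of $\belady$: one must observe that the arbitrary tie-break is invoked only among pages never requested again, so it cannot have been what selected our $p$ (which is requested at $t'$), and that a full complement of $k-1$ companions $q_i$ really exists because an eviction occurs only on a fault into a full cache. Both facts are immediate from the statement of $\belady$, but they are the points where a careless version of the argument would silently fail.
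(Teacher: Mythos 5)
Your proof is correct and follows essentially the same route as the paper's: the $k-1$ cache companions of $p$ must each be re-requested before $p$'s next request (else $\belady$ would not have evicted $p$), so together with $r_t$ and $p$ that interval would contain $k+1$ distinct pages, which no marking phase can accommodate. Your additional care about the tie-breaking rule and the cache being full merely makes explicit what the paper's two-sentence argument leaves implicit.
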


If $p$ is evicted by $\belady$ at time $t$, then
the currently requested page $r_t$ and $k-1$ pages from the
cache are $k$ distinct pages that are requested before the moment when $p$ is
requested next time. The current phase then needs to end before that moment.

We say that an algorithm is {\em lazy} if it evicts only one page at a time
and only at a page
fault. $\belady$ is lazy while our algorithm, as described, may
not be. However, any algorithm can be made lazy without increasing its cost.
See \citep{BEY98} for more information about caching.

\begin{observation}
\label{obs:lazy_diverg}
The difference in the cache content of two lazy algorithms can increase
only if both of them have a page fault. In that case, it can increase
by at most 1.
\end{observation}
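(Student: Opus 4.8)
The plan is a one-step case analysis. Fix a time $t$ with request $r_t$, and let $S_A,S_B$ denote the cache contents of the two lazy algorithms just before the request and $S_A',S_B'$ just after; assume as usual that both caches are full, so that $|S_A\setminus S_B|=|S_B\setminus S_A|$ and this common value $\delta$ is the well-defined ``difference in cache content''. Because each algorithm is lazy, its content changes at step $t$ only if it faults on $r_t$, and then it removes exactly one page $p$ and inserts $r_t$, i.e. the new content is $S\setminus\{p\}\cup\{r_t\}$. I would split into the three cases determined by which of the two algorithms faults.

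If neither faults, $S_A'=S_A$ and $S_B'=S_B$, so $\delta$ is unchanged. If exactly one faults, say $A$ faults and $B$ does not, then $r_t\in S_B=S_B'$, so $r_t\notin S_A'\setminus S_B'$; consequently $S_A'\setminus S_B'=(S_A\setminus\{p_A\})\setminus S_B\subseteq S_A\setminus S_B$, and $\delta$ does not increase. The case where only $B$ faults is symmetric.

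The remaining case is that both fault, so $r_t\notin S_A$ and $r_t\notin S_B$; let $p_A,p_B$ be the evicted pages. Since $r_t$ is inserted into both caches, $r_t\notin S_A'\setminus S_B'$. Any $x\ne r_t$ with $x\in S_A'\setminus S_B'$ satisfies $x\in S_A$ (and $x\ne p_A$) and $x\notin S_B'$, and $x\notin S_B'$ means $x\notin S_B$ or $x=p_B$. Hence $S_A'\setminus S_B'\subseteq(S_A\setminus S_B)\cup\{p_B\}$. As $p_B\in S_B$, we have $p_B\notin S_A\setminus S_B$, so this can add at most one new element, giving $|S_A'\setminus S_B'|\le\delta+1$. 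Combining the three cases proves the statement: $\delta$ can grow only when both algorithms fault, and then by at most $1$.

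There is no real obstacle; the only point needing a little care is the bookkeeping in the both-fault case, namely checking that the freshly requested page $r_t$ never contributes to $S_A'\setminus S_B'$ and that the unique candidate new element is precisely the page $B$ evicted (and symmetrically $p_A$ for $S_B'\setminus S_A'$). I would present the three cases explicitly and, if it has not already been fixed, state up front the standing assumption that both caches are full so that the notion of ``difference'' is symmetric and unambiguous.
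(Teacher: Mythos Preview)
Your proposal is correct. The paper actually states this as an observation without providing any proof, treating it as self-evident; your three-case analysis (neither faults, exactly one faults, both fault) is the natural and standard way to verify it, and your bookkeeping in the both-fault case is accurate.
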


\begin{fullversion}
\subsection{MTS and advice complexity}
Advice complexity studies the performance of algorithms depending
on the number of bits of precise information about the instance available
in advance. In the case of MTS, \citet{EmekFKR09} study the situation when
algorithm receives $\frac1a \log n$ bits of information about the
state of some optimal offline algorithm, being able to identify its true state
once in $a$ time steps.
They formulate the following proposition for $\OFF$ being an optimal algorithm,
but the proof does not use its optimality and it can be any algorithm
located at $q_i$ at time $ia$.

\begin{proposition}[\citet{EmekFKR09}]
\label{prop:emek}
There is an algorithm which, with knowledge of state $q_i$ of algorithm $\OFF$
at time $ia$ for $i=1, \dotsc, T/a$,
is $O(a)$-competitive w.r.t. $\OFF$.
\end{proposition}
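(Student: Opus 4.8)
The plan is to chop the time horizon into the $T/a$ blocks $B_i:=\{(i-1)a+1,\dots,ia\}$ delimited by the query points, and to run inside each block a memoryless strategy that is \emph{anchored} to the most recently revealed state of $\OFF$. Concretely, set $q_0:=x_0$ and, during block $B_i$ — throughout which the online algorithm already knows $q_{i-1}$ but learns $q_i$ only at the last step $ia$ — let $\ALG$ move, at every time $t\in\{(i-1)a+1,\dots,ia-1\}$, to a minimizer $x_t\in\argmin_{x\in M}\bigl(\ell_t(x)+d(q_{i-1},x)\bigr)$, and at the final step $t=ia$ move to $q_i$. This is implementable with the stated information, since $q_{i-1}$ is revealed at time $(i-1)a$ and $q_i$ at time $ia$. (If some $\ell_t$ is identically $+\infty$ then $\OFF$ has infinite cost and there is nothing to prove, so I may assume all these minimizers exist with finite value, witnessed e.g. by $x=o_t$.)

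The core of the argument is a per-block comparison. Writing $\OFF_i:=\sum_{t\in B_i}\bigl(\ell_t(o_t)+d(o_{t-1},o_t)\bigr)$ for $\OFF$'s cost in $B_i$, we have $o_{(i-1)a}=q_{i-1}$, $o_{ia}=q_i$, and $\sum_i\OFF_i=\cost(\OFF(I))$. Two elementary prefix-sum/triangle-inequality facts give, for every $t\in B_i$: (i) $\OFF_i\ge \ell_t(o_t)+d(q_{i-1},o_t)\ge u_t$, where $u_t:=\min_x(\ell_t(x)+d(q_{i-1},x))$; and (ii) $\OFF_i\ge d(q_{i-1},q_i)$ and $\OFF_i\ge \ell_{ia}(q_i)$. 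From (i), the chosen $x_t$ for $t\le ia-1$ satisfies $\ell_t(x_t)\le u_t\le\OFF_i$ and $d(q_{i-1},x_t)\le u_t\le\OFF_i$, and (ii) gives the same two bounds at $t=ia$. Hence each single-step cost in $B_i$ is at most $\ell_t(x_t)+d(x_{t-1},q_{i-1})+d(q_{i-1},x_t)\le 3\OFF_i$, using that $x_{t-1}$ is either $q_{i-1}$ itself at the block boundary or an anchored minimizer, so $d(x_{t-1},q_{i-1})\le\OFF_i$. Summing the $a$ steps yields $\cost_{B_i}(\ALG)\le 3a\,\OFF_i$, and summing over the $T/a$ blocks gives $\cost(\ALG(I))\le 3a\sum_i\OFF_i=3a\,\cost(\OFF(I))$, with no additive term and exactly one query per block. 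Nothing above uses optimality of $\OFF$, so the bound holds for any reference algorithm located at $q_i$ at time $ia$, as required.

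The step I expect to be the only real obstacle is the choice of in-block strategy: simply staying at $q_{i-1}$ fails badly (a cost function equal to $+\infty$ at $q_{i-1}$ but cheap nearby makes $\ALG$'s cost unbounded while $\OFF_i$ stays small), and running a generic online MTS algorithm inside the block gives no useful bound either; the remedy is the anchored move minimizing $\ell_t(x)+d(q_{i-1},x)$, after which (i)–(ii) make the per-step accounting routine. The only other point to handle carefully is block-boundary bookkeeping — charging the snap to $q_i$ and the first move of the next block to the correct $\OFF_i$ without double counting — which the block decomposition already arranges, since the move into a checkpoint state is counted in the block that ends there and the move out in the block that begins there.
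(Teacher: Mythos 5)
Your argument is correct, and it proves the proposition by a genuinely different route than the one the paper imports from \citet{EmekFKR09}. The paper's Algorithm~\ref{alg:emek} uses a guess-and-double scheme: within each block it restricts the search to balls $B(q_i,2^j)$ of geometrically growing radius around the anchor and minimizes $d(x,x_{t-1})+\ell_t(x)$ there, doubling the budget whenever the accumulated cost exceeds $2^j$. You instead anchor every step directly to the last revealed checkpoint, choosing $x_t\in\argmin_x\bigl(\ell_t(x)+d(q_{i-1},x)\bigr)$, and the two observations $u_t\le\OFF_i$ (witnessed by $x=o_t$ together with the triangle inequality along $\OFF$'s path from $o_{(i-1)a}=q_{i-1}$) and $d(q_{i-1},x_t)\le u_t$ immediately confine both the service cost and the position of $x_t$ to within $\OFF_i$ of the anchor, giving a clean per-step charge of $3\OFF_i$ and hence $\cost(\ALG)\le 3a\cdot\cost(\OFF)$ with an explicit constant and no additive term. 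Your accounting is sound: claim (i) follows from nonnegativity of the costs plus the triangle inequality over the prefix of the block, claim (ii) handles the snap to $q_i$, and the snap itself is what keeps the invariant $x_{(i-1)a}=q_{i-1}$ so that no cross-block charging is needed. What the doubling scheme of Emek et al.\ buys in exchange for its extra machinery is robustness in their original advice-complexity setting (where the anchor may be conveyed with limited precision and the algorithm must hedge over radii); in the exact-checkpoint setting of this proposition, your anchored greedy is simpler and composes with Lemma~\ref{lem:ftsp} exactly as the paper requires, since nothing in your argument uses optimality of $\OFF$.
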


In our context,
we can say that the algorithm from the preceding proposition is
$O(a)$-consistent
if $q_1, \dotsc, q_{T/a}$ are states of an optimal solution.
However, it is not smooth
because it may not be possible to relate the cost of $\OFF$ to the prediction
error with respect to $\OPT$.
\end{fullversion}

\section{Bounded number of predictions}
\label{sec:FnR}
In this section, we prove Theorem~\ref{thm:FnR}.
We propose an algorithm called $\FR$ which consists of two parts: $\follower$ and $\robust$.
It starts with $\follower$ which is $1$-consistent, but lacks in smoothness
and robustness.
At each page fault, $\follower$ recomputes $\belady$
for the part of the sequence seen so far and checks whether it also has
a page fault. If yes, it copies the behavior of the predictor
(Line~\ref{alg:foll_following}). Otherwise,
it must have received an incorrect prediction before. Therefore, it
switches to $\robust$ (Line~\ref{alg:foll_error})
which is no more $1$-consistent, but achieves
desired smoothness and robustness. $\robust$ runs for one marking phase
and then returns back to $\follower$.
At such moment, the predictor's and the algorithm's cache can be very different
and $\follower$ may need to lazily synchronize with the predictor
(Line~\ref{alg:foll_lazy-sync}).

\begin{algorithm2e}
\caption{$\follower$}
\label{alg:follower}
$P:=$ initial cache content\tcp*{Prediction for time 0}
\ForEach{pagefault}{
    \lIf{$r_t\notin P$ and $\belady$ has a pagefault}{
	\nllabel{alg:foll_following}
        query new prediction $P$ and evict any $p\in C\setminus P$
    } \lElseIf{$r_t\in P$}{
	\nllabel{alg:foll_lazy-sync}
        evict arbitrary $p\notin P$
    } \lElse{
	\nllabel{alg:foll_error}
        Switch to $\robust$ (Algorithm~\ref{alg:robust})
    }
}
\end{algorithm2e}

Algorithm $\robust$ runs during a single marking phase
starting at the same moment,
splitting it into windows as follows (assuming $k$ is a power of 2):
The first window $W_1$ starts at the beginning of the phase and lasts $k/2$
arrivals, i.e., it ends just before the arrival number $k/2+1$.
$W_i$ follows the $W_{i-1}$ and its length is half of the
remaining arrivals in the phase. The last window $W_{\log k +1} = \{k\}$
lasts until the end of the phase.
$\robust$ comes with an increasing convex function $f$ such that $f(0)=0$
and $f(i) \leq 2^j-1$.
Faster growing $f$ does not further improve our smoothness bounds.
Function $f$ determines that we should request $f(i)-f(i-1)$
predictions in window $i$.
If the window is too small, we ask for prediction at each time step.
$\robust$ starts with the cache content of a marking algorithm whose new phase
would start at the same moment (Line~\ref{alg:robust_M}).
In the case of a page fault, it evicts an unmarked page chosen uniformly at random.
At arrivals belonging to the sets $S$ and $F$,
it performs synchronization with the predictor
and queries the predictor's state respectively.
The synchronization is always performed with respect to the most recent prediction
$P$ which, in the case of lazy (or lazified) predictors, implicitly
incorporates information
from the previous predictions.

\begin{algorithm2e}
\caption{$\robust_f$ (one phase)}
\label{alg:robust}
\ShowLn\nllabel{alg:robust_M}
Load $k$ distinct most recently requested pages\;
$S:= \{k-2^j+1\mid j=\log k ,\dotsc, 0\}$\;
$W_i := [k-2^{\log k -i+1} + 1, k-2^{\log k -i}]$ for $i=1, \dotsc, \log k$
	and $W_{\log k + 1} = \{k\}$\;
Choose $F \subseteq \{1, \dotsc, k\}$ such that
        $|F \cap W_i| = \min\{f(i)-f(i-1), |W_i|\}$ for each $i$\;
\ForEach{pagefault during the phase}{
    \lIf{it is arrival belonging to $F$}{
        ask for new prediction $P$}
    \lIf{it is arrival belonging to $S$}{
	\nllabel{alg:robust_sync}
        synchronize with $P$}
    \lIf{requested page is still not in cache}{
        evict random unmarked page}
}
Load all pages marked during the phase\;
\nllabel{alg:robust_M_2}
Switch to $\follower$ (Algorithm~\ref{alg:follower})\;
\end{algorithm2e}

Synchronization with $P$ (Line~\ref{alg:robust_sync}) works as follows.
All pages previously evicted by random evictions
return to the cache
and the same number of pages not present in $P$ is evicted.
We denote $E_i = E_i^- \cup E_i^+$
the set of pages evicted at the beginning of $W_i$,
where pages in $E_i^-$ are requested during $W_i$ while
those in $E_i^+$ are not.
Note that algorithm's and predictor's cache may not become the same
after the synchronization.
Since the algorithm starts with pages in $M$ and loads only clean pages,
we have the following observation.

\begin{observation}
Let $C_i$, $|C_i|=c_i$ be the set of
clean pages arriving before the start of $W_i$.
Then, $E_i \subseteq M\cup C_i$ and $|E_i|= |M\cup C_i| - k = c_i$.
\end{observation}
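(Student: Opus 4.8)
The plan is to pin down the small family of pages that can possibly sit in $\robust$'s cache during the windows preceding $W_i$, and then to read off both assertions from the behaviour of a single synchronization step.

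First I would argue that, at the start of $W_i$ (just before the synchronization that opens it), $\robust$'s cache is contained in $M\cup C_i$. Indeed, $\robust$ starts the phase with exactly the $k$ pages of $M$ in its cache (Line~\ref{alg:robust_M}), and afterwards the only pages it ever brings in are pages that have been requested during the phase --- an old page (which lies in $M$), a clean page at its arrival, or a page returned during a synchronization (which was already in $\robust$'s cache earlier). Since the only pages requested strictly before the start of $W_i$ are old pages and the clean pages of $C_i$, every cache configuration of $\robust$ before the start of $W_i$, and in particular the one just before that synchronization, is a subset of $M\cup C_i$. A clean page is by definition not among the $k$ pages of the starting cache $M$, so $M$ and $C_i$ are disjoint and $|M\cup C_i| = k + c_i$.

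Next I would handle the synchronization opening $W_i$. It first returns previously evicted pages to the cache; each such page was in $\robust$'s cache at an earlier time, so the cache stays inside $M\cup C_i$, and hence the set $E_i$ of pages it then evicts satisfies $E_i\subseteq M\cup C_i$ --- the first claim. For the size, note that just before this synchronization the cache has size $k$ and sits inside the $(k+c_i)$-element set $M\cup C_i$, so exactly $c_i$ pages of $M\cup C_i$ are missing from it; the synchronization brings them all back, restoring the cache to the whole of $M\cup C_i$, and then evicts the same number of pages, all chosen outside the current prediction $P$. (This is feasible: since $|P|=k$, at least $|M\cup C_i| - |P| = c_i$ pages of $M\cup C_i$ lie outside $P$.) Therefore $|E_i| = c_i = |M\cup C_i| - k$.

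The step requiring the most care is this last count, i.e. verifying that the pages of $M\cup C_i$ missing from the cache immediately before the synchronization are precisely the ones the synchronization reinstates, so that the cache is genuinely restored to all of $M\cup C_i$ before being trimmed back to $k$ pages. This is where one has to unpack the informal description of the synchronization --- all previously evicted pages return and an equal number of pages not in $P$ are evicted --- and combine it with the fact, established in the first step, that $\robust$ never loads a page outside $M\cup C_i$ during the phase.
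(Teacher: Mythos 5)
Your proof is correct and is essentially the paper's own (one-line) justification spelled out in full: the cache always stays inside $M\cup C_i$ because $\robust$ starts with $M$ and only ever loads requested pages, and since the cache has size $k$ while $|M\cup C_i|=k+c_i$, exactly $c_i$ pages of $M\cup C_i$ are missing at the synchronization, which is what $E_i$ records. One harmless discrepancy: the paper's synchronization reloads only the \emph{randomly} evicted pages (so $E_{i-1}^+$ stays out and is carried into $E_i$, cf.\ Observation~\ref{obs:lazy_predictor}) rather than restoring the cache to all of $M\cup C_i$ as you describe, but your counting argument gives $|E_i|=c_i$ and $E_i\subseteq M\cup C_i$ under either reading.
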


We assume that the predictor is lazy and does not load pages that
are not requested. Therefore, no page from $E_i^+$ will be loaded during $W_i$
by the predictor and the same holds for $\robust$, implying the following.

\begin{observation}
\label{obs:lazy_predictor}
For every $i=1, \dotsc, \log k$,
we have
$E_i^+ \subseteq E_{i+1}$ and therefore $E_i\setminus E_{i+1}\subseteq E_i^-$.
\end{observation}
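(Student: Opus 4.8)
The plan is to prove the inclusion $E_i^+\subseteq E_{i+1}$ by chasing an arbitrary page $p\in E_i^+$, and then to deduce $E_i\setminus E_{i+1}\subseteq E_i^-$ immediately from this together with the decomposition $E_i=E_i^-\cup E_i^+$. So fix $i\in\{1,\dots,\log k\}$ and $p\in E_i^+$. By definition of $E_i$, the page $p$ is held by $\robust$ just before the synchronization at the start of $W_i$, it is absent from the prediction $P$ used there, $\robust$ evicts it at that synchronization, and (since $p\in E_i^+$) it is not requested at any arrival of $W_i$. The first observation is that, because the predictor is lazy, it never loads a page that has not just been requested; hence a page missing from $P$ at the start of $W_i$ and not requested during $W_i$ remains missing from $P$ for the whole window, in particular at the moment of the next synchronization (the start of $W_{i+1}$). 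Thus at that moment $p\notin P$.

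The second half of the argument is to show that $p$ is again one of the pages $\robust$ holds, hence eligible for eviction, at the synchronization that starts $W_{i+1}$. The idea: between two consecutive synchronizations the only pages $\robust$ loses are those removed by random evictions at page faults, and a synchronization begins by returning exactly those pages; moreover $p$ is an ``old'' page of the current marking phase (old/clean status is fixed at the phase boundary) that is not requested during $W_i$, so $\robust$ does not reload it as a requested page and it is still owed to $\robust$'s cache going into the $W_{i+1}$ synchronization. Since $|E_{i+1}|=c_{i+1}\ge c_i=|E_i|\ge|E_i^+|$, there is room, and we may fix the tie-breaking in the synchronization rule so that, among the held pages absent from $P$, it evicts first those evicted at the previous synchronization; then all of $E_i^+$ is placed into $E_{i+1}$, giving $E_i^+\subseteq E_{i+1}$.

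Finally, the ``therefore'' is one line: since $E_i=E_i^-\cup E_i^+$ and $E_i^+\subseteq E_{i+1}$, any page of $E_i$ that is not in $E_{i+1}$ must already lie in $E_i^-$, i.e.\ $E_i\setminus E_{i+1}\subseteq E_i^-$.

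I expect the main obstacle to be the middle step: making fully rigorous that $p$ is still held by $\robust$ right before the $W_{i+1}$ synchronization and that the synchronization's eviction rule can be taken to include it. This requires carefully unwinding what ``synchronize with $P$'' does to the cache — which pages return, which are evicted, and the bookkeeping behind $|E_j|=c_j$ — whereas the laziness of the predictor (and of $\robust$) used in the first step is the genuinely essential ingredient and is the easy part.
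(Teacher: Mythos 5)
Your proof is correct in its essentials and uses the same (and only) ingredient the paper uses: laziness of both the predictor and $\robust$, which guarantees that a page of $E_i^+$ is neither loaded into $P$ nor reloaded by $\robust$ during $W_i$, because it is never requested there. The paper's entire justification is the one sentence preceding the observation.

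The one place you go astray is the middle step you yourself flag as the main obstacle. No tie-breaking is needed, and introducing one actually weakens the statement: it would then hold only for a particular implementation of the synchronization rather than for the algorithm as defined. The synchronization at the start of $W_{i+1}$ is defined to return to the cache only the pages removed by \emph{random} evictions during $W_i$, and to evict an equal number of pages absent from $P$; pages that were evicted at the start of $W_i$ because they were missing from $P$ and that are never requested during $W_i$ — exactly the pages of $E_i^+$ — are simply never touched. They stay out of $\robust$'s cache throughout $W_i$ and through the synchronization, so they lie in $E_{i+1}$ by definition. Relatedly, your second paragraph is internally inconsistent about whether $p$ is held by or missing from the cache at the synchronization moment (you first set out to show it is held and "eligible for eviction", then say it is "still owed" to the cache); it is missing, and that is precisely why the conclusion is immediate. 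The final deduction $E_i\setminus E_{i+1}\subseteq E_i^-$ from $E_i=E_i^-\cup E_i^+$ is fine.
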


Synchronization with the marking cache performed by $\robust$
is to ensure that the difference between
the cache of the algorithm and $\belady$ can be bounded by costs incurred
by $\belady$ locally using Observation~\ref{obs:marking_diverg} instead
of diverging over time solely due to incorrect predictions.

\textbf{Implementation suggestions.}
Algorithms are described as to simplify the analysis.
Synchronization in $\robust$ (line~\ref{alg:robust_sync})
should be done lazily as to make use of the most recent prediction.
At arrivals of clean pages, one may evict a page not present in predictor's
cache instead of a random unmarked page;
one can also ask for a fresh prediction (at most $2\OPT$ additional
queries).
The second synchronization with the marking cache in $\robust$ (line~\ref{alg:robust_M_2}) can be omitted.
With $f(i)=0$, one can query the predictor only at clean
arrivals, using at most $2\OPT$ predictions in total.
We recommend a lazy implementation.
Since $\robust$ is not $1$-consistent, one may also switch from $\follower$
only once $\follower$'s cost is at least a constant (e.g. 2 or 3)
times higher than the cost of $\belady$.

We denote
$H_i$ the $i$th phase of $\robust_f$ and $H_i^-$ a hypothetical marking phase
which ends just before $H_i$ starts. Note that $H_i^-$ might overlap with
$H_{i-1}$.
But $H_1, H_2, \dotsc$ are disjoint and we denote $G_{i,i+1}$ the time interval
between the end of $H_i$ and the beginning of $H_{i+1}$.
$c(H_i)$ is the number of clean pages during phase $H_i$.
For a given time period $X$, we define
$\Delta^A(X)$, $\Delta^B(X)$, and $\Delta^P(X)$ the costs incurred by
$\FR$, $\belady$, and the predictor
respectively during $X$ and $\eta(X)$ the error of predictions
received during $X$.

Here is the main lemma about the performance of $\robust$.
\begin{fullversion}
Its proof
is deferred to Section~\ref{sec:robust}.
\end{fullversion}
\begin{shortversion}
Overview of its proof
is deferred to Section~\ref{sec:robust}.
\end{shortversion}
\begin{lemma}
\label{lem:robust}
Denote $X_i = H_{i-1}\cup H_i^- \cup H_i$.
During the phase $H_i$, $\robust_f$ receives at most $f(\log k)+1$ predictions
and we have
\begin{align}
\label{eq:robust_f}
\E[\Delta^A(H_i)] \leq
        O(1)f^{-1}\bigg(\frac{\eta(H_i)}{\Delta^B(X_i)}\bigg) \Delta^B(X_i).
\end{align}
At the same time, we also have
\begin{align}
\label{eq:robust_robustness}
\E[\Delta^A(H_i)] &\leq O(\log k) \Delta^B(X_i)\text{ and}\\
\label{eq:robust_regret}
\Delta^A(H_i) &\leq O(k) + O(k)\eta(H_i).
\end{align}
\end{lemma}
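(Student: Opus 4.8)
The plan is to analyze a single phase $H_i$ of $\robust_f$ window by window, tracking how the difference between the algorithm's cache and $\belady$'s cache evolves, and then combine the window-level bounds. First I would count predictions: by construction $|F\cap W_j| = \min\{f(j)-f(j-1),|W_j|\}$, so summing over $j=1,\dotsc,\log k$ telescopes to at most $f(\log k)$ queries inside the windows, plus possibly one more at the final singleton window $W_{\log k+1}=\{k\}$, giving the claimed $f(\log k)+1$. Next I would set up the cost accounting. Since $\robust_f$ begins the phase by loading the $k$ most recently requested pages (Line~\ref{alg:robust_M}), it starts as a marking algorithm relative to the hypothetical phase $H_i^-$, so its startup cost is charged to $c(H_i^-)\le O(1)\Delta^B(H_{i-1}\cup H_i^-)$ via Observation~\ref{obs:marking_diverg} and the fact that $\belady$ pays for clean pages. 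During the phase, the only costs are: (a) random evictions on faults of clean pages, at most $c_i=c(H_i)$ of them in expectation contributing $O(1)$ each to the cache-divergence from $\belady$, which is again bounded by $\Delta^B(H_i)$ using Observation~\ref{obs:marking_diverg}; and (b) the re-loads triggered by each synchronization on the set $S$.

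The heart of the argument is bounding the synchronization cost in terms of the prediction error. At the start of window $W_j$ the algorithm evicts $|E_j|=c_j$ pages chosen to be absent from the current prediction $P$; the ones that hurt are $E_j^-$, those actually requested during $W_j$. Using Observation~\ref{obs:lazy_predictor} we have $E_j\setminus E_{j+1}\subseteq E_j^-$, so a page only leaves the "evicted" bookkeeping when it is genuinely requested, which means either the predictor was right to drop it (so $\belady$ also faulted on it, chargeable to $\Delta^B$) or the prediction was in error at that moment (chargeable to $\eta(H_i)$). Concretely, for each window $W_j$ I would show $\E[\text{cost in }W_j] \le O(1)\bigl(|E_j^-| \text{ absorbed into later windows}\bigr)$ and that the pages forced out by erroneous predictions over the whole phase number at most $O(\eta(H_i))$, because each unit of prediction error can be "reused" across at most a constant number of the $O(\log k)$ windows before it is resolved — this telescoping across windows, using that window $j$ asks for $f(j)-f(j-1)$ fresh predictions, is what produces the $f^{-1}$ in \eqref{eq:robust_f}. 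The inverse function appears because with $m=f(j)$ predictions spread over the first $j$ windows, the error per window is roughly $\eta/m$, and the worst window index $j$ for which the algorithm still cannot keep up satisfies $f(j)\approx \eta/\Delta^B$, i.e. $j\approx f^{-1}(\eta/\Delta^B)$; each of those bad windows costs $O(1)\Delta^B(X_i)$ in expectation (random eviction from $\ge k/2^{\,?}$ unmarked candidates), summing to $O(1)f^{-1}(\eta(H_i)/\Delta^B(X_i))\,\Delta^B(X_i)$.

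Finally, the two fallback bounds are easier. For \eqref{eq:robust_robustness}, ignore the predictions entirely: $\robust_f$ is then essentially the classical randomized marking algorithm on phase $H_i$ (it never evicts a marked page and each fault is a uniform random unmarked eviction), so its expected cost is $O(\log k)\,c(H_i) \le O(\log k)\,\Delta^B(X_i)$ by the standard marking analysis together with Observation~\ref{obs:marking_diverg}. For \eqref{eq:robust_regret}, note each window forces at most $O(k)$ re-loads at synchronization and there are $O(\log k)=O(k)$ windows, plus the dependence on $\eta(H_i)$ coming from synchronizing to an erroneous $P$; the crude bound $O(k)+O(k)\eta(H_i)$ follows by charging every reloaded page either to the $O(k)$ budget per phase or to a unit of error. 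The main obstacle I anticipate is making the "each unit of prediction error is charged to at most $O(1)$ windows" claim precise: one must carefully define, for a page $p\in E_j^+$ that later enters some $E_{j'}^-$, exactly which prediction query and which window absorbs the corresponding error, and verify via Observations~\ref{obs:lazy_predictor} and~\ref{obs:marking_FitF_err} that no double-counting occurs and that pages dropped correctly by $\belady$ are separated cleanly from pages dropped due to error. Getting the constants and the exact role of the convexity of $f$ right in that telescoping is the delicate part; everything else is bookkeeping on top of the stated observations.
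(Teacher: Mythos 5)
Your prediction count, the bounds \eqref{eq:robust_robustness} and \eqref{eq:robust_regret}, and the treatment of the start-of-phase synchronization are all essentially right and match the paper's bookkeeping. The gap is in the core smoothness step, and it sits exactly where you flag uncertainty. Your proposed charging scheme --- ``each unit of prediction error can be reused across at most a constant number of windows before it is resolved'' --- is not the mechanism that yields $f^{-1}$, and taken literally it cannot be: a single page wrongly absent from $P$ can stay wrongly evicted for \emph{all} remaining windows of the phase, so a per-error $O(1)$-window charge would only give smoothness linear in $\eta$. The actual engine is the reverse direction: a \emph{persistent} error is re-counted by every subsequent query, so an error surviving through windows $i+1,\dotsc,i+a$ contributes at least $\sum_{j}|F\cap W_{i+j}|\approx f(i+a)-f(i)\ge f(a)$ to $\eta$ (by convexity of $f$), and inverting this is what produces $f^{-1}$. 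Making that precise requires the paper's notion of window rank, $\rank(W_i)=|E_{i+1}\cap B_{i+1}|-\Delta^B(W_i)$, which lower-bounds the error of every query issued during $W_i$ \emph{after subtracting $\belady$'s own movement} --- the subtlety you do not address is that the same prediction can be wrong at one time step and correct at another because $\belady$ moves, and without the $-\Delta^B(W_i)$ correction the per-query error lower bound is false.

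Two further ingredients are missing from your plan and are not mere bookkeeping. First, the number of distinct ``error episodes'' (indices where the rank crosses a new level $m$) must be bounded by $2\Delta^B(H)+c(H)$; the paper proves this via an injective map $\beta\colon E_i\to E_{i+1}$ charging each page of $E_{i+1}\cap B_{i+1}$ either to $E_i\cap B_i$, to a page $\belady$ loads during $W_i$, or to a newly arrived clean page. Second, the final step is a convexity rearrangement over the level sets $L_m=\{i:\rank(W_i)\ge m\}$ decomposed into intervals, where the hypothesis $f(i)\le 2^i-1$ is used precisely to handle windows too short to host $f(i)-f(i-1)$ queries; your plan invokes convexity only informally and never uses the growth condition on $f$. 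Finally, the expected-cost-per-window computation (the Marker-style ratio $c_{i+1}/|U_a|$ restricted to $U_a\setminus E_i^+$) needs to be carried out, not just asserted; it is where $|E_{i-1}^-|$ and $|E_i^-|$ enter the cost bound that the rank argument then controls.
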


\begin{fullversion}
\subsection{Analysis of Follower}
\end{fullversion}

\begin{shortversion}
The following lemma is useful to analyze the cost incurred during
the $\follower$ part of the algorithm.
The proof of Theorem~\ref{thm:FnR} then combines it with
with Lemma~\ref{lem:robust} and
can be found in the full version of the paper.
\end{shortversion}

\begin{lemma}
\label{lem:gap}
Consider the gap $G_{i,i+1}$ between phases $H_i$ and $H_{i+1}$ of $\robust_f$.
We have
\[ \Delta^A(G_{i,i+1})
        \leq \Delta^B(G_{i,i+1}) + \Delta^B(H_i).\]
\end{lemma}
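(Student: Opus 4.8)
The plan is to analyze what happens during the gap $G_{i,i+1}$, which is the interval between the end of phase $H_i$ of $\robust_f$ and the start of the next $\robust_f$-phase $H_{i+1}$. During this entire gap the algorithm is running $\follower$ (Algorithm~\ref{alg:follower}). The key structural fact is that $\follower$ only incurs cost on page faults, and at each page fault it falls into exactly one of three cases. In the "following" case (Line~\ref{alg:foll_following}), $\belady$ also has a page fault, so this cost is charged directly to $\Delta^B(G_{i,i+1})$. The "switch-to-$\robust$" case (Line~\ref{alg:foll_error}) also requires $\belady$ to have a page fault on $r_t$ — wait, actually no: that case is entered when $r_t\notin P$ but $\belady$ does \emph{not} have a page fault, meaning a previous prediction was wrong. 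I would note that this case can occur at most once in the gap (it immediately ends the $\follower$ run and starts $H_{i+1}$), and its cost is $O(1)$, absorbable. The real work is bounding the "lazy synchronization" case (Line~\ref{alg:foll_lazy-sync}), where $r_t\in P$ but $r_t$ is not in the algorithm's cache — this is cost the algorithm pays that $\belady$ need not pay.

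First I would establish the bound on the lazy-synchronization cost. The crucial observation is that $\follower$ enters the gap $G_{i,i+1}$ right after $H_i$ ends, at which point $\robust_f$ has just loaded all $k$ pages marked during $H_i$ (Line~\ref{alg:robust_M_2}), i.e., the algorithm's cache equals the marking set $M$ of the phase $H_i$, while the predictor's cache $P$ is something that may differ. The point is that the \emph{initial} discrepancy between the algorithm's cache and $P$ at the start of the gap is exactly $|M\setminus P|$, and by Observation~\ref{obs:marking_diverg} this is at most the cost $\Delta^B(H_i)$ incurred by $\belady$ during $H_i$. During the gap itself, whenever a lazy-sync happens, the algorithm is playing catch-up with $P$: each such page fault on $r_t\in P$ reduces the set of pages that are "in $P$ but not in the algorithm's cache" — except that $P$ itself changes when new predictions are queried in the "following" case, and each change to $P$ can only increase that discrepancy by the number of newly evicted pages, which are clean pages paid for by both the algorithm and (since $\belady$ faults there too) by $\belady$. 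So the total lazy-sync cost in the gap is at most the initial discrepancy $|M\setminus P| \le \Delta^B(H_i)$ plus the number of "following"-case faults, which is $\le \Delta^B(G_{i,i+1})$. Combining with the following-case cost itself ($\le \Delta^B(G_{i,i+1})$) and the single switch-case cost, one gets $\Delta^A(G_{i,i+1}) \le \Delta^B(G_{i,i+1}) + \Delta^B(H_i) + O(1)$, and the additive constant is fine for competitive analysis (or can be folded in).

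The main obstacle I anticipate is carefully tracking the discrepancy $|P \setminus (\text{ALG cache})|$ as a potential across the gap, since $P$ is updated dynamically: I need to argue that every increase in this potential (caused by a new prediction query in Line~\ref{alg:foll_following}, where the algorithm evicts $p\in C\setminus P$) is matched by a page fault that $\belady$ also pays, so these increases are already accounted for in $\Delta^B(G_{i,i+1})$, while every lazy-sync page fault (Line~\ref{alg:foll_lazy-sync}) decreases the potential by exactly one without being charged to $\belady$. Since the potential starts at $|M\setminus P_{\text{init}}| \le \Delta^B(H_i)$ (Observation~\ref{obs:marking_diverg}) and stays nonnegative, the number of lazy-sync faults is at most $\Delta^B(H_i)$ plus the total increase, i.e., at most $\Delta^B(H_i) + \Delta^B(G_{i,i+1})$; one must double-check that a fault in the "following" case and the simultaneous eviction are handled consistently (the requested page $r_t$ enters the cache, so it can only help the potential), and that the lazy implementation note doesn't change the argument. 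Up to the harmless additive $O(1)$ from the single switch event, this yields exactly the claimed inequality $\Delta^A(G_{i,i+1}) \le \Delta^B(G_{i,i+1}) + \Delta^B(H_i)$.
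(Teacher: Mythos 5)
Your decomposition of the gap into the three branches of $\follower$ and the idea of tracking the discrepancy $|P\setminus A|$ as a non-increasing potential is essentially the paper's argument, but there is a genuine gap at the one step that carries all the weight: the claim that the initial discrepancy $|M\setminus P|$ (equivalently $|P\setminus M|$, since both sets have size $k$) is at most $\Delta^B(H_i)$ ``by Observation~\ref{obs:marking_diverg}.'' That observation, applied to the predictor, only gives $|M\setminus P|\leq \Delta^P(H_i)$, the cost of the \emph{predictor} during $H_i$ --- not the cost of $\belady$. When the predictions are bad, $\Delta^P(H_i)$ (and hence $|P\setminus M|$) can be far larger than $\Delta^B(H_i)$: for instance, the predictor may enter $H_i$ carrying many stale pages that are never requested in $H_i$, or may churn through many erroneous evictions during $H_i$, while $\belady$ pays only for the few clean pages. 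Since Lemma~\ref{lem:gap} must hold for arbitrary predictions with no error term on the right-hand side, bounding \emph{all} lazy-synchronization faults by $|P\setminus M|$ cannot work.

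The repair is to charge only the lazy-sync faults at which $\belady$ does \emph{not} also fault against $H_i$; every fault (on either line) at which $\belady$ does fault is absorbed into $\Delta^B(G_{i,i+1})$, which incidentally also removes the factor $2$ that your final tally would otherwise produce when you add the ``following''-case cost and the potential increases separately. A free lazy-sync fault on a page $p$ requires $p\in P$, $p\notin A$, and $p$ in $\belady$'s cache; since $A=M$ at the start of the gap and $P\setminus A$ never gains elements (your own potential argument), such faults are confined to $(P\setminus M)\cap B$, where $B$ is $\belady$'s cache at the end of $H_i$. Now $(P\setminus M)\cap B\subseteq B\setminus M$ and $|B\setminus M|=|M\setminus B|\leq\Delta^B(H_i)$ by Observation~\ref{obs:marking_diverg} applied to $\belady$ itself --- this is where the observation legitimately enters. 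Finally, the $O(1)$ you reserve for the switch-to-$\robust$ event is unnecessary (and would make the stated inequality false as written): the request triggering the switch belongs to $H_{i+1}$, whose cost is accounted for in the analysis of $\robust$, not in $G_{i,i+1}$.
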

\begin{fullversion}
\begin{proof}
There are $\Delta^B(G_{i,i+1})$ page faults served at
line~\ref{alg:foll_following} because $\belady$ also has those page faults.
To bound the cost incurred on line~\ref{alg:foll_lazy-sync}, we
denote $P, B, M$ the cache contents of the predictor, $\belady$, and $\robust$
respectively at the end of the phase $H_i$.
The synchronization with $P$ costs at most
$|(P\setminus M)\cap B|$ if we omit costs incurred by $\belady$ which
were already accounted for above.
However, $(P\setminus M)\cap B = (B\setminus M)\cap P \subseteq B\setminus M$
and $|B\setminus M|\leq \Delta^B(H_i)$ by Observation~\ref{obs:marking_diverg}.
\end{proof}
\end{fullversion}
\begin{fullversion}
\begin{proof}[Proof of Theorem~\ref{thm:FnR}]
The cost of $\follower$ until the start of $H_1$ is the
same as the cost of $\belady$.
Therefore, 
by lemmas \ref{lem:gap} and \ref{lem:robust} \eqref{eq:robust_f},
the cost of $\FR$, in expectation, is at most
\begin{align*}
OPT &+ \sum_i \Delta^B(H_i)
        + \sum_i O(1)\Delta^B(X_i)
        f^{-1}\bigg(\frac{\eta(H_i)}{\Delta^B(X_i)}\bigg),
\end{align*}
where the sums are over all phases of Robust and $X_i = H_{i-1}\cup H_i^- \cup H_i$.
Since phases $H_i$ are disjoint and the same holds for $H_i^-$,
this expression is at most
$ \OPT \cdot O(f^{-1}(\eta/\OPT))$
by concavity of $f^{-1}$, implying the smoothness bound for $\FR$.

If we use bound \eqref{eq:robust_robustness} instead of \eqref{eq:robust_f},
we get $O(\log k)\OPT$ -- the robustness bound.
Since there must be at least one error during the execution of $\follower$
to trigger each execution of
$\robust$, \eqref{eq:robust_regret} implies that the cost of $\FR$ is at
most $\OPT + \eta O(k)$. With $\eta=0$, this implies $1$-consistency of $\FR$.
$\follower$ queries the predictor only at a page fault by $\OPT$
and the prediction consists of a single page evicted by the predictor.
$\robust$ may ask for up to $f(\log k)+1$ predictions in each phase,
each of them consisting of indices of at most $c(H_i)$ pages from $\FR$ cache
not present in the predictor's cache. This gives both
$O(f(\log k))\OPT$ queries to the predictor as well as $O(f(\log k))\OPT$ predicted
indices in total.
\end{proof}
\end{fullversion}

\begin{fullversion}
Note that $\robust_f$ can rarely use full $O(f(\log k))$ predictions,
because the last windows are not long enough. More precise calculation
of numbers of predictions can be found in
Appendix~\ref{app:pred_numbers}.
\end{fullversion}

\subsection{Analysis of $\robust_f$}
\label{sec:robust}

\begin{shortversion}
The full version of this section
and the proof of Lemma~\ref{lem:robust} can be found in
Appendix (Section 3.2), here we include a short overview.
Charging a page fault on a page evicted due to predictor's advice to a single
incorrect action prediction can only give us smoothness linear in
the prediction error. This is in contrast with next-arrival predictions where
algorithms can be analyzed by estimating lengths of eviction chains caused
by each incorrect prediction, as proposed by \citet{LykourisV21}.
To achieve sublinear smoothness, we need to charge each such page fault
to a long interval of incorrect predictions.
This is the most challenging part of our analysis because $\belady$ also moves
and the same prediction incorrect at one time step may be correct at another
time step. We estimate the error of predictions received during each window
by introducing window rank which bounds the prediction error
from below accounting for the movements of $\belady$.
\end{shortversion}

\begin{fullversion}
First, we relate the number of clean pages in a robust phase to the costs
incurred by $\belady$.

\begin{observation}
\label{lem:clean_pages}
Consider a phase $H$ denoting $C(H)$ the set of clean pages
arriving during $H$. We have
\[ c(H) := |C(H)| \leq \Delta^B(H^-) + \Delta^B(H). \]
\end{observation}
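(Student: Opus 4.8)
The plan is a short, direct counting argument comparing the number of distinct pages requested during $H^-\cup H$ with the cache size $k$; I would not need to invoke anything beyond the fact that $\belady$ is lazy and the way $\robust_f$ initializes a phase.

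First I would pin down what the clean pages of $H$ are in terms of $H^-$. The cache content $O$ with which $\robust_f$ starts phase $H$ (line~\ref{alg:robust_M}) is by construction the set of the $k$ most recently requested pages, which is exactly the set of $k$ distinct pages requested during the hypothetical marking phase $H^-$. Hence a clean page of $H$ --- a page requested during $H$ but not lying in $O$ --- is precisely a page requested during $H$ that was not requested during $H^-$. Writing $M$ for the set of $k$ pages requested during $H$, this gives $C(H)=M\setminus O$, so $c(H)=|M\setminus O|$, and the set $O\cup M$ of all pages requested during $H^-\cup H$ has cardinality $|O|+|M\setminus O|=k+c(H)$.

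Next I would use the cache bound. $\belady$ holds at most $k$ pages at the moment $H^-$ begins, so at least $(k+c(H))-k=c(H)$ of the pages in $O\cup M$ are absent from $\belady$'s cache at that time. Because $\belady$ is lazy it never loads a page before that page is requested, so each of these $c(H)$ pages is still missing from $\belady$'s cache when it is first requested during $H^-\cup H$ and therefore causes a page fault. These faults occur on distinct pages, hence $\belady$ incurs at least $c(H)$ faults over $H^-\cup H$, i.e.\ $\Delta^B(H^-)+\Delta^B(H)=\Delta^B(H^-\cup H)\ge c(H)$, which is the claim.

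I expect the only point needing a word of care to be the boundary case where fewer than $k$ distinct pages have been requested before $H$ starts, so that $H^-$ is a truncated phase and $O$ has fewer than $k$ pages; but then $\belady$'s cache is likewise not full and the same count goes through with at least as much slack. I would also remark that although $H^-$ may overlap the previous robust phase $H_{i-1}$, this is irrelevant: the argument uses only that $H^-$ is a contiguous block of requests ending immediately before $H$ whose set of distinct pages is exactly $O$.
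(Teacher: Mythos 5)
Your proof is correct and is essentially the paper's argument: the paper's proof is exactly the one-line counting observation that $k+c(H)$ distinct pages are requested during $H^-\cup H$, so any algorithm with a cache of size $k$ (in particular $\belady$) must incur at least $c(H)$ page faults there. Your additional care about identifying $C(H)=M\setminus O$ and the truncated-phase boundary case is a harmless elaboration of the same route.
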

\begin{proof}
There are $k+c(H)$ pages requested during $H^- \cup H$. Therefore,
any algorithm, and $\belady$ in particular, has to pay cost
$\Delta^B(H^-\cup H) \geq c(H)$.
\end{proof}

\begin{lemma}
\label{lem:M_sync}
Consider phase $H_i$.
Cost incurred by $\robust$ for synchronization with marking in
Line~\ref{alg:robust_M} is at most $\Delta^B(H_{i-1}\cup H_i^-)$.
\end{lemma}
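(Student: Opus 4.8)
\emph{Setup and reduction.} Write $M^{-}$ for the set of the $k$ pages requested during the hypothetical marking phase $H_i^{-}$. Since $H_i^{-}$ is a complete marking phase (it has exactly $k$ distinct pages) ending just before $H_i$ begins, $M^{-}$ is exactly the set of the $k$ distinct most recently requested pages at that moment, i.e.\ the cache that $\robust$ loads on Line~\ref{alg:robust_M}. Let $S$ be the cache of $\FR$ and $B$ the cache of $\belady$ at the start of $H_i$ (equivalently, at the end of $H_i^{-}$). All three sets have size $k$, so the cost paid on Line~\ref{alg:robust_M} equals $|M^{-}\setminus S|$, and by the triangle inequality
\[
  |M^{-}\setminus S|\;\le\;|M^{-}\setminus B|+|B\setminus S|.
\]
The first term is immediate: applying Observation~\ref{obs:marking_diverg} to the algorithm $\belady$ on the marking phase $H_i^{-}$ (requested set $M^{-}$, ending cache $B$) gives $|M^{-}\setminus B|\le\Delta^{B}(H_i^{-})$.

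\emph{The second term and the main obstacle.} To bound $|B\setminus S|$ I would track the size of the $\FR$--$\belady$ cache difference from the end of the previous robust phase $H_{i-1}$ onward. At that point Line~\ref{alg:robust_M_2} has just set $\FR$'s cache to the marked set $M_{i-1}$ of $H_{i-1}$, so Observation~\ref{obs:marking_diverg} applied to $\belady$ on $H_{i-1}$ shows that $\FR$ and $\belady$ then differ in at most $\Delta^{B}(H_{i-1})$ pages. During the intervening gap $G_{i-1,i}$ the algorithm runs $\follower$, which is lazy, as is $\belady$; by Observation~\ref{obs:lazy_diverg} the number of differing pages grows only at a step where \emph{both} have a page fault, and by at most one there. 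Charging every such step to $\Delta^{B}(G_{i-1,i})$ would, however, leave a term outside $H_{i-1}\cup H_i^{-}$, so the crux is to argue this does not actually happen: an increment created at a both-fault step in the part of the gap \emph{preceding} $H_i^{-}$ cannot persist to the start of $H_i$ as a page of $M^{-}$ missing from $\FR$, because any such page is requested again inside $H_i^{-}$, whereupon $\FR$ reloads it and the contribution to $|M^{-}\setminus S|$ is healed. Hence only both-fault steps lying inside $H_i^{-}$ matter, and for those the $\belady$ faults lie in $H_i^{-}$. Combining with the first term, the reset bound $\Delta^{B}(H_{i-1})$, and the disjointness of $H_{i-1}$ and $H_i^{-}\setminus H_{i-1}$ (which holds because the first request after $H_{i-1}$ is clean w.r.t.\ $M_{i-1}$, so $H_i^{-}$ cannot contain all of $H_{i-1}$), yields $|M^{-}\setminus S|\le\Delta^{B}(H_{i-1}\cup H_i^{-})$.

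I expect the ``healing'' step to be the only real difficulty; everything else is a direct application of the three observations already established. A cleaner but equivalent way to organize it is to bound $|M^{-}\setminus S|$ directly by the number of pages of $M^{-}$ that a lazified $\FR$ evicts inside $H_i^{-}$ \emph{after} their last request there (distinct page faults of $\FR$), and then to split this count over the portion of $H_i^{-}$ lying in $H_{i-1}$, where $\FR$ runs $\robust$ and evictions touch only unmarked pages, and the portion lying in the gap, where $\FR$ runs $\follower$; there the delicate point is exactly the lazy synchronization on Line~\ref{alg:foll_lazy-sync}, whose evictions need not coincide with $\belady$ faults and must be absorbed by the same argument. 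Either way, the final inequality follows once the gap-portion evictions of $M^{-}$-pages are pinned to $\belady$ faults within $H_i^{-}$.
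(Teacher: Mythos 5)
Your setup is fine (the cost on Line~\ref{alg:robust_M} is $|M^-\setminus S|$, and $|M^-\setminus B|\le\Delta^B(H_i^-)$ by Observation~\ref{obs:marking_diverg}), but the route you take for the remaining term has a genuine gap that you yourself flag and then do not close. Tracking $|B\setminus S|$ via Observation~\ref{obs:lazy_diverg} and ``healing'' the pre-$H_i^-$ increments reduces everything to the following claim: every page $p\in M^-$ that $\follower$ evicts (for the last time) during the gap can be charged to a $\belady$ page fault inside $H_i^-$. This is true for evictions on Line~\ref{alg:foll_following}, where $\belady$ faults at the same step, but it is simply false that lazy-sync evictions (Line~\ref{alg:foll_lazy-sync}) coincide with $\belady$ faults, and no amount of ``the same argument'' absorbs them: at a lazy-sync step $\belady$ may well have the requested page. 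The charge that actually works is not to a $\belady$ fault at the time $\follower$ evicts $p$, but at the time the \emph{predictor} evicts $p$: both eviction lines of $\follower$ only ever evict a page absent from the predictor's cache $P$; since the lazy predictor holds $p$ right after $p$'s (last) request $t_p\in H_i^-$, $\follower$ can lose $p$ only after $P$ evicts $p$ at some query in $(t_p,t']\subseteq H_i^-$; and $\follower$ issues queries only when $\belady$ has a page fault. This injective map from lost pages of $M^-$ to predictor evictions, hence to $\belady$ faults in $H_i^-$, is the missing idea, and without routing through $P$ your charging scheme cannot handle Line~\ref{alg:foll_lazy-sync}.

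For comparison, the paper does not use your triangle inequality through $B$ at all. It splits on whether $H_i^-$ meets $H_{i-1}$. If they are disjoint, the whole of $H_i^-$ is served by $\follower$ and the predictor-eviction charge above gives $|M^-\setminus S|\le\Delta^B(H_i^-)$ directly. If they overlap, the gap is contained in $H_i^-$, so $S\subseteq M'\cup M^-$ (where $M'$ is the set loaded on Line~\ref{alg:robust_M_2}), and the cost is at most $|M'\setminus M^-|=|M'\cup M^-|-k\le\Delta^B(H_{i-1}\cup H_i^-)$ because $|M'\cup M^-|$ distinct pages are requested there; this counting argument sidesteps the lazy-divergence bookkeeping entirely. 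Your outline would need both the predictor-based charge and some substitute for this overlap case to become a proof.
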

\begin{proof}
Let $M$ denote the $k$ distinct most recently requested pages -- these
are marked pages during $H_i^-$. We consider two cases.

If $H_i^- \cap H_{i-1} = \emptyset$, then whole $H_i^-$ was served
by $\follower$. Each $p\in M$ must have been in the cache of
both $\follower$ and $P$ when requested and $\follower$ would evict
it afterwards only if $P$ did the same. Therefore,
$\robust$ needs to load at most $\Delta^P(H_i^-)=\Delta^B(H_i^-)$ pages.

If $H_i^-$ and $H_{i-1}$ overlap, let $M'$ denote the set of pages
marked during $H_{i-1}$. At the end of $H_{i-1}$, $\robust$ loads
$M'$ and $\follower$ loads only pages from $M$ until the end of $H_i^-$.
Therefore, $\robust$ starting $H_i$ needs to evict only pages from
$M'\setminus M$.
Now, note that there are $|M'\cup M|$ distinct pages requested
during $H_{i-1}\cup H_i^-$ and therefore
$|M'\setminus M| = |M'\cup M| - k \leq \Delta^B(H_{i-1}\cup H_i^-)$.
\end{proof}

We consider costs incurred by $\robust$ during window $W_i$
for $i=1, \dotsc, \log k+1$.
Note that $E_1=E_1^+=E_1^-=\emptyset$, since $W_1$ starts at the beginning
of the phase and there are no clean pages arriving strictly before $W_1$.

\begin{lemma}
\label{lem:cost_per_window}
Expected cost incurred by $\robust_f$ during $W_1$ is at most $2c_2$.
For $i=2, \dotsc, \log k +1$, we have
\[
\E[\Delta^A(W_i)] \leq |E_{i-1}^-| + c_i-c_{i-1} + 2(|E_i^-| + c_{i+1}-c_i),
\]
denoting  $c_{\log k +2} = c(H)$.
\end{lemma}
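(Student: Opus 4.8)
The plan is to account separately for the two kinds of cost that $\robust_f$ pays inside $W_i$: the cost of the synchronization with $P$ performed at the first arrival of $W_i$ (this arrival, $k-2^{\log k-i+1}+1$, belongs to $S$), and the cost of the page faults served by random eviction during the rest of $W_i$. Write $M$ for the initial cache content of the $\robust_f$ phase; since $\robust_f$ starts from $M$, only ever loads marked and clean pages, and each synchronization first restores the pages that were previously random-evicted and then evicts the same number of pages absent from $P$, the cache right after the synchronization starting $W_i$ is exactly $(M\cup C_i)\setminus E_i$, a set of $k$ pages containing all pages marked so far (and one may take $E_i$ to consist of unmarked pages); recall $|E_i|=c_i$.

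First I would bound the synchronization cost. Let $R_{i-1}$ be the set of pages that are evicted by random evictions during $W_{i-1}$ and are \emph{not} reloaded before $W_{i-1}$ ends. Each fault in $W_{i-1}$ triggers one random eviction, these evictions hit distinct pages (once a random-evicted page is re-requested it becomes marked and is never evicted again within the phase), and the reloads occurring inside $W_{i-1}$ are exactly the cascade faults; what remains are the ``guaranteed'' faults, namely the clean arrivals $C_i\setminus C_{i-1}$ and the arrivals of $E_{i-1}^-$, so $|R_{i-1}|=|E_{i-1}^-|+c_i-c_{i-1}$. The cache at the end of $W_{i-1}$ then equals $(M\cup C_i)\setminus E_{i-1}^+\setminus R_{i-1}$, and by Observation~\ref{obs:lazy_predictor} we have $E_{i-1}^+\subseteq E_i$, so the synchronization only needs to reload $R_{i-1}$ and then evict $|R_{i-1}|=|E_i|-|E_{i-1}^+|$ pages absent from $P$; its cost is therefore at most $|R_{i-1}|=|E_{i-1}^-|+c_i-c_{i-1}$.

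Next, the faults inside $W_i$. After the synchronization the only pages requested during $W_i$ that are not in the current cache are the clean arrivals $C_{i+1}\setminus C_i$ and the pages of $E_i^-$, so $m_i:=|E_i^-|+c_{i+1}-c_i$ faults are certain and every other fault in $W_i$ is a cascade, i.e. the arrival of a page that was in the cache at the start of $W_i$ but was random-evicted before being requested. The key point is geometric: $W_i$ consumes exactly half of the arrivals left at its start (and the last window $\{k\}$ carries a single arrival), while every arrival decreases the number of unmarked pages in the cache by one; a short calculation then shows that just before any fault of $W_i$ the number of unmarked pages in the cache is at least twice the number of arrivals of $W_i$ that occur after it. Hence the page evicted at any fault of $W_i$ is requested again inside $W_i$ with conditional probability at most $1/2$. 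Letting $N$ be the number of faults in $W_i$ and $X_j$ the indicator that the page evicted at the $j$-th fault is re-requested in $W_i$, the number of cascades equals $\sum_{j=1}^{N}X_j$; since $\{j\le N\}$ is determined before the $j$-th random eviction, $\E\big[\sum_{j=1}^{N}X_j\big]\le\tfrac12\E[N]$, whence $\E[N]\le m_i+\tfrac12\E[N]$ and $\E[N]\le 2m_i$.

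Adding the two bounds yields $\E[\Delta^A(W_i)]\le(|E_{i-1}^-|+c_i-c_{i-1})+2(|E_i^-|+c_{i+1}-c_i)$ for $i\ge 2$ (with the convention $c_{\log k+2}=c(H)$). For $W_1$ nothing has been evicted yet, so its synchronization is vacuous and $m_1=c_2$, giving $\E[\Delta^A(W_1)]\le 2c_2$; for the last window $\{k\}$ no arrival occurs after its single request, so there are no cascades and the claimed bound is immediate. I expect the third paragraph to be the main obstacle: one must make the invariant ``unmarked pages in cache $\ge 2\times$ remaining arrivals of $W_i$'' precise across the synchronization step and across the different types of arrival (clean, old-in-$E_i$, old cascade), and then justify the self-referential inequality $\E[N]\le m_i+\tfrac12\E[N]$ using the measurability of $\{j\le N\}$ with respect to the history just before the $j$-th random eviction.
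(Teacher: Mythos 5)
Your proof is correct and its skeleton matches the paper's: you charge the synchronization at the start of $W_i$ with reloading exactly the $c_i-|E_{i-1}^+|=|E_{i-1}^-|+c_i-c_{i-1}$ pages that were random-evicted during $W_{i-1}$ and never reloaded, and you isolate the same $m_i=|E_i^-|+c_{i+1}-c_i$ unavoidable faults inside $W_i$; the only place the two arguments genuinely diverge is the bound on the cascade faults. The paper pushes all guaranteed faults to the beginning of the window (worst case), so that at every later arrival $a$ there are at most $c_{i+1}-|E_i^+|=m_i$ uniformly random evicted pages drawn from a pool of size at least $k-a\ge k/2^i=|W_i|$, and it sums the per-arrival hit probabilities over the $|W_i|$ arrivals to get at most $m_i$ expected cascades. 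You instead argue per eviction: the halving structure of the windows gives (unmarked in-cache pages) $\ge 2\times$(remaining arrivals of $W_i$), so each evicted page is re-requested within $W_i$ with conditional probability at most $\tfrac12$, and a Wald-type identity yields $\E[N]\le m_i+\tfrac12\E[N]$. Both routes land on $2m_i$; yours avoids the ``all faults at the start'' reduction but trades it for the invariant you flag. That invariant does hold, but only because every fault of $W_i$ coincides with an arrival (each arrival decreases both the unmarked-in-cache count and the remaining-arrival count by one, so their difference from twice the latter never shrinks); this requires that every page of $E_i^-$ be still unmarked when $W_i$ starts, which is the same implicit assumption the paper makes when it asserts that the only pages marked during $W_i$ before $\bar a$ are those of $E_i^-$. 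Granting that, your measurability observation (the event $\{j\le N\}$ is determined before the $j$-th random choice) is exactly what legitimizes the self-referential inequality, and the argument is sound.
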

\begin{proof}
First, consider the costs during $W_1$. There are $c_2-c_1 = c_2$ clean pages
arriving during $W_1$ and $\robust$ has a page fault due to each of them,
evicting a random unmarked page.
In the worst case, all these clean pages arrive at the beginning of $W_1$.
Therefore, at arrival $c_2+1$, there are $c_2$ pages evicted
which were chosen among unmarked pages uniformly at random.
Let $U_a$ denote the set of unmarked pages at arrival $a$.
We have $U_1 = M$ (the initial cache content of $\robust$) and none
of those pages get marked during first $c_2$ arrivals.
During every arrival $a=c_2+1, \dotsc, k/2= S[2]-1$,
a single unmarked page is marked and we have
$|U_a| = k - (a-c_2)$. As in the classical analysis of Marker
(see \citep{BEY98} and references therein),
the probability of the requested unmarked page
being evicted is $c_2/|U_a|$. We have
\[\Delta^A(W_1) = c_2 + \sum_{a=c_2+1}^{S[2]-1} \frac{c_2}{|U_a|}
\leq c_2 + \frac{k}2 \cdot \frac{c_2}{k/2} = 2c_2.
\]

For $i\geq 2$, there are $c_i$ pages evicted before the start of $W_i$:
those in $E_{i-1}^+$ were evicted due to synchronization with the predictor
and the rest were evicted in randomized evictions -- those are loaded back to
the cache at the beginning of $W_i$, causing cost
$c_i - |E_{i-1}^+| = |E_{i-1}^-| + c_{i}-c_{i-1}$.
After this synchronization, all unmarked pages are in the cache except
those belonging to $E_i$.

During $W_i$, pages from $E_i^-$ and $c_{i+1}-c_i$ new clean pages are requested
causing page faults which are resolved by evicting a random unmarked
page from the cache.
In the worst case, these page faults all happen in
the beginning of the window, leaving more time for page faults on
randomly-evicted pages.
Let $\bar a$ denote the first arrival after these page faults
and $U_{\bar a}$ the set of unmarked pages at that moment.
At arrival $\bar a$, there are $c_{i+1}$ pages missing from the cache:
pages from $E_i^+$ which are not going to be requested during $W_i$
and $|E_i^-| + c_{i+1}-c_i = c_{i+1} - |E_i^+|$
unmarked pages were chosen uniformly at random from
$U_{\bar a}\setminus E_i = U_{\bar a} \setminus E_i^+$.
This is because only pages which were marked since the beginning of $W_i$
until $\bar a$ are those from $E_i^-$ and they were not present in the
cache before they got marked.
We compute the expected number of page faults on the randomly evicted pages.
Since they are unmarked when evicted, such page faults
can happen only on arrivals.

At arrival $a$, the set of unmarked pages $U_a$ has size $k-(a-c_{i+1})$.
For any $a\in W_i$ such that $a\geq \bar a$, we have
$U_a \cap E_i^- = \emptyset$ and
pages in $U_a\cap E_i^+$ are evicted with probability 1.
So, $c_{i+1}-|E_i^+|$ evicted pages are picked uniformly at random from
$U_a\setminus E_i^+$ of size at least
$k-(a-c_{i+1})-|E_i^+|$.
Therefore, our expected cost is at most
\begin{align*}
\sum_{a=\bar a}^{S[i+1]-1} \frac{c_{i+1}-|E_i^+|}{k-a+c_{i+1}-|E_i^+|}
\leq \sum_{a\in W_i} \frac{c_{i+1}-|E_i^+|}{k-a}
\leq \frac{k}{2^i} \cdot \frac{c_{i+1}-|E_i^+|}{k/2^i}
= c_{i+1}-|E_i^+|
\end{align*}
which is equal to $|E_{i-1}^-| + c_{i}-c_{i-1}$.
Note that $k- a\geq k- S[i+1]-1 = 2^{\log k - i} = k/2^i$.
\end{proof}

For $i=1, \dotsc, \log k$,
we define
\[ \rank(W_i) := |E_{i+1} \cap B_{i+1}| - \Delta^B(W_i),\]
where $B_i$ denotes the cache content of $\belady$ at the beginning of $W_i$.
We do not define $\rank(W_i)$ for $i=1+\log k$.
Later, we relate rank to the prediction error. We have
the following lemma.

\begin{lemma}
\label{lem:cost_per_win_rank}
During a phase $H$, the expected cost of $\robust_f$ is at most
\[ 3\sum_{i=1}^{\log k+1} \rank(W_{i-1}) + 3c(H) + 6\Delta^B(H). \]
\end{lemma}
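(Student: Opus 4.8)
The plan is to sum the per-window estimate of Lemma~\ref{lem:cost_per_window} over $i=1,\dots,\log k+1$ and then trade the sets $E_i^-$ for ranks and $\belady$-costs. First I would dispose of the clean-page terms: the $c_i-c_{i-1}$ contributions telescope to $c_{\log k+1}-c_1=c_{\log k+1}$, the $2(c_{i+1}-c_i)$ contributions telescope to $2(c(H)-c_2)$, and adding the $2c_2$ coming from $W_1$ leaves $c_{\log k+1}+2c(H)\le 3c(H)$ since $c_1=0$ and $c_{\log k+1}\le c(H)$. In the remaining terms, each $|E_i^-|$ is picked up with coefficient at most $3$ (a $2$ from the bound for $W_i$ and a $1$ from the bound for $W_{i+1}$), and $E_1^-=\emptyset$, so summing gives $\E[\Delta^A(H)]\le 3c(H)+3\sum_{i=2}^{\log k+1}|E_i^-|$. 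The whole task thus reduces to bounding $|E_i^-|$.

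The key claim I would establish is that for every $i\ge 2$,
\[ |E_i^-|\ \le\ |E_i\cap B_i|\ +\ \Delta^B(W_i), \]
where $B_i$ is $\belady$'s cache at the start of $W_i$. To see this, split $E_i^-$ into the pages lying in $B_i$ and those that do not: the former contribute at most $|E_i\cap B_i|$, and every page $p$ of the latter is requested inside $W_i$ (by definition of $E_i^-$) while being absent from $\belady$'s cache at the start of $W_i$, so $\belady$ must fault on $p$ at some step inside $W_i$; distinct pages force distinct faults, hence this part has size at most $\Delta^B(W_i)$. (Observation~\ref{obs:marking_FitF_err} even tells us $\belady$ then keeps $p$ for the rest of the phase, so it is exactly one fault per page, but the inequality is all I need.) Since by definition $\rank(W_{i-1})=|E_i\cap B_i|-\Delta^B(W_{i-1})$, this rewrites as $|E_i^-|\le \rank(W_{i-1})+\Delta^B(W_{i-1})+\Delta^B(W_i)$.

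To finish I would substitute this into the bound from the first paragraph and use that the windows $W_1,\dots,W_{\log k+1}$ partition the phase $H$, so both $\sum_{i=1}^{\log k}\Delta^B(W_i)$ and $\sum_{i=2}^{\log k+1}\Delta^B(W_i)$ are at most $\Delta^B(H)$. This yields $3\sum_{i=2}^{\log k+1}|E_i^-|\le 3\sum_{i=1}^{\log k}\rank(W_i)+6\Delta^B(H)$, and together with the $3c(H)$ term this is exactly $3\sum_{i=1}^{\log k+1}\rank(W_{i-1})+3c(H)+6\Delta^B(H)$ once we read $\rank(W_0)$ as $0$. I would also note that the only randomness involved sits inside Lemma~\ref{lem:cost_per_window}: the sets $E_i$, $E_i^-$, $B_i$ and the numbers $c_i$ are all determined by the request sequence and the received predictions (the random evictions are undone at each synchronization), so the rest of the argument is a purely deterministic combinatorial computation.

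I expect the main obstacle to be the key claim, and in particular the care needed around window boundaries: that $W_1$ contributes no $E^-$-term (as $E_1=\emptyset$), that $W_{\log k+1}=\{k\}$ is a single arrival, and that $\rank$ is only defined for $i=1,\dots,\log k$, so one has to check the index ranges line up and that a page requested in $W_i$ but missing from $B_i$ really is charged to a fault inside $W_i$ rather than one just outside it. Pinning the constant to precisely $6\Delta^B(H)$ rather than something larger relies on the observation that each $|E_i^-|$ is charged to $\belady$'s cost in the two windows $W_{i-1}$ and $W_i$, and that the window-indexed families $\{W_1,\dots,W_{\log k}\}$ and $\{W_2,\dots,W_{\log k+1}\}$ are each disjoint, so each yields only one copy of $\Delta^B(H)$.
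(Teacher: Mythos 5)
Your proof is correct and follows essentially the same route as the paper: the key step is the identical inequality $|E_i^-|\le|E_i\cap B_i|+\Delta^B(W_i)=\rank(W_{i-1})+\Delta^B(W_{i-1})+\Delta^B(W_i)$, obtained by the same split of $E_i^-$ into pages inside and outside $B_i$, after which one sums the bound of Lemma~\ref{lem:cost_per_window} over the windows. You merely spell out the telescoping of the clean-page terms and the coefficient counting that the paper leaves implicit.
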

\begin{proof}
First, we observe that
\begin{equation}
\label{eq:rank_vs_e-}
|E_i^-| \leq \rank(W_{i-1}) + \Delta^B(W_{i-1}) + \Delta^B(W_i)
\end{equation}
holds for $i=2, \dotsc, \log k +1$.
This is because
$|E_i^- \cap B_i| \leq |E_i \cap B_i| = \rank(W_{i-1}) + \Delta^B(W_{i-1})$
and $|E_i^- \setminus B_i| \leq \Delta^B(W_i)$ due to pages from $E_i^-$
being requested during $W_i$ and $\belady$ having to load them.
Combining \eqref{eq:rank_vs_e-} with Lemma~\ref{lem:cost_per_window},
and summing over all windows, we get the statement of the lemma.
\end{proof}

Now, we relate the rank of a window to the prediction error.
\begin{lemma}
\label{lem:error_to_rank}
Denote $\eta_i$ the error of predictions arriving during $W_i$.
We have
\[ \eta(W_i) \geq |F\cap W_i| \rank(W_i). \]
\end{lemma}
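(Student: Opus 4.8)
The plan is to show that each prediction received during $W_i$ must make an error of at least $\rank(W_i)$, so that summing over the $|F\cap W_i|$ predictions received in the window gives the claimed bound $\eta(W_i) \geq |F\cap W_i|\rank(W_i)$. The quantity $\rank(W_i) = |E_{i+1}\cap B_{i+1}| - \Delta^B(W_i)$ is designed precisely so that it lower-bounds, after accounting for the movement of $\belady$ during $W_i$, the number of pages that the algorithm has evicted (due to following the predictor) but that $\belady$ still holds. Concretely, $E_{i+1}$ is the set of pages not in the algorithm's cache at the start of $W_{i+1}$, i.e.\ the pages evicted by the synchronization/random evictions up to that point; $E_{i+1}\cap B_{i+1}$ are those that $\belady$ keeps. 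The subtraction of $\Delta^B(W_i)$ discounts the pages that $\belady$ itself evicted \emph{during} $W_i$, since for those pages a prediction telling the algorithm to evict them cannot be blamed.

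First I would fix any single prediction $P$ queried at an arrival in $F\cap W_i$, and argue that at the moment $P$ is received, the symmetric-difference-type quantity $|(\text{algorithm's missing pages})\cap B|$ is at least $\rank(W_i)$, where $B$ is $\belady$'s current cache. The key point is that $\robust$ synchronizes with $P$ at the arrivals in $S$, so after a synchronization the set of pages absent from the algorithm's cache is exactly the set of pages absent from $P$ (intersected with the unmarked pages). Thus the pages in $E_{i+1}$ that are still in $\belady$'s cache at the start of $W_{i+1}$ are exactly pages that $P$ told us to evict yet $\belady$ retained. Since $\belady$ is lazy, a page it holds at the start of $W_{i+1}$ was held by it throughout $W_i$ unless it was loaded during $W_i$ — but a freshly loaded page was requested, hence also in the algorithm's cache, hence not in $E_{i+1}$. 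So every page in $E_{i+1}\cap B_{i+1}$ was in $\belady$'s cache at every arrival of $W_i$ where it was already evicted by the algorithm; removing the at most $\Delta^B(W_i)$ pages that $\belady$ dropped during the window leaves at least $\rank(W_i)$ pages that were simultaneously (i) in $\belady$'s cache and (ii) absent from the algorithm's (hence from $P$'s) cache at the query time. Each such page contributes $1$ to the error $d(p_t, o_t) = |B \setminus P|$ of that prediction, giving $\eta_t \geq \rank(W_i)$.

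Summing this over the $|F\cap W_i|$ queries inside $W_i$ — noting these queries occur at distinct arrivals and each is charged its own error term $\eta_t$, with no double-counting since $\eta(W_i) = \sum_{t\in W_i}\eta_t \geq \sum_{t\in F\cap W_i}\eta_t$ — yields $\eta(W_i)\geq |F\cap W_i|\rank(W_i)$, as desired.

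The main obstacle I expect is the bookkeeping around $\belady$'s movement: a page that $P$ correctly told us to evict at the query time may later be requested, so I must be careful that the lower bound uses $B$ \emph{at the query time} and that the $\Delta^B(W_i)$ discount genuinely covers all pages in $E_{i+1}\cap B_{i+1}$ whose "still in $\belady$" status changed between the query and the start of $W_{i+1}$. Observation~\ref{obs:marking_FitF_err} and Observation~\ref{obs:lazy_diverg}, together with the laziness of both $\belady$ and the predictor (Observation~\ref{obs:lazy_predictor}), are the tools for pinning this down: a page $\belady$ evicts within $W_i$ is not requested again in the phase, so its membership in $E_{i+1}$ versus its absence is exactly the slack the $-\Delta^B(W_i)$ term absorbs.
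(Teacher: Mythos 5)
Your proposal is correct and follows essentially the same route as the paper: lower-bound the error of every prediction queried in $W_i$ pointwise by $\rank(W_i)$, using the facts that $|B_t\setminus P_t|$ is at least $|E_{i+1}\cap B_{i+1}|$ at the end of the window and can grow by at most $\Delta^B(W_i)$ during it (Observation~\ref{obs:lazy_diverg}), and then multiply by $|F\cap W_i|$. One small caution: the parenthetical inference ``absent from the algorithm's cache, hence from $P$'s'' is not valid for arbitrary pages (randomly evicted pages may well still be in $P$); it holds for the pages of $E_{i+1}$ only because the synchronization evicts precisely pages not present in $P$, which is the fact the paper invokes directly.
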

\begin{proof}
Prediction error at time $t$ is $\eta_t = |B_t\setminus P_t|$.
At the end of $W_i$, it is at least $|E_{i+1}\cap B_{i+1}|$.
Due to laziness of the predictor,
$|B_t\setminus P_t|$
can increase only if both predictor and $\belady$ have
a page fault: in that case it may increase by 1,
see Observation~\ref{obs:lazy_diverg}.
Therefore, at any time $t$ during $W_i$,
we have
$\eta_t = |B_t\setminus P_t| \geq |E_{i+1}\cap F_{i+1}| - \Delta^B(W_i)
= \rank(W_i)$.
Since we query the predictor at arrivals belonging to $F$,
the total error of received predictions is at least
$|F\cap W_i|\rank(W_i)$.
\end{proof}

We will analyze intervals of windows starting when some particular
incorrect prediction was introduced and ending once it was corrected.
The following lemma charging the increase of rank to the arriving clean pages
and costs incurred by $\belady$
will be used to
bound the number of such intervals.

\begin{lemma}
\label{lem:rankdelta}
For $i=1, \dotsc, \log k$, we have
\[ \rank(W_i) - \rank(W_{i-1})
\leq \Delta^B(W_{i-1}) + c_{i+1} - c_i,
\]
denoting $W_0$ an empty window before $W_1$ with $\rank(W_0)=0$.
\end{lemma}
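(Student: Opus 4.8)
The plan is to unfold the definition $\rank(W_j)=|E_{j+1}\cap B_{j+1}|-\Delta^B(W_j)$ on both sides of the claimed inequality. After doing so, the two copies of $\Delta^B(W_{i-1})$ cancel and one copy of $\Delta^B(W_i)$ moves to the right-hand side, reducing the statement to the single inequality
\[
|E_{i+1}\cap B_{i+1}|\;\leq\;|E_i\cap B_i|+\Delta^B(W_i)+c_{i+1}-c_i .
\]
I would note that $i=1$ needs no separate treatment: there $E_1=\emptyset$ and $c_1=0$, so the above is just the general bound specialized, consistent with the conventions $\rank(W_0)=0$ and $\Delta^B(W_0)=0$.

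To prove the displayed inequality I would start from Observation~\ref{obs:lazy_predictor}, which gives $E_i^+\subseteq E_{i+1}$. Writing $E_{i+1}$ as the disjoint union of $E_i^+$ and $E_{i+1}\setminus E_i^+$, and using $|E_{i+1}|=c_{i+1}$ together with $|E_i^+|=|E_i|-|E_i^-|=c_i-|E_i^-|$, the second block has size $c_{i+1}-c_i+|E_i^-|$. Intersecting with $B_{i+1}$ and bounding that block by its cardinality yields
\[
|E_{i+1}\cap B_{i+1}|\;\leq\;|E_i^+\cap B_{i+1}|+c_{i+1}-c_i+|E_i^-| .
\]
Next I would control $|E_i^+\cap B_{i+1}|$ using laziness of $\belady$: pages of $E_i^+$ are, by definition, not requested during $W_i$, and a lazy algorithm loads a page only at a request to that page, so $\belady$ loads no page of $E_i^+$ while serving $W_i$; hence $E_i^+\cap B_{i+1}\subseteq E_i^+\cap B_i$. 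Substituting $|E_i^+\cap B_i|=|E_i\cap B_i|-|E_i^-\cap B_i|$ (the union $E_i=E_i^+\cup E_i^-$ is disjoint) turns the bound into $|E_i\cap B_i|+c_{i+1}-c_i+|E_i^-\setminus B_i|$. Finally, each page of $E_i^-\setminus B_i$ is requested inside $W_i$ but absent from $B_i$, and since $\belady$ could only have loaded it at a request to it and there is no such request between the start of $W_i$ and the first one inside $W_i$, it forces a distinct page fault of $\belady$ during $W_i$; thus $|E_i^-\setminus B_i|\leq\Delta^B(W_i)$, which closes the bound and hence the lemma.

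The step I expect to be the crux is this last cancellation. A naive application of $E_i^+\subseteq E_{i+1}$ leaves a spurious additive term $|E_i^-|$, which is genuinely not bounded by $\Delta^B(W_i)$ — a page of $E_i^-$ may well sit in $\belady$'s cache throughout $W_i$. The point is that precisely that troublesome part, $|E_i^-\cap B_i|$, is already absorbed inside the $|E_i\cap B_i|$ term once one writes $E_i$ as the disjoint union $E_i^+\cup E_i^-$, so only $|E_i^-\setminus B_i|$ survives, and that is what laziness of $\belady$ bounds. A secondary caveat worth flagging is that $B_i$ and $B_{i+1}$ can genuinely differ (by up to $\Delta^B(W_i)$ pages), so one must not silently replace one by the other; the argument only uses the one-sided inclusion $E_i^+\cap B_{i+1}\subseteq E_i^+\cap B_i$, which does hold because $\belady$ never loads an $E_i^+$-page during $W_i$ even though it may evict some.
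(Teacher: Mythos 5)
Your proof is correct. It reduces to the same core inequality as the paper, namely $|E_{i+1}\cap B_{i+1}|\leq|E_i\cap B_i|+\Delta^B(W_i)+c_{i+1}-c_i$, and it uses the same two ingredients: Observation~\ref{obs:lazy_predictor} (i.e.\ $E_i^+\subseteq E_{i+1}$, so $|E_{i+1}\setminus E_i^+|=c_{i+1}-c_i+|E_i^-|$) and the fact that a lazy $\belady$ loads a page only at a request to it. The organization, however, is genuinely different. The paper builds an explicit injective map $\beta\colon E_i\to E_{i+1}$ fixing $E_i\cap E_{i+1}$ pointwise and then charges each page of $E_{i+1}\cap B_{i+1}$ via $\beta^{-1}$ to one of three budgets ($c_{i+1}-c_i$ new slots, a page of $E_i\cap B_i$, or a page loaded by $\belady$ in $W_i$). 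You instead partition $E_{i+1}$ as $E_i^+\sqcup(E_{i+1}\setminus E_i^+)$ and do direct cardinality arithmetic, using the one-sided inclusion $E_i^+\cap B_{i+1}\subseteq E_i^+\cap B_i$ and then cancelling $|E_i^-\cap B_i|$ against the spurious $|E_i^-|$ so that only $|E_i^-\setminus B_i|\leq\Delta^B(W_i)$ survives. Your version avoids the arbitrary choice of $\beta$ on $E_i\setminus E_{i+1}$ and makes explicit where the $\Delta^B(W_i)$ charge actually lands (only on $E_i^-$ pages absent from $B_i$, since $E_i^+$ pages in $B_{i+1}$ were already in $B_i$ by laziness); the paper's injection makes the ``each witness is used at most once'' bookkeeping slightly more visible. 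Your caveat that $|E_i^-|$ alone is not bounded by $\Delta^B(W_i)$ is accurate and is exactly the point where a naive version of this argument would fail.
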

\begin{proof}
It is enough to show that
\[ |E_{i+1}\cap B_{i+1}| \leq |E_i \cap B_i| + \Delta^B(W_i) + c_{i+1} -c_i. \]

Since the right-hand side is always non-negative, we only need to consider the
case when the left-hand side is positive.
We show how to charge pages in $E_{i+1}\cap B_{i+1}$ either to pages
in $E_i\cap B_i$ or to $\Delta^B(W_i)$ and $c_{i+1}-c_i$.

Since $|E_i|=c_i \leq c_{i+1} = |E_{i+1}|$, we can construct an injective map
$\beta \colon E_i \to E_{i+1}$,
such that $\beta(p) = p$ for each $p\in E_i\cap E_{i+1}$.
There are $|E_{i+1}|-|E_i|=c_{i+1}-c_i$ pages $p$ such that
$\beta^{-1}(p)$ is not defined.
We show that, for each $p\in E_{i+1}\cap B_{i+1}$ for which it is defined,
$\beta^{-1}(p)$ is either a page in $E_i\cap B_i$
or a page loaded by $\belady$ during $W_i$.
There are two cases.
\begin{itemize}
\item $p\in E_i\cap E_{i+1}$ implying $\beta^{-1}(p) = p$.
Then either $p\in B_i$ and therefore $p \in E_i\cap B_i$,
or $p\in B_{i+1}\setminus B_i$
implying that $\belady$ has loaded $p$ during $W_i$.
\item $p\notin E_i\cap E_{i+1}$
        implying $q = \beta^{-1}(p) \in E_i\setminus E_{i+1}$.
By Observation~\ref{obs:lazy_predictor}, $q \in E_i^-$, i.e.,
it must have been requested during $W_i$.
If $q\in B_i$ then, $q\in E_i\cap B_i$. Otherwise,
$\belady$ must have loaded $q$ during $W_i$.
\end{itemize}
To sum up: $\beta$ is an injective map and $\beta^{-1}(p)$ does not exist
for at most $c_{i+1}-c_i$ pages $p \in E_{i+1}\cap B_{i+1}$.
All other $p \in E_{i+1}\cap B_{i+1}$ are mapped by $\beta^{-1}$ to a unique
page either belonging to $E_i\cap B_i$ or loaded by $\belady$ during $W_i$.
\end{proof}

\begin{proof}[Proof of Lemma~\ref{lem:robust}]
$\robust_f$ receives a prediction only at an arrival belonging to $F$.
Since $F$ contains
$\sum_{i=1}^{\log k + 1} |F\cap W_i| \leq \sum_{i=1}^{\log k} (f(i) - f(i-1))
+ |W_{\log k +1}| \leq f(\log k) + 1$
arrivals, because $|W_{\log k+1}| = 1$ and $f(0)=0$,
there are at most $f(\log k)+1$ queries to the predictor.

To prove equations
(\ref{eq:robust_f},\ref{eq:robust_robustness},\ref{eq:robust_regret}),
we always start with bounds proved in lemmas \ref{lem:clean_pages},
\ref{lem:cost_per_win_rank} and \ref{lem:M_sync}.
In the rest of the proof, we write $H$ instead of $H_i$
and $X$ instead of $H_i$ to simplify the notation.

To get equation \eqref{eq:robust_robustness}, note that $\rank(W_i)\leq c(H)$
for each window $i$. Therefore, by lemmas \ref{lem:clean_pages},
\ref{lem:cost_per_win_rank} and \ref{lem:M_sync}, we can bound
$\E[\Delta^A(H)]$ by
\[ \Delta^B(X) + 3c(H)\log k + 3c(H) + 6\Delta^B(H) \leq
	O(\log k)\Delta^B(X).\]

To get equation \eqref{eq:robust_regret}, note that $\Delta^B(H)\leq k$,
$c(H)\leq k$,
and $\rank(W_i) > 0$ only if $\eta_i\geq |E_{i+1}\cap B_{i+1}|>1$.
Therefore, we get $\Delta^A(H) \leq O(k) + \eta(H)O(k)$.

Now we prove equation \eqref{eq:robust_f}.
We define $Q_m = \{i\,|\; \rank(W_i) < m \text{ and } \rank(W_{i+1})\geq m\}$
and $Q = \sum_{m=1}^k |Q_m|$.
We can bound $Q$ using Lemma~\ref{lem:rankdelta}. We have
\begin{align}
\label{eq:Q_estimate}
Q = \sum_{m=1}^k |Q_m|
        = \sum_{i=1}^{\log k} \max\{0, \rank(W_{i}) - \rank(W_{i-1})\}
        \leq \sum_{i=1}^{\log k} (2\Delta^B(W_i) + c_{i+1}-c_{i}),
\end{align}
which is equal to $2\Delta^B(H) + c(H)$.

We bound $\sum_{j=1}^{\log k} \rank(W_j)$ as a function of $Q$ and $\eta(H)$
\begin{align}
\label{eq:robust_rank_vs_QH}
\sum_{i=1}^{\log k} \rank(W_i) \leq 2 Q \cdot f^{-1}(\frac{a\eta(H)}{Q}).
\end{align}
This relation is proved in Proposition~\ref{prop:robust_rank_vs_QH}
with $a=1$ and, together with \eqref{eq:Q_estimate}, gives us the desired
bound
\begin{align}
\E[\Delta^A(H)] \leq O(1) \Delta^B(H) f^{-1}(\frac{a\eta(H)}{Q}).
\end{align}
\end{proof}

\begin{proposition}
\label{prop:robust_rank_vs_QH}
\[\sum_{i=1}^{\log k} \rank(W_i) \leq 2 Q \cdot f^{-1}(\frac{\eta(H)}{Q}).\]
\end{proposition}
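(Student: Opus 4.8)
The plan is a layer-cake argument over the threshold levels $m$, followed by a single use of Jensen's inequality. Since $\rank(W_i)\le\sum_{m\ge 1}\mathbf 1\{\rank(W_i)\ge m\}$, we get
\[
\sum_{i=1}^{\log k}\rank(W_i)\ \le\ \sum_{m\ge 1}\bigl|\{i:\rank(W_i)\ge m\}\bigr|.
\]
Fix a level $m$. The index set $\{i:\rank(W_i)\ge m\}\subseteq\{1,\dots,\log k\}$ is a disjoint union of maximal runs of consecutive indices; since $\rank(W_0)=0$, a run starting at $a+1$ corresponds to an index $a$ with $\rank(W_a)<m\le\rank(W_{a+1})$, so the number of runs is exactly $|Q_m|$. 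Writing $\mathcal I_m$ for this family of runs, we have $\sum_i\rank(W_i)\le\sum_{m\ge 1}\sum_{J\in\mathcal I_m}|J|$, and the total number of pairs $(m,J)$ with $J\in\mathcal I_m$ is $\sum_m|Q_m|=Q$.

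The technical core is a bound on a single run: for any interval $J=[a+1,b]\subseteq\{1,\dots,\log k\}$ of consecutive windows, setting $w(J):=\sum_{j\in J}|F\cap W_j|$ (the number of predictions $\robust_f$ queries inside $J$), one has $|J|\le 2\,f^{-1}(w(J))$. The proof uses that $f$ is convex with $f(0)=0$, hence superadditive, $f(x)+f(y)\le f(x+y)$, together with $f(i)\le 2^i-1$ and $|W_j|=2^{\log k-j}$. Since $|W_j|$ is decreasing in $j$ while $f(j)-f(j-1)$ is increasing, there is a threshold $i^*$ with $|F\cap W_j|=f(j)-f(j-1)$ for $j\le i^*$ and $|F\cap W_j|=|W_j|$ for $j>i^*$. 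If $b\le i^*$, then $w(J)=f(b)-f(a)\ge f(b-a)=f(|J|)$; if $a\ge i^*$, then $w(J)=\sum_{j=a+1}^{b}2^{\log k-j}=2^{\log k-b}(2^{|J|}-1)\ge 2^{|J|}-1\ge f(|J|)$, using $b\le\log k$; in both cases $f^{-1}(w(J))\ge|J|$. If $a<i^*<b$, put $p:=i^*-a\ge1$ and $q:=b-i^*\ge1$; then $w(J)\ge\bigl(f(i^*)-f(a)\bigr)+\bigl(2^{q}-1\bigr)\ge f(p)+\bigl(2^{q}-1\bigr)$, so $w(J)\ge\max\{f(p),2^q-1\}\ge\max\{f(p),f(q)\}$, hence $f^{-1}(w(J))\ge\max\{p,q\}\ge|J|/2$. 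In every case $|J|\le 2\,f^{-1}(w(J))$.

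It remains to assemble the estimate. By Lemma~\ref{lem:error_to_rank}, $\eta(W_j)\ge|F\cap W_j|\,\rank(W_j)$, and since the windows partition the phase, $\eta(H)\ge\sum_{j=1}^{\log k}\eta(W_j)$. Regrouping by level, and using $\{j:\rank(W_j)\ge m\}=\bigcup_{J\in\mathcal I_m}J$,
\[
\eta(H)\ \ge\ \sum_{j=1}^{\log k}|F\cap W_j|\,\rank(W_j)\ =\ \sum_{m\ge1}\sum_{J\in\mathcal I_m}w(J).
\]
On the other hand, the run bound gives $\sum_i\rank(W_i)\le\sum_m\sum_{J\in\mathcal I_m}|J|\le 2\sum_m\sum_{J\in\mathcal I_m}f^{-1}(w(J))$, and since $f^{-1}$ is concave and increasing and there are $Q$ summands, Jensen's inequality yields
\[
\sum_m\sum_{J\in\mathcal I_m}f^{-1}(w(J))\ \le\ Q\,f^{-1}\!\Bigl(\tfrac1Q\sum_m\sum_{J\in\mathcal I_m}w(J)\Bigr)\ \le\ Q\,f^{-1}\!\bigl(\eta(H)/Q\bigr),
\]
which is the claimed inequality $\sum_{i=1}^{\log k}\rank(W_i)\le 2Q\,f^{-1}(\eta(H)/Q)$. (If $f$ is only weakly increasing one reads $f^{-1}(y)$ as $\sup\{t\ge0:f(t)\le y\}$; it is still concave increasing with $f^{-1}(f(t))\ge t$, so every inequality above survives.)

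The step I expect to be the main obstacle is the single-run bound $|J|\le 2f^{-1}(w(J))$. The subtlety is that on the small windows ($j>i^*$) the algorithm deliberately queries only $|W_j|$ predictions instead of $f(j)-f(j-1)$, so for a run crossing from the large into the small regime a direct appeal to superadditivity of $f$ fails; the remedy is to cut the run at $i^*$ and bound $w(J)$ from below by $\max\{f(p),f(q)\}$ rather than $f(p+q)$, which is exactly where the factor $2$ is lost. Checking that $f(i)\le 2^i-1$ together with $b\le\log k$ makes the geometric tail $\sum_{i^*<j\le b}2^{\log k-j}$ already dominate $2^{q}-1$ is the one place that needs a careful, though short, calculation.
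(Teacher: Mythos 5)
Your proof is correct and follows essentially the same route as the paper's: the level-set decomposition of $\sum_i\rank(W_i)$ into $Q$ runs, the threshold window $i^*$ separating the $f(j)-f(j-1)$ regime from the $|W_j|$ regime, the loss of a factor $2$ from bounding a run crossing $i^*$ by $\max\{f(p),f(q)\}$ rather than $f(p+q)$, and the final application of concavity of $f^{-1}$ (the paper phrases this as the bound being extremal when all $a_{i,m}$ are equal; your explicit Jensen step is the same thing). The only cosmetic difference is that you cut each crossing run at $i^*$ while the paper cuts it at its midpoint and checks which side of $i^*$ the midpoint falls on.
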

\begin{proof}
We rearrange the sum of ranks in the following way.
We define $L_m = \{i\,|\; \rank(W_i) \geq m\}$,
and $a_{i,m}$, such that $L_m = \bigcup_{i\in Q_m}(i, i+ a_{i,m}]$ for each $m$.
We can write
\begin{align}
\label{eq:robust_rank_vs_aim}
\sum_{i=1}^{\log k} \rank(W_i)
= \sum_{m=1}^{k} |L_m|
= \sum_{m=1}^{k} \sum_{i\in Q_m} a_{i,m}.
\end{align}

On the other hand, we can write
$\eta_i \geq \sum_{m=1}^{\rank(W_i)} |F\cap W_i|$
(Lemma~\ref{lem:error_to_rank})
which allows us to decompose the total prediction error $\eta(H)$ as follows:
\[
\eta(H) \geq \sum_{m=1}^k \sum_{i\in L_m} |F\cap W_i|
= \sum_{m=1}^k \sum_{i\in Q_m} \sum_{j=1}^{a_{i,m}} |F\cap W_{i+j}|.
\]

Let $i^*$ denote the first window such that
$|W_{i^*}| < f(i^*) - f(i^* - 1)$.
If $i + a_{i,m} < i^*$, then
$\sum_{j=1}^{a_{i,m}} |F\cap W_{i+j}| = f(i+a_{i,m}) - f(i) \geq f(a_{i,m})$
by convexity of $f$.
If this is not the case,
we claim that $\sum_{j=1}^{a_{i,m}} |F\cap W_{i+j}| \geq f(a_{i,m}/2)$.
This is clearly true if $i + \lceil a_{i,m}/2\rceil < i^*$.
Otherwise, we have
$\sum_{j=1}^{a_{i,m}} |F\cap W_{i+j}|
\geq \sum_{j=\lceil a_{i,m}/2\rceil}^{a_{i,m}} |F\cap W_{i+j}|
\geq 2^{a_{i,m}/2}$ 
because $i+a_{i,m}\leq k$ and therefore $|W_{i+\lceil a_{i,m}/2\rceil}|
\geq 2^{a_{i,m}/2}$.
By our assumptions about $f$, we have $f(a_{i,m}/2) \leq 2^{a_{i,m}/2}$.

So, we have the following lower bound on $\eta(H)$:
\begin{align}
\label{eq:robust_eta_aim}
\eta(H) \geq \sum_{m=1}^{k} \sum_{i\in Q_m} f(a_{i,m}/2).
\end{align}
By convexity of $f$, this lower bound is smallest if all $a_i^m$
are the same, i.e., equal to the total rank divided by $Q=\sum_m |Q_m|$
and then $f(a_{i,m}/2) = \eta(H)/Q$ for each $i$ and $m$.
Combining \eqref{eq:robust_rank_vs_aim} and \eqref{eq:robust_eta_aim},
we get
\[ \sum_{i=1}^{\log k} \frac{\rank(W_i)}{2}
= \sum_{m=1}^{k} \sum_{i\in Q_m} f^{-1}\big(f\big(\frac{a_{i,m}}{2}\big)\big)
\leq Q\cdot f^{-1}\big(\frac{\eta(H)}{Q}\big).
\qedhere
\]
\end{proof}
\end{fullversion}

\section{Well-separated queries to the predictor}
\label{sec:once_in_a}
\begin{shortversion}
The full version of this section, which can be found in Appendix,
contains a consistent and smooth algorithm for MTS proving
Theorem~\ref{thm:mts} and extends our analysis of $\FR$ to the setting
where the queries to the predictor need to be separated by at least $a$ time
steps, proving Theorem~\ref{thm:FnR_a}.
In MTS, the cost functions usually do not satisfy any Lipschitz property.
Therefore, the difference between the cost of the state reported by the
predictor and the state of the optimal algorithm does not need to be
proportional to their distance.
We show that a state satisfying this property which is close to the predicted
state can be found using a classical technique for design of algorithms
for MTS called {\em work functions}, see \citep{CL96} for reference.
Then, we use the approach of \citet{EmekFKR09}
to interpolate between predictions received at times $t$ and $t+a$.
In the case of caching, the performance of $\FR$ in this regime is the same as if
it has received $a$ incorrect predictions for each prediction error.
Therefore, $\eta$ in its smoothness bound needs to be multiplied by $a$.
\end{shortversion}

\begin{fullversion}
\subsection{MTS}
We consider the setting when we are able to receive a prediction
once every $a$ time steps, for some parameter $a\in \mathbb{N}$.
Without loss of generality,
we assume that $T$ is a multiple of $a$.
In time steps $ia$, where $i=1, \dotsc, T/a$, we receive a prediction
$p_i\in M$. State $p_i$ itself might be very bad, e.g. $\ell_{ia}(p_i)$
might be infinite. We use work functions to see whether there is a more
suitable point nearby $p_i$.

\paragraph{Work functions.}
Consider an MTS on a metric space $M$ with a starting state $x_0\in M$
and a sequence of cost functions
$\ell_1, \dotsc, \ell_T$.
For each time step $t=1, \dotsc, T$ and state $x\in M$, we define
a {\em work function} as
\[ w_t(x)
= \min \big\{d(y_t,x)+ \sum_{i=1}^t d(y_{i-1},y_i)+\ell_i(y_i)\big\},
\]
where the minimum is taken over all $y_0, \dotsc, y_t$ such that
$y_0=x_0$.
In other words, it is the cheapest way to serve all the cost functions
up to time $t$ and end in state $x$.
Work function is a major tool for design of algorithms for MTS and satisfies
the following property.

\begin{observation}
\label{obs:wf}
For any $x,y\in M$ and any time $t$, we have
$w_t(x) \leq w_t(y) + d(y,x)$.
\end{observation}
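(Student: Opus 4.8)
The plan is to use the triangle inequality in the metric space $M$ together with the very definition of the work function as a minimum over trajectories. First I would unfold $w_t(y)$: let $y_0 = x_0, y_1, \dotsc, y_t$ be a trajectory attaining the minimum in the definition, so that
\[ w_t(y) = d(y_t, y) + \sum_{i=1}^t \bigl( d(y_{i-1}, y_i) + \ell_i(y_i) \bigr). \]
The key point is that the minimization defining $w_t(x)$ ranges over \emph{exactly the same} set of trajectories $y_0 = x_0, \dotsc, y_t$; only the final ``attachment'' term $d(y_t, \cdot)$ differs. Hence the trajectory above is also a feasible candidate for $w_t(x)$, giving
\[ w_t(x) \leq d(y_t, x) + \sum_{i=1}^t \bigl( d(y_{i-1}, y_i) + \ell_i(y_i) \bigr). \]

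Next I would apply the triangle inequality $d(y_t, x) \leq d(y_t, y) + d(y, x)$ to the first term on the right-hand side. Substituting and recognizing the remaining expression as $w_t(y)$ yields
\[ w_t(x) \leq d(y_t, y) + d(y, x) + \sum_{i=1}^t \bigl( d(y_{i-1}, y_i) + \ell_i(y_i) \bigr) = w_t(y) + d(y, x), \]
which is the claim. Intuitively this says nothing more than: the cheapest way to serve $\ell_1, \dotsc, \ell_t$ and end at $x$ costs no more than first serving them while ending at $y$ and then walking from $y$ to $x$.

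I do not anticipate a genuine obstacle; the statement is immediate once the definitions are written out. The only points worth stating carefully are that the minimum in the definition of $w_t$ is attained (or, for infinite $M$, that one argues with an $\epsilon$-approximate minimizer and lets $\epsilon \to 0$), and that if $w_t(y) = +\infty$ the inequality is trivial. Conceptually, this is just the standard fact that the infimal convolution of any function with a metric $d$ is $1$-Lipschitz with respect to $d$, applied to $w_t$ viewed as the convolution of the ``end exactly at $y$'' cost with $d$.
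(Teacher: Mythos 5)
Your proof is correct and is exactly the formalization of the paper's one-line justification: follow the optimal trajectory ending at $y$, then pay $d(y,x)$ to move to $x$, with the triangle inequality $d(y_t,x)\leq d(y_t,y)+d(y,x)$ doing the work. No differences worth noting.
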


This holds because one way to serve the cost functions $\ell_1, \dotsc, \ell_t$
is to follow the best solution which ends in state $y$ and then move to $x$.
If $w_t(p_i) = w_t(y) + d(y,p_i)$, we can see that $p_i$ is not a very good
state, since the best solution ending in $p_i$ goes via $y$.
We say that $p_i$ is {\em supported} by $y$.

\paragraph{Algorithm of \citet{EmekFKR09}.}
Algorithm~\ref{alg:emek} was proposed by \citet{EmekFKR09} in the context
of advice complexity. It receives the state of an offline optimum algorithm
every $a$ time steps.
\begin{algorithm2e}
\caption{One cycle of algorithm by \citet{EmekFKR09}}
\label{alg:emek}
        $q_i :=$ reported state of $\OFF$ at time $ia$\;
        $j:=0, c := 0$\;
        \For{$t=ia+1, \dotsc, (i+1)a$}{
                \label{alg:emek_cycle}
                $x_t' := \arg\min_{x\in B(q_i,2^j)} \{d(x, x_{t-1}) + \ell_t(x)\}$\;
                \While{$c+d(x,x_t')+\ell_t(x_t') > 2^j$}{
                        $j := 2*j$\;
                        $x_t' := \arg\min_{x\in B(q_i,2^j)} \{d(x, x_{t-1}) + \ell_t(x)\}$\;
                }
                $x_t:=x_t',c:= c+ d(x,x_t)+\ell_t(x_t)$\;
        }
\end{algorithm2e}

\subsection{Algorithm FtSP}
Given $q_{i-1}$ and $\ell_t$ for $t=(i-1)a+1, \dotsc, ia$,
we define
\[ \wf_i(x) = \min\bigg\{ d(x_{ia},x)+ \sum_{j=(i-1)a+1}^{i\alpha} d(x_{j-1},x_j) + \ell_j(x_j) \bigg\}, \]
where the minimum is taken over $x_{(i-1)\alpha}, \dotsc x_{i\alpha} \in M$
such that $x_{(i-1)a}=q_{i-1}$.
In fact, it is
the work function at the end of an MTS instance with initial state $q_{i-1}$
and request sequence $\ell_t$ for $t=(i-1)a+1, \dotsc, ia$.
Instead of $p_i$, we choose point
\[ q_i = \arg\min_{x\in M} \bigg\{\wf_i(x)\;\bigg|\, \wf_i(x) = \wf_i(p_i)-d(x,p_i)\bigg\}, \]
i.e., the "cheapest" state supporting $p_i$ in $\wf_i$.
After computing $q_i$, we run one cycle of Algorithm~\ref{alg:emek}.
This algorithm which we call ``Follow the Scarce Predictions'' (FtSP),
is summarized in Algorithm~\ref{alg:mts}.

\begin{algorithm2e}
\label{alg:mts}
\caption{FtSP}
\For{$i=0, \dotsc, T/a$}{
        receive prediction $p_i$\;
        use $p_i$ to compute $q_i$\;
        run one cycle of Algorithm~\ref{alg:emek} starting at $q_i$\;
}
\end{algorithm2e}

Let $Q$ denote the best (offline) algorithm which is located at $q_i$ at time
step $ia$ for each $i=1, \dotsc, T/a$.
We have
\[
\cost(Q) = \sum_{i=1}^{T/a} \wf_i(q_i).
\]
We can relate the cost of $Q$ to the prediction error using the following
lemma.
Together with Proposition~\ref{prop:emek}, it gives a bound
\[ \cost(\ALG) \leq O(a)(\OFF + 2\eta),\] 
implying Theorem~\ref{thm:mts}.

\begin{lemma}
\label{lem:ftsp}
Let $\OFF$ be an arbitrary offline algorithm and $o_i$ denote its state
at time $ia$ for $i=1, \dotsc, T/a$.
If Q was computed from predictions $p_1, \dotsc, p_{T/a}$,
we have
\[ \cost(Q) \leq \OFF + 2\eta, \]
where $\eta = \sum_{i=1}^{T/a} d(p_i, o_i)$ is the prediction
error with respect to $\OFF$.
\end{lemma}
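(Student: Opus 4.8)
The plan is to bound $\cost(Q) = \sum_{i=1}^{T/a}\wf_i(q_i)$ term by term against the corresponding contributions of $\OFF$ plus the local prediction error. The key objects are the segment work functions $\wf_i$, whose definition mirrors the standard work function on the sub-instance with cost functions $\ell_{(i-1)a+1},\dots,\ell_{ia}$ and initial state $q_{i-1}$. First I would record the two facts we need about $\wf_i$: the Lipschitz/triangle property $\wf_i(x)\le \wf_i(y)+d(x,y)$ (this is Observation~\ref{obs:wf} applied to the segment instance), and the elementary inequality $\wf_i(y)\le \mathrm{seg}_i(\OFF) + d(q_{i-1},o_{i-1})$, where $\mathrm{seg}_i(\OFF)$ denotes the cost $\OFF$ pays on segment $i$ and we use that one admissible trajectory in the definition of $\wf_i$ is: jump from $q_{i-1}$ to $o_{i-1}$, then copy $\OFF$ across the segment, ending at $o_i$. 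Combining these gives
\[
\wf_i(x) \le \mathrm{seg}_i(\OFF) + d(q_{i-1},o_{i-1}) + d(x,o_i)
\]
for every $x$, and in particular this is an upper bound we can aim the choice of $q_i$ at.

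Next I would use the defining property of $q_i$, namely $\wf_i(q_i) = \wf_i(p_i) - d(q_i,p_i)$, i.e.\ $q_i$ supports $p_i$ and is the cheapest such point. The point is that $o_i$ itself also "supports" $p_i$ up to slack: by the triangle property $\wf_i(p_i)\le \wf_i(o_i) + d(o_i,p_i)$, so $o_i$ is a candidate in the min defining $q_i$ after accounting for this; more precisely, since $q_i$ is the minimizer of $\wf_i(x)$ over all $x$ with $\wf_i(x)=\wf_i(p_i)-d(x,p_i)$, and since any minimizer of $\wf_i(p_i)-d(x,p_i)$ over supporting points coincides with a global minimizer of $\wf_i$ near $p_i$, we get $\wf_i(q_i)\le \wf_i(x)$ for at least one convenient $x$ in $B(p_i,\cdot)$ whose work-function value we can control. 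Concretely I would argue $\wf_i(q_i)\le \wf_i(p_i)\le \wf_i(o_i)+d(o_i,p_i)$, and then bound $\wf_i(o_i)\le \mathrm{seg}_i(\OFF) + d(q_{i-1},o_{i-1})$ from the previous paragraph (taking $x=o_i$). This yields
\[
\wf_i(q_i) \le \mathrm{seg}_i(\OFF) + d(q_{i-1},o_{i-1}) + d(o_i,p_i) = \mathrm{seg}_i(\OFF) + \eta_{i-1} + \eta_i,
\]
writing $\eta_i = d(p_i,o_i)$.

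Summing over $i=1,\dots,T/a$, the $\mathrm{seg}_i(\OFF)$ telescope to $\cost(\OFF)$ (possibly up to the initial-state term, which is $0$ since $q_0$ can be taken as $x_0$), and each $\eta_i$ appears at most twice, giving $\cost(Q) \le \OFF + 2\eta$ as claimed; a boundary check at $i=1$ (where $q_0=o_0=x_0$, so $\eta_0=0$) and at the last segment confirms no stray terms. The main obstacle I anticipate is the middle step: carefully justifying that the cheapest supporting point $q_i$ has $\wf_i(q_i)\le \wf_i(o_i)+d(o_i,p_i)$ rather than just $\le \wf_i(p_i)$. One must be careful that $q_i$ is defined via the constraint $\wf_i(x)=\wf_i(p_i)-d(x,p_i)$ (the support relation), so the clean inequality is $\wf_i(q_i)\le \wf_i(p_i)$, and then the triangle property at $p_i$ versus $o_i$ takes over; I would make sure the support set is nonempty (it contains any global minimizer of $\wf_i$, and $p_i$ itself if $p_i$ is already optimal) so that $q_i$ is well-defined, and that the chosen $q_i$ indeed satisfies $\wf_i(q_i)\le\wf_i(p_i)$ — this is where the argument is most delicate and deserves a sentence of care in the writeup.
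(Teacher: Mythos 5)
There is a genuine gap in the middle of your argument: the displayed chain
\[
\wf_i(q_i) \le \mathrm{seg}_i(\OFF) + d(q_{i-1},o_{i-1}) + d(o_i,p_i) = \mathrm{seg}_i(\OFF) + \eta_{i-1} + \eta_i
\]
asserts $d(q_{i-1},o_{i-1}) = \eta_{i-1}$, but $\eta_{i-1}$ is $d(p_{i-1},o_{i-1})$, the distance from the \emph{predicted} state to $\OFF$'s state, whereas the carry-over term your decomposition actually produces is the distance from the \emph{algorithm's chosen} state $q_{i-1}$ to $o_{i-1}$. Since $q_{i-1}$ is only required to support $p_{i-1}$ in $\wf_{i-1}$, it can be arbitrarily far from $p_{i-1}$ (and hence from $o_{i-1}$) even when the prediction error $\eta_{i-1}$ is zero, so this term cannot be bounded by $\eta_{i-1}$ on its own and the final summation does not go through.

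The step can be repaired, but only by using the support relation as an \emph{equality} rather than the weaker inequality $\wf_i(q_i)\le\wf_i(p_i)$ that you settled on. Writing $\wf_i(q_i)=\wf_i(p_i)-d(q_i,p_i)\le \wf_i(o_i)+\eta_i-d(q_i,p_i)$ and $d(q_{i-1},o_{i-1})\le d(q_{i-1},p_{i-1})+\eta_{i-1}$, the per-segment bound becomes
\[
\wf_i(q_i)\le \mathrm{seg}_i(\OFF)+d(q_{i-1},p_{i-1})+\eta_{i-1}+\eta_i-d(q_i,p_i),
\]
and the $\pm d(q_i,p_i)$ terms telescope across $i$ (with $d(q_0,p_0)=0$ at the start), recovering $\cost(Q)\le\OFF+2\eta$. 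This cancellation of $d(q_i,p_i)$ is exactly what the paper's argument achieves, though it is packaged differently: the paper runs a hybrid argument over algorithms $A_i$ (follow $Q$ up to time $ia$, then follow $\OFF$), shows $\cost(A_i)\le\cost(A_{i-1})+\wf_i(q_i)-\wf_i(o_i)+d(q_i,o_i)$, and then uses $\wf_i(o_i)\ge\wf_i(q_i)+d(q_i,p_i)-d(o_i,p_i)$ together with the triangle inequality $d(q_i,o_i)\le d(q_i,p_i)+d(p_i,o_i)$ so that the two $d(q_i,p_i)$ terms cancel within a single step, giving an increment of at most $2\eta_i$ per segment. Your remaining steps (the bound $\wf_i(o_i)\le\mathrm{seg}_i(\OFF)+d(q_{i-1},o_{i-1})$, the nonemptiness of the support set via $p_i$ itself, and $\wf_i(q_i)\le\wf_i(p_i)$) are all correct; only the identification of the carry-over term needs the fix above.
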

\begin{shortversion}
Proof is in Supplementary material.
\end{shortversion}
\begin{proof}
Denote $A_i$ an algorithm which follows the steps of $Q$ until time $ia$ and then follows the steps of $\OFF$.
We have
\[ \cost(A_i) \leq \cost(A_{i-1}) + \wf_i(q_i) - \wf_i(o_i) + d(q_i, o_i) \]
because both $A_i$ and $A_{i-1}$ are at $q_{i-1}$ at time $(i-1)a$,
and
$A_{i-1}$ then travels to $o_i$ paying $\wf_i(o_i)$ while $A_i$ travels to $q_i$ at $ia$ paying $\wf_i(q_i)$ and its
costs after $ia$ will be by at most $d(q_i, o_i)$ larger than the costs of $A_{i-1}$.

By Observation~\ref{obs:wf} and the choice of $q_i$, we have
\[ \wf_i(o_i) \geq \wf_i(p_i) - d(o_i,p_i) =
\wf_i(q_i) + d(q_i,p_i) - d(o_i,p_i).
\]
Combining the two preceding inequalities, we get
\begin{align*}
\cost(A_i) &\leq \cost(A_{i-1}) + \wf_i(q_i) - \wf_i(q_i)\\
	&+ d(o_i,p_i) - d(p_i,q_i) + d(q_i,o_i)\\
&\leq \cost(A_{i-1}) + 2d(o_i,p_i),
\end{align*}
where the last step follows from the triangle inequality.

Since $\OFF=A_0$ and $Q = A_{T/a}$, we have
\[ Q \leq \OFF + 2\sum_{i=1}^{T/a} d(o_i,p_i)
= \OFF + 2\eta. \qedhere\]
\end{proof}

\subsection{Caching}
$\follower$ cannot maintain 1-consistency in this setting.
For the sake of theoretical bound, we can do the following:
We serve the whole input sequence by subsequent phases of $\robust_f$ which is $O(1)$-consistent with $F$ chosen in such a way that the arrivals in $F$
are separated by at least $a$.
We prove the following replacement of Proposition~\ref{prop:robust_rank_vs_QH}.

\begin{proposition}
\label{prop:robust_rank_vs_QH_a}
\[\sum_{i=1}^{\log k} \rank(W_i) \leq 2 Q \cdot f^{-1}\bigg(\frac{a\,\eta(H)}{Q}\bigg).\]
\end{proposition}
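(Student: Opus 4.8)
The plan is to adapt the proof of Proposition~\ref{prop:robust_rank_vs_QH} essentially verbatim, carrying an extra factor of $a$ through the step where prediction error is charged to queries in $F$. Recall the structure of the original argument: we rearrange $\sum_i \rank(W_i) = \sum_m |L_m|$ into contributions from maximal intervals $(i, i+a_{i,m}]$ starting at windows in $Q_m$, then we lower-bound $\eta(H)$ by summing $|F\cap W_{i+j}|$ over those intervals, and finally we invoke convexity of $f$ together with the assumptions $f(\ell)\leq 2^\ell-1$ to conclude. The only place where the new constraint on $F$ enters is the relation between $|F\cap W_i|$ and the window index.

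First I would recall that in this setting $F$ is chosen so that consecutive arrivals in $F$ are separated by at least $a$ time steps, whereas previously we could place up to $f(i)-f(i-1)$ queries inside window $W_i$ (when it was large enough). Since placing queries $a$ apart means we get at most roughly $|W_i|/a$ queries per window, the effective ``budget function'' shrinks: instead of $\sum_{j=1}^{\ell}|F\cap W_{i+j}|\geq f(\ell)$ (for intervals ending before the first short window $i^*$), we now only get $\sum_{j=1}^{\ell}|F\cap W_{i+j}|\geq f(\ell)/a$ — or, equivalently, Lemma~\ref{lem:error_to_rank} combined with the sparser $F$ gives $\eta(W_i)\geq |F\cap W_i|\rank(W_i)$ with a smaller $|F\cap W_i|$. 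The cleanest way to organize this is to observe that $a\cdot\eta(H)$ plays the role that $\eta(H)$ played before: wherever the old proof wrote $\eta(H)\geq \sum_m\sum_{i\in Q_m}f(a_{i,m}/2)$, we now get $a\cdot\eta(H)\geq \sum_m\sum_{i\in Q_m}f(a_{i,m}/2)$, because each term $|F\cap W_{i+j}|$ in the lower bound for $\eta(H)$ is now at least a $1/a$-fraction of what it was (the queries we would have liked to place are thinned out by a factor $a$, but they still cover the same range of window indices, so the convexity/exponential-growth estimates for $f(a_{i,m})$ and $f(a_{i,m}/2)$ go through with $\eta$ replaced by $a\eta$).

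Concretely, the key steps in order: (i) restate the decomposition $\sum_{i=1}^{\log k}\rank(W_i)=\sum_{m=1}^k|L_m|=\sum_{m=1}^k\sum_{i\in Q_m}a_{i,m}$ exactly as in \eqref{eq:robust_rank_vs_aim}; (ii) use Lemma~\ref{lem:error_to_rank} and the new spacing constraint to write $a\cdot\eta(H)\geq\sum_{m=1}^k\sum_{i\in Q_m}\sum_{j=1}^{a_{i,m}}|F\cap W_{i+j}|$; (iii) split on whether the interval $(i,i+a_{i,m}]$ lies before the first ``short'' window $i^*$ (where now $|W_{i^*}|<a\,(f(i^*)-f(i^*-1))$, adjusted for the spacing) and apply convexity of $f$ in the first case and the exponential-growth bound $|W_{i+\lceil a_{i,m}/2\rceil}|\geq 2^{a_{i,m}/2}\geq f(a_{i,m}/2)$ in the second, obtaining $a\cdot\eta(H)\geq\sum_m\sum_{i\in Q_m}f(a_{i,m}/2)$; (iv) apply Jensen/convexity of $f$ to push the sum inside, giving $f(a_{i,m}/2)\geq a\eta(H)/Q$ on average, and invert $f$ to conclude $\sum_{i=1}^{\log k}\rank(W_i)\leq 2Q\cdot f^{-1}(a\eta(H)/Q)$.

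The main obstacle I expect is step (iii): verifying that the exponential lower bound on window lengths still dominates $f(a_{i,m}/2)$ once the query budget per window is divided by $a$, i.e., making sure the definition of the ``short window'' threshold $i^*$ is adjusted consistently and that the inequality $|W_{i+\lceil a_{i,m}/2\rceil}|\geq 2^{a_{i,m}/2}$ (which came purely from the geometric halving of windows and is independent of $F$) still suffices — here one must be careful that the range of window indices $i+1,\dots,i+a_{i,m}$ covered by the error sum is unchanged by thinning $F$, only the per-window count drops, so the factor $a$ comes out cleanly in front of $\eta(H)$ and nowhere else. Everything downstream (the convexity argument equalizing the $a_{i,m}$ and the final inversion of $f$) is then identical to Proposition~\ref{prop:robust_rank_vs_QH} with $\eta(H)$ replaced by $a\,\eta(H)$.
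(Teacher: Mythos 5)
Your proposal follows the paper's proof of Proposition~\ref{prop:robust_rank_vs_QH_a} essentially step for step: the same decomposition via $L_m$, $Q_m$, $a_{i,m}$, the same adjusted threshold $i^*$ defined by $|W_{i^*}|/a < f(i^*)-f(i^*-1)$, the same two-case analysis using convexity before $i^*$ and the geometric window lengths (now thinned to $|F\cap W_{i+j}|\geq |W_{i+j}|/a$) after it, and the same final convexity/inversion step with $\eta(H)$ replaced by $a\,\eta(H)$. The only quibble is that in your step (ii) the factor $a$ should appear only after the per-interval bound $\sum_j|F\cap W_{i+j}|\geq a^{-1}f(a_{i,m}/2)$ is applied, not on the raw error decomposition, but your surrounding prose gives the correct per-interval bound and the resulting inequality $a\,\eta(H)\geq\sum_m\sum_{i\in Q_m}f(a_{i,m}/2)$ matches the paper.
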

\begin{proof}
We rearrange the sum of ranks in the following way.
We define $L_m = \{i\,|\; \rank(W_i) \geq m\}$,
$Q_m = \{i\,|\; \rank(W_i) < m \text{ and } \rank(W_{i+1})\geq m\}$,
and $a_{i,m}$, such that $L_m = \bigcup_{i\in Q_m}(i, i+ a_{i,m}]$ for each $m$.
We can write
\begin{align}
\label{eq:robust_rank_vs_aim_a}
\sum_{i=1}^{\log k} \rank(W_i)
= \sum_{m=1}^{k} |L_m|
= \sum_{m=1}^{k} \sum_{i\in Q_m} a_{i,m}.
\end{align}

On the other hand, we can write
$\eta_i \geq \sum_{m=1}^{\rank(W_i)} |F\cap W_i|$
(Lemma~\ref{lem:error_to_rank})
which allows us to decompose the total prediction error $\eta(H)$ as follows:
\[
\eta(H) \geq \sum_{m=1}^k \sum_{i\in L_m} |F\cap W_i|
= \sum_{m=1}^k \sum_{i\in Q_m} \sum_{j=1}^{a_{i,m}} |F\cap W_{i+j}|.
\]

Let $i^*$ denote the first window such that
$\frac{|W_{i^*}|}{a} < f(i^*) - f(i^* - 1)$.
If $i + a_{i,m} < i^*$, then
$\sum_{j=1}^{a_{i,m}} |F\cap W_{i+j}| = f(i+a_{i,m}) - f(i) \geq f(a_{i,m})$
by convexity of $f$.
If this is not the case,
we claim that $\sum_{j=1}^{a_{i,m}} |F\cap W_{i+j}| \geq a^{-1} f(a_{i,m}/2)$.
\begin{itemize}
\item If $i + \lceil a_{i,m}/2\rceil < i^*$: we have
\[
\sum_{j=1}^{a_{i,m}} |F\cap W_{i+j}|
\geq \sum_{j=1}^{\lceil a_{i,m}/2\rceil} |F\cap W_{i+j}|
\geq f(a_{i,m}/2).
\]
\item Otherwise: we have
\[ \sum_{j=1}^{a_{i,m}} |F\cap W_{i+j}|
\geq \sum_{j=\lceil a_{i,m}/2\rceil}^{a_{i,m}} |F\cap W_{i+j}|
\geq \sum_{j=\lceil a_{i,m}/2\rceil}^{a_{i,m}} \frac1a |W_{i+j}|
\geq \frac1a 2^{a_{i,m}/2}
\geq a^{-1} f(a_{i,m}/2)
\]
By our assumptions about $f$ saying that $f(a_{i,m}/2) \leq 2^{a_{i,m}/2}$.
\end{itemize}

So, we have the following lower bound on $\eta(H)$:
\begin{align}
\label{eq:robust_eta_aim_a}
a \eta(H) \geq \sum_{m=1}^{k} \sum_{i\in Q_m} f(a_{i,m}/2).
\end{align}
By convexity of $f$, this lower bound is smallest if all $a_i^m$
are the same, i.e., equal to $a\eta(H)$ divided by $Q=\sum_m |Q_m|$
and then $f(a_{i,m}/2) = a\eta(H)/Q$ for each $i$ and $m$.
Combining \eqref{eq:robust_rank_vs_aim_a} and \eqref{eq:robust_eta_aim_a},
we get
\[ \sum_{i=1}^{\log k} \frac{\rank(W_i)}{2}
= \sum_{m=1}^{k} \sum_{i\in Q_m} f^{-1}\big(f\big(\frac{a_{i,m}}{2}\big)\big)
\leq Q\cdot f^{-1}\big(\frac{a\eta(H)}{Q}\big) \qedhere
\]
\end{proof}

Using the proposition above in Equation~\eqref{eq:robust_rank_vs_QH}
in the proof of Lemma~\ref{lem:robust}
gives us the following smoothness bound:
\begin{lemma}
Denote $X_i = H_{i-1} \cup H_i^- \cup H_i$. During the phase $H_i$, $\robust_f$
receiving at most one prediction in $a$ time steps incurs the cost
\[ \E[\Delta^A(H_i)] \leq O(1) f^{-1}\bigg(\frac{a \eta(H_i)}{\Delta^B(X_i)}\bigg)
	\Delta^B(X_i).
\]
\end{lemma}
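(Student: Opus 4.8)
The plan is to replay the derivation of \eqref{eq:robust_f} in the proof of Lemma~\ref{lem:robust} almost verbatim, substituting Proposition~\ref{prop:robust_rank_vs_QH_a} for Proposition~\ref{prop:robust_rank_vs_QH}. The reason this works is that all the other ingredients of that proof concern only the cache dynamics of $\robust_f$ and the costs paid by $\belady$, and are insensitive to which arrivals are selected into $F$: Observation~\ref{lem:clean_pages} bounding $c(H)$ by $\Delta^B(H^-)+\Delta^B(H)$; Lemma~\ref{lem:M_sync} bounding the cost of synchronizing with the marking cache by $\Delta^B(H_{i-1}\cup H_i^-)$; Lemma~\ref{lem:cost_per_window} and Lemma~\ref{lem:cost_per_win_rank} expressing the expected per-window cost through window ranks; and Lemma~\ref{lem:rankdelta} together with estimate \eqref{eq:Q_estimate} bounding $Q=\sum_m|Q_m|$ by $2\Delta^B(H)+c(H)$. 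Only the passage from ranks to prediction error weakens: with $a$-separated queries each window contains fewer query arrivals, so the relation $|F\cap W_i|\,\rank(W_i)\le\eta(W_i)$ (Lemma~\ref{lem:error_to_rank}) buys less. Proposition~\ref{prop:robust_rank_vs_QH_a} is exactly the accounting that converts this loss into the factor $a$, yielding $\sum_{i=1}^{\log k}\rank(W_i)\le 2Q\,f^{-1}\!\big(a\,\eta(H)/Q\big)$ in place of the version without $a$.

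Given this, the assembly is the same short computation as before. Writing $H$, $X$ for $H_i$ and $X_i=H_{i-1}\cup H_i^-\cup H_i$ (so the index $i$ is free to range over windows), Lemma~\ref{lem:cost_per_win_rank} and Lemma~\ref{lem:M_sync} give
\[
\E[\Delta^A(H)]\le \Delta^B(X)+3\sum_{i=1}^{\log k+1}\rank(W_{i-1})+3c(H)+6\Delta^B(H);
\]
Observation~\ref{lem:clean_pages} gives $c(H)=O(\Delta^B(X))$ and hence, via \eqref{eq:Q_estimate}, $Q\le 2\Delta^B(H)+c(H)=O(\Delta^B(X))$, while Proposition~\ref{prop:robust_rank_vs_QH_a} bounds the rank sum; therefore
\[
\E[\Delta^A(H)]\le O(1)\,\Delta^B(X)+O(1)\,Q\,f^{-1}\!\Big(\tfrac{a\,\eta(H)}{Q}\Big).
\]
To finish, observe that $x\mapsto x\,f^{-1}(c/x)$ (any fixed $c\ge 0$) is the perspective of the concave map $f^{-1}$ with $f^{-1}(0)=0$, hence nondecreasing in $x$; since $Q=O(\Delta^B(X))$ and $f^{-1}$ is increasing, $Q\,f^{-1}(a\eta(H)/Q)=O(1)\,\Delta^B(X)\,f^{-1}(a\eta(H)/\Delta^B(X))$, and the additive $O(1)\Delta^B(X)$ term is absorbed into the competitive-ratio constant. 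This is the stated bound.

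The step that requires genuine care — the only essentially new point beyond Section~\ref{sec:robust} — is verifying that the set $F$ of query arrivals can be chosen with (i) consecutive queries separated by at least $a$ time steps, globally across windows and across phases, and (ii) $|F\cap W_i|\ge\min\{f(i)-f(i-1),\,|W_i|/a\}$, which is precisely what Proposition~\ref{prop:robust_rank_vs_QH_a} consumes through its threshold window $i^*$. Because $|W_i|=k/2^i$ decreases geometrically, for $i<i^*$ one can spread $f(i)-f(i-1)$ queries inside $W_i$ with pairwise gaps $\ge a$, and for $i\ge i^*$ one simply queries every $a$-th arrival; the cross-boundary cases (between consecutive windows, and between the last query of $H_{i-1}$ and the first of $H_i$) cost at most one skipped query apiece and are absorbed in the slack of the $f(\log k)+1$ count. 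One must also note that running the whole input as a chain of $\robust_f$ phases, rather than as a single phase invoked from $\follower$ as in Section~\ref{sec:FnR}, changes nothing here: the per-phase bound above is summed over the disjoint phases $H_i$ with disjoint $H_i^-$, exactly as in the proof of Theorem~\ref{thm:FnR}, and no "ancient page" accounting arises because every phase is reinitialised from the marking cache in Line~\ref{alg:robust_M}.
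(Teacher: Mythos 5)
Your proof is correct and follows the same route as the paper's: the paper's entire argument for this lemma is to substitute Proposition~\ref{prop:robust_rank_vs_QH_a} for Proposition~\ref{prop:robust_rank_vs_QH} in the derivation of \eqref{eq:robust_f} from Lemma~\ref{lem:robust}, with everything else unchanged. You additionally make explicit two steps the paper leaves implicit --- the monotonicity of $x\mapsto x\,f^{-1}(c/x)$ needed to pass from $Q$ to $\Delta^B(X_i)$ inside $f^{-1}$, and the feasibility of an $a$-separated choice of $F$ with $|F\cap W_i|\ge\min\{f(i)-f(i-1),\,|W_i|/a\}$ --- and both check out.
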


Theorem~\ref{thm:FnR_a} follows from
summation of the bound above over all phases of $\robust$
and concavity of $f^{-1}$, as in proof of Theorem~\ref{thm:FnR}.
\end{fullversion}

\section{Experiments}
\label{sec:Experiments}
We perform an empirical evaluation of our caching algorithm $\FR$
on the same datasets and with the same predictors as the previous
works \citep{LykourisV21,AntoniadisCE0S20,Im0PP22}.
We use the following datasets.

\begin{itemize}
\item BrightKite dataset \citep{BKpaper} contains data from a certain social
network. We create a separate caching instance from the data of each user,
interpreting check-in locations as pages.
We use it with cache size $k=10$
and choose instances corresponding to the first 100 users with the longest check-in
sequences requiring at least 50 page faults in the optimal policy.
\item CitiBike dataset contains data about bike trips in a bike sharing
platform \citet{citi}. We create a caching instance from each month in 2017,
interpreting starting stations of the trips as pages, and trimming length of
each instance to 25~000.
We use it with cache size $k=100$.
\end{itemize}

Some of the algorithms in our comparison use next-arrival predictions
while $\FR$ uses action predictions that can be generated from
next-arrival predictions. Therefore, we use predictors which predict the next
arrival of the requested page and convert it to action predictions.
This process was used and described by \citet{AntoniadisCE0S20}
and we use their implementation of the predictors.
Our algorithm is then provided limited access to the resulting action predictions
while the algorithm of \citet{Im0PP22} has limited access to the original
next-arrival predictions.

\begin{itemize}
\item Synthetic predictions: compute the exact next arrival time computed from
the data and add noise to this number. 
This noise comes from a log-normal distribution with the mean parameter $\mu=0$ and
the standard
deviation parameter $\sigma$. We use $\sigma \in [0, 50]$.
\item PLECO predictor proposed by \citet{PLECO}: This model estimates
the probability $p$ of a page being requested in the next time step
and we interpret this as a prediction that the next arrival of this page
will be in $1/p$ time steps. The model parameters were fitted to BrightKite
dataset and not adjusted before use on CitiBike.
\item POPU -- a simple predictor used by \citet{AntoniadisCE0S20}:
if a page appeared in $p$ fraction of the previous requests,
we predict its next arrival in $1/p$ time steps.
\end{itemize}

In our comparison, we include the following algorithms:
offline algorithm $\belady$ which we use to compute the optimal number of page
faults OPT,
standard online algorithms LRU and Marker~\citep{FiatKLMSY91},
ML-augmented algorithms using next arrival predictions
L\&V \citep{LykourisV21}, LMark and LnonMark \citep{Rohatgi20},
FtPM which, at each step, evicts an unmarked page with
the furthest predicted next arrival time, and
algorithms for action predictions FtP and T\&D \citep{AntoniadisCE0S20}.
We use the implementation of all these algorithms published by
\citet{AntoniadisCE0S20}.
We implement algorithm AQ \citep{Im0PP22}
and our algorithm $\FR$.

\textbf{Notes on implementation of $\FR$.} We follow the recommendations
in Section~\ref{sec:FnR} except that
$\follower$ switches to $\robust$ whenever its cost is $\alpha=1$ times higher
compared to $\belady$ in the same period.
With higher $\alpha$, the performance of $\FR$ approaches FtP on the
considered datasets.
With $k=10$ (BrightKite dataset), we use $F=[1,6,9]$
corresponding to $f(i) = i$. Note that, with such small $k$,
polynomial and exponential $f$ would also give a very similar $F$.
With $k=100$ (CitiBike dataset), we use exponential
$f(i) = 2^{i+1}-1$.
With $a$-separated queries,
$\follower$ uses LRU heuristic when prediction is unavailable,
and $\robust$ ignores $F$, querying the predictor at each page fault
separated from the previous query by at least $a$ time steps.

\begin{shortversion}
\begin{figure}
\resizebox{.5\textwidth}{!}{\input{bk-plot.pgf}}
\hspace{-0.015\textwidth}
\resizebox{.515\textwidth}{!}{\input{bk-tot.pgf}}
\caption{BrightKite dataset with Synthetic predictor, standard deviation at most
 0.003 and 300 resp.}
\label{fig:exp_bk}
\end{figure}
\end{shortversion}

\begin{fullversion}
\begin{figure}
\hfill\resizebox{.65\textwidth}{!}{\input{bk1.pgf}}\hfill\ %
\caption{BrightKite dataset with Synthetic predictor: competitive ratio}
\label{fig:exp_bk}
\end{figure}

\begin{figure}
\hfill\resizebox{.75\textwidth}{!}{\input{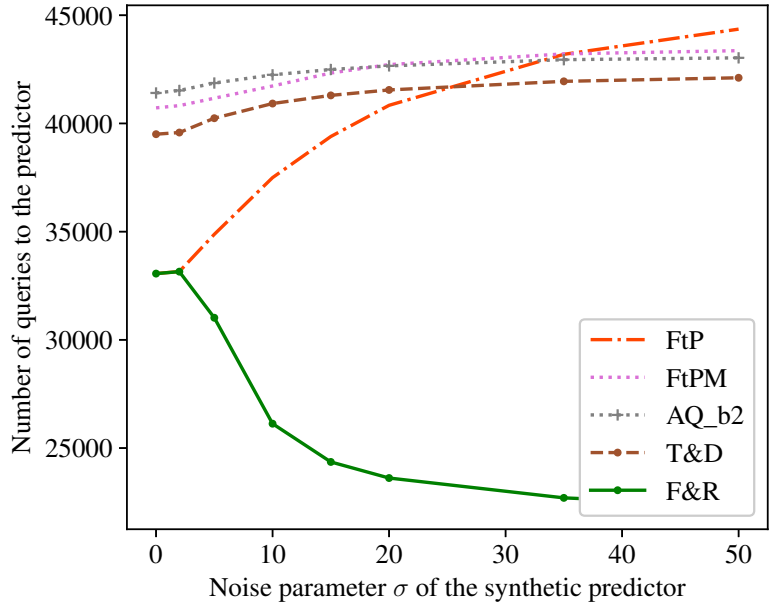}}\hfill\ %
\caption{BrightKite dataset with Synthetic predictor: number of used
predictors}
\label{fig:exp_bk_count}
\end{figure}
\end{fullversion}

\textbf{Results.}
Figures \ref{fig:exp_bk} and \ref{fig:exp_citi} contain averages of 10 independent experiments.
Figure~\ref{fig:exp_bk} shows that
the performance of $\FR$ with high-quality predictions is superior
to the previous ML-augmented algorithms except for FtP which follows the predictions
blindly and is also $1$-consistent.
With high $\sigma$, the performance of T\&D becomes better. This is true
also for $\FR$ with $F=[1..10]$, suggesting that T\&D might be more efficient in using
erroneous predictions.
\begin{shortversion}
The second plot shows the total number of times algorithms query the predictor
over all instances.
\end{shortversion}
\begin{fullversion}
Figure~\ref{fig:exp_bk_count} shows the total number of times algorithms query the predictor
over all instances.
\end{fullversion}
Response to such query is a single page missing from predictor's cache in
the case of $\FR$ and T\&D and next arrival times of $b$ pages in the case of AQ\_$b$.
Note that FtPM is equivalent to the non-parsimonious version of AQ with $b=k$.
$\FR$ makes the smallest number of queries:
with perfect predictions, it makes exactly $\OPT$ queries
and this number decreases with higher $\sigma$
as $\FR$ spends more time in $\robust$.

\begin{shortversion}
\begin{figure}
\begin{tabular}{lcccccccccc}
\toprule
Predictor
	& Marker
	& FtP
	& AQ\_b8
	& FtPM\_a1
	& FtPM\_a5
	& F\&R\_a1
	& F\&R\_a5
	& F\&R\_a20
\\
\midrule
POPU
	& 1.861
	& 1.739
	& 1.782
	& 1.776
	& 1.833
	& 1.800
	& 1.802
	& 1.803
\\
PLECO
	& 1.861
	& 2.277
	& 1.875
	& 1.877
	& 1.867
	& 1.878
	& 1.879
	& 1.879
\\
\bottomrule
\end{tabular}
\caption{Competitive ratios on CitiBike dataset with $k=100$,
standard deviation at most 0.001}
\label{fig:exp_citi}
\end{figure}
\end{shortversion}

\begin{shortversion}
Figure~\ref{fig:exp_citi} shows that $\FR$ performs well in the regime with
$a$-separated queries. While the performance of FtPM with POPU predictor
worsens considerably
towards Marker already with $a=5$, $\FR$ keeps its improvement over Marker
even with $a=20$.
Predictions produced by PLECO seem much less precise as suggested by
FtP with PLECO being worse than Marker and smaller number of such predictions
either improves (AQ, FtPM) or does not affect performance ($\FR$) of
considered algorithms.
Further details of our experimental results are presented in Appendix (Section 5).
\end{shortversion}
\begin{fullversion}
Figure~\ref{fig:exp_citi} shows that $\FR$ performs well in regime with
$a$-separated queries. While the performance of FtPM with POPU predictor
worsens considerably
towards Marker already with $a=5$, the performance of $\FR$ worsens only very
slowly. On CitiBike dataset, it keeps its improvement over
Marker even with $a=20$ (note that we use $k=100$ with this dataset).
Predictions produced by PLECO seem much less precise as suggested by
FtP with PLECO being worse than Marker and smaller number of such predictions
either improves (AQ, FtPM) or does not affect performance ($\FR$) of
considered algorithms.
\end{fullversion}

\begin{fullversion}
\begin{figure}
\begin{center}
\begin{tabular}{llccccccc}
\toprule
Dataset
& Predictor
	& Marker
	& F\&R\_a1
	& F\&R\_a2
	& F\&R\_a3
	& F\&R\_a5
	& F\&R\_a8
	& F\&R\_a20
\\
\midrule
CitiBike &
POPU
	& 1.862
	& 1.800
	& 1.802
	& 1.802
	& 1.802
	& 1.803
	& 1.803
\\
CitiBike &
PLECO
	& 1.862
	& 1.878
	& 1.878
	& 1.878
	& 1.879
	& 1.879
	& 1.879
\\
BrightKite &
POPU
	& 1.333
	& 1.320
	& 1.328
	& 1.332
	& 1.336
	& 1.337
	& 1.341
\\
BrightKite &
PLECO
	& 1.333
	& 1.371
	& 1.374
	& 1.376
	& 1.377
	& 1.378
	& 1.378
\\
\bottomrule
\end{tabular}

\vspace{3ex}
\begin{tabular}{llcccccccc}
\toprule
Dataset &
Predictor
	& T\&D
	& FtP
	& FtPM\_a1
	& FtPM\_a5
	& AQ\_b8
	& L\&V
	& LMark
	& LnonMark
\\
\midrule
CitiBike &
POPU
	& 1.776
	& 1.739
	& 1.776
	& 1.833
	& 1.782
	& 1.776
	& 1.780
	& 1.771
\\
CitiBike &
PLECO
	& 1.847
	& 2.277
	& 1.877
	& 1.866
	& 1.875
	& 1.877
	& 1.876
	& 1.863
\\
BrightKite &
POPU
	& 1.276
	& 1.707
	& 1.262
	& 1.306
	& 1.263
	& 1.262
	& 1.264
	& 1.266
\\
BrightKite &
PLECO
	& 1.292
	& 2.081
	& 1.341
	& 1.337
	& 1.342
	& 1.340
	& 1.337
	& 1.333
\\
\bottomrule
\end{tabular}
\end{center}
\caption{Competitive ratios with predictors POPU and PLECO}
\label{fig:exp_citi}
\end{figure}


\begin{figure}
\begin{center}
\resizebox{.65\textwidth}{!}{\input{bk2.pgf}}
\end{center}
\caption{BrightKite dataset with Synthetic predictor: competitive ratio}
\label{fig:exp_bk2}
\end{figure}

\begin{figure}
\begin{center}
\resizebox{.75\textwidth}{!}{\input{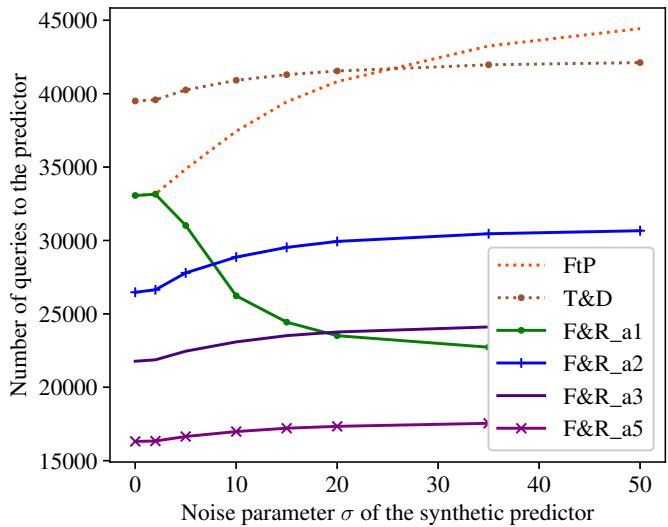}}
\end{center}
\caption{BrightKite dataset with Synthetic predictor: number of used
predictions}
\label{fig:exp_bk2_count}
\end{figure}

Figures \ref{fig:exp_bk2} and \ref{fig:exp_bk2_count}
complements the comparison of $\FR$ to existing ML-augmented algorithms
for paging by including those omitted in Figure~\ref{fig:exp_bk}.
With smaller $\sigma$, it again demonstrates the better consistency of $\FR$.
With higher $\sigma$, $\FR$ achieves performance comparable to both L\&V and
LnonMark, while using a smaller number of predictions.
We have decided not to include LMark because its performance as well as
number of predictions used were almost the same as of LnonMark.
Note that, in the case of both algorithms, the number of used predictions
is equal to the number of clean arrivals and therefore it does not change
with the prediction error.


Figures \ref{fig:exp_bk_a} and \ref{fig:exp_bk_a_count}
shows performance of $\FR$ in regime with $a$-separated queries for
different values of $a$. It shows a significant loss of consistency
already with $a=2$ compared to $a=1$. However, with higher noise parameter
$\sigma$, the difference in performance does not seem large.
In this regime, the focus is on the gap between predictor queries rather
than the total number of queries: $\FR$ queries a predictor at each
page fault separated from previous query by at least $a$ time steps.
However, we decided to include also the plot of the total number of
queries (Figure~\ref{fig:exp_bk_a_count}) because
it shows that
with $\sigma > 20$, $\FR$ with $a=1$ uses a smaller number of predictions
than with $a=2$ and even $a=3$, while maintaining a better performance.
This suggests
that the freedom to choose the right moment for a query might be more important
for the performance than the total number of used predictions.

\begin{figure}
\hfill\resizebox{.75\textwidth}{!}{\input{bk_a.pgf}}\hfill\ %
\caption{BrightKite dataset with Synthetic predictor: competitive ratio}
\label{fig:exp_bk_a}
\end{figure}


\begin{figure}
\begin{center}
\resizebox{.65\textwidth}{!}{\input{bk_a_count.pgf}}
\end{center}
\caption{BrightKite dataset with Synthetic predictor: number of used
predictions}
\label{fig:exp_bk_a_count}
\end{figure}

%
%

\begin{figure}
\hfill\resizebox{.75\textwidth}{!}{\input{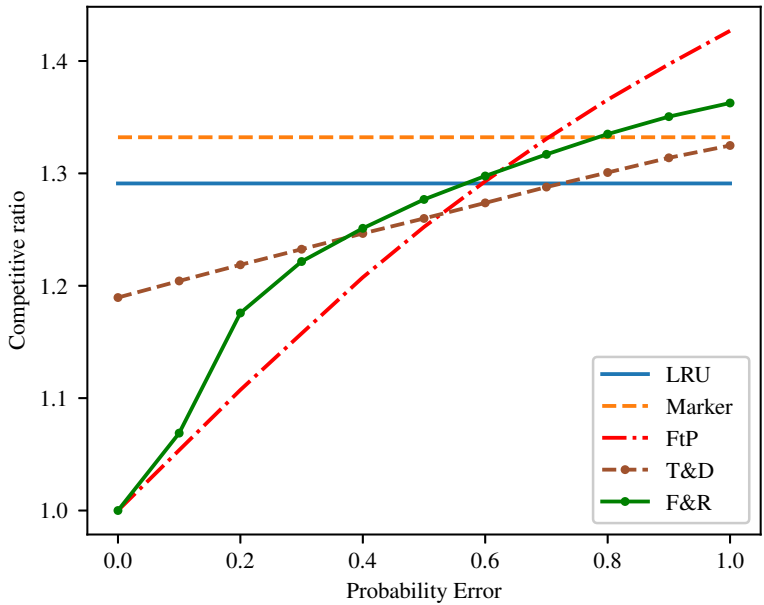}}\hfill\ %
\caption{BrightKite dataset with probabilistic predictor: competitive ratio}
\label{fig:prob_bk_10}
\end{figure}

Figure \ref{fig:prob_bk_10} shows experiments with a probabilistic predictor on the BrightKite dataset. In this setting, we consider a predictor that evicts the page requested furthest in the future with a given probability $1-p$. On the other hand, it evicts a random page with probability $p$. The horizontal axis corresponds to the probability $p$. We can observe that better consistency of our algorithm compared to T\&D is visible for $p$ up to $0.4$.

Each plot and table contains averages of 10 independent experiments.
We have seen standard deviations at most 0.004 in the case of figures
\ref{fig:exp_bk}, \ref{fig:exp_bk2}, \ref{fig:exp_bk_a};
0.0015 for Figure~\ref{fig:exp_citi} on CitiBike dataset and
0.0025 on BrightKite dataset, and
300 for figures
\ref{fig:exp_bk_count}, \ref{fig:exp_bk2_count}, \ref{fig:exp_bk_a_count},
counting numbers of used predictions.
\end{fullversion}

\section{Conclusions}
We present algorithms for MTS and caching with action predictions
working in the setting
where the number of queries or the frequency of querying the predictor
are limited.
We have shown that one can achieve theoretical as well as empirical performance 
comparable to the setting with unlimited access to the predictor,
possibly enabling usage of precise but heavy-weight prediction models
in environments with scarce computational resources.

\begin{shortversion}
\section*{Reproducibility Statement}

The appendix contains a full version of our paper which includes proof
of all the theorems and lemmas. We provide textual description of the
implementation of our algorithm in Section~\ref{sec:Experiments}. The code of
our implementation can be found at
\url{https://github.com/marek-elias/caching/}
\end{shortversion}

\begin{fullversion}
\section{Lower bounds}
\label{sec:LBs}

\subsection{Caching}

Proof of the following proposition can be found in \citep[Theorem 4.4]{BEY98}.

\begin{proposition}[\citep{FiatKLMSY91}]
\label{prop:std_caching_LB}
For any randomized algorithm $\ALG$ for caching there is an input instance
on universe of $k+1$ pages
such that the expected cost of $\ALG$ is more than $\ln k$ times the cost
of the offline optimal solution.
\end{proposition}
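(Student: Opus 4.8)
The statement is the classical randomized paging lower bound of \citet{FiatKLMSY91}, and the plan is to recover it via Yao's minimax principle. By Yao's principle for the competitive ratio against an oblivious adversary, it suffices to exhibit a distribution $\mu$ over request sequences such that for \emph{every} deterministic online algorithm $A$ the quantity $\E_\mu[\cost(A)]/\E_\mu[\cost(\OPT)]$ is strictly larger than $\ln k$; since we will let the sequence length $n\to\infty$ and $\E_\mu[\cost(\OPT)]\to\infty$, the additive constant in the definition of competitiveness becomes negligible and the bound on the ratio transfers directly to the randomized competitive ratio.

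For $\mu$ I would take the request sequence of length $n$ in which each request is drawn independently and uniformly at random from a fixed universe of $k+1$ pages. The online lower bound is then immediate: at every step $A$ holds at most $k$ pages, so at least one of the $k+1$ pages (the ``hole'') is absent from its cache, and since the next request is uniform over the $k+1$ pages and independent of everything $A$ has seen, $A$ faults with probability at least $1/(k+1)$; by linearity $\E_\mu[\cost(A)]\ge n/(k+1)$.

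The work is in bounding $\OPT$ from above. I would partition the sequence into phases, i.e., maximal blocks containing at most $k$ distinct pages, exactly as in the marking analysis. Each phase uses exactly $k$ of the $k+1$ pages, and the request that would make it the $(k+1)$-st distinct page is the first request of the next phase; hence the page sets of two consecutive phases differ in exactly one element, Belady's rule faults exactly once per phase transition, and $\cost(\OPT)=(\text{number of phases})-1$ up to an $O(k)$ start-up term. A coupon-collector computation gives the expected phase length: reaching $k$ distinct pages costs $1+(k+1)(H_k-1)$ requests in expectation (with $H_k=1+\tfrac12+\dots+\tfrac1k$), and once the phase already holds $k$ distinct pages each further request is a repeat with probability $k/(k+1)$, contributing another $k$ requests before the overflow, so $\E[L]=(k+1)H_k$. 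By Wald's identity the expected number of phases in the length-$n$ sequence is $n/((k+1)H_k)\,(1+o(1))$, whence $\E_\mu[\cost(\OPT)]=\tfrac{n}{(k+1)H_k}(1+o(1))$.

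Combining, $\E_\mu[\cost(A)]/\E_\mu[\cost(\OPT)]\to H_k$ as $n\to\infty$, and since $H_k=\sum_{j=1}^k \tfrac1j\ge\int_1^{k+1}\tfrac{dx}{x}=\ln(k+1)>\ln k$, the ratio exceeds $\ln k$ by a constant, which rules out any deterministic algorithm on $\mu$ achieving ratio $\le\ln k$ with an arbitrary additive constant once $n$ is large; Yao's principle then yields the claim. The main obstacle is the $\OPT$ side: one must justify carefully that in this particular random model the phase decomposition really costs the offline optimum exactly one fault per phase, and that the per-phase expectations combine (via renewal/Wald rather than naive multiplication of expectations) into the stated bound on $\E[\cost(\OPT)]$. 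Everything else is routine, and a fully worked-out version appears in \citet[Theorem 4.4]{BEY98}.
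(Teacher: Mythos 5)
Your proposal is correct and is essentially the argument the paper itself relies on: the paper does not prove Proposition~\ref{prop:std_caching_LB} but cites \citep[Theorem 4.4]{BEY98}, whose proof is exactly your construction — uniform random requests over $k+1$ pages, the ``hole'' argument giving the online algorithm an expected cost of $1/(k+1)$ per request, the phase decomposition in which the offline optimum faults once per phase of expected length $(k+1)H_k$, and Yao's principle to pass from the distributional bound to an instance witnessing a ratio exceeding $H_k \ge \ln(k+1) > \ln k$. The one-sentence summary the paper gives (``marking phases, each with a single clean page, where the optimum pays $1$ and the online algorithm pays at least $\ln k$'') is precisely your per-phase accounting, so no further comparison is needed.
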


For a given algorithm,
it constructs an instance consisting of marking phases, each
with a single clean page such that the optimal algorithm pays 1
and the online algorithm pays at least $\ln k$.

Imagine an algorithm receiving at most $0.5\OPT$ predictions during
this instance. Then, there must be at least $0.5\OPT$ phases during
which the algorithm receives no prediction. Its cost is at least $\ln k$
in each such phase, giving total cost $0.5\OPT\ln k$.

Theorem~\ref{thm:FnR_LB} is implied by the following more general statement
with $c=1$ and $d=0$.

\begin{theorem}
Let $c \geq 1$ and $d \geq 0$ be constants.
Any $(cf^{-1}(\eta) + d)$-smooth algorithm for caching with action predictions
has to use at least $f(c^{-1}\ln k - d)\OPT$ predictions.
\end{theorem}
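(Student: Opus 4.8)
The plan is to prove the general statement and recover Theorem~\ref{thm:FnR_LB} as the case $c=1$, $d=0$, by adapting the classical $\Omega(\log k)$ lower bound of \citet{FiatKLMSY91} (Proposition~\ref{prop:std_caching_LB}) rather than reproving it, this time tracking the constants $c$ and $d$. Recall that on a universe of $k+1$ pages that construction produces, for every randomized algorithm, an instance made of $N$ consecutive marking phases, each with exactly one clean page, so that $\OPT=N$ while the algorithm pays $H_k\ge\ln k$ in expectation \emph{per phase}. I would refine this: inside a phase partition the $k$ arrivals into geometrically shrinking windows $W_1,\dots,W_{\log k}$ of sizes $k/2,k/4,\dots,1$, exactly as in the definition of $\robust_f$, and check that the classical construction forces expected cost $\Theta(1)$ on an oblivious algorithm inside each individual window, these $\Theta(\log k)$ contributions summing to $H_k$. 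Informally, the lower-bound instance is the ``dual'' of the window decomposition used by our algorithm.

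The heart of the argument is a quantitative window lemma: if, during the $W_i$-part of such a phase, the algorithm is given predictions consisting of fewer than $f(i)-f(i-1)$ reported indices and of total error below a matching threshold, then it still pays $\Omega(1)$ inside $W_i$ in expectation. The intuition is that inside $W_i$ the content Belady carries about the phase is what lets one save cost there, and extracting it either costs the adversary that many correct indices reported by the predictor, or — if the adversary instead feeds inconsistent indices — a proportional amount of prediction error. Summing this over $i=1,\dots,\log k$ and using convexity of $f$ together with $f(0)=0$ (so that $\sum_{i\le m}\bigl(f(i)-f(i-1)\bigr)=f(m)$), any algorithm wishing to ``save'' windows worth more than $c^{-1}\ln k-d$ in a phase must receive at least $f\!\left(c^{-1}\ln k-d\right)$ predicted indices in that phase, hence at least $f\!\left(c^{-1}\ln k-d\right)\OPT$ over the $N=\OPT$ phases.

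To close, I would invoke $(cf^{-1}(\eta)+d)$-smoothness with the error budget set to $\eta^\star:=f\!\left(c^{-1}\ln k-d\right)$, spread across the phases: smoothness guarantees competitive ratio at most $c\,f^{-1}(\eta^\star)+d=\ln k-(c-1)d\le\ln k$; but if the algorithm used strictly fewer than $f(c^{-1}\ln k-d)\OPT$ predicted indices, the window lemma forces expected cost above $\ln k\cdot\OPT$ up to lower-order additive terms, which are absorbed by the additive constant $\alpha$ in the definition of competitiveness once $N=\OPT$ is taken large enough — a contradiction. The constants $c\ge 1$, $d\ge 0$ enter only through the shift of the threshold from $\ln k$ to $c^{-1}\ln k-d$.

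The hard part will be making the window lemma precise under adaptivity: in the FiatKLMSY instance the adversary selects the clean page, and hence Belady's eviction and the informative content of an action prediction, only after observing the algorithm's randomized configuration, whereas the algorithm may query the predictor at any time step and trade query length against query frequency. One must therefore argue, via a Yao-style potential argument over the windows, that no query schedule totalling fewer than $f(i)-f(i-1)$ indices in $W_i$ with bounded induced error can push the expected $W_i$-cost below a constant, and that these per-window deficits compose additively across windows and phases. The remaining pieces — the geometric window bookkeeping, the convexity manipulation, and the substitution into the smoothness bound — are routine.
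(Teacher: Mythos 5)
Your proposal has a genuine gap: the entire argument rests on the ``quantitative window lemma'' (few predicted indices and low error inside $W_i$ forces $\Omega(1)$ expected cost in $W_i$), and you neither prove it nor give a credible route to proving it --- you explicitly defer it as ``the hard part.'' This lemma is the whole theorem in disguise: it asserts a per-window trade-off between prediction volume, prediction error, and cost against an adaptive query schedule, which is exactly the kind of statement the lower bound is supposed to establish. Without it, nothing in the write-up is load-bearing. There is also a soft spot at the end: you conclude a contradiction from ``expected cost above $\ln k\cdot\OPT$ up to lower-order additive terms'' versus a smoothness bound of exactly $\ln k-(c-1)d\le\ln k$; with $d=0$ these two quantities coincide, so you would need a strict constant-factor gap that your sketch does not supply.

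The paper's proof avoids all of this with one observation you are missing. Fix the algorithm, hand it an \emph{arbitrary} prediction for every time step \emph{before} the adversary chooses the instance. The algorithm is now an ordinary randomized online algorithm, so Proposition~\ref{prop:std_caching_LB} yields an instance on $k+1$ pages with $\E[\ALG]>\OPT\ln k$. The punchline is that on a universe of $k+1$ pages \emph{every} prediction --- however it was generated --- has error at most $1$, since any $k$-page cache differs from $\OPT$'s cache in at most one page; hence $\eta$ is at most the number of predictions. Smoothness then forces $\eta/\OPT\ge f(\E[\ALG]/(c\,\OPT)-d)>f(c^{-1}\ln k-d)$, and the bound on the number of predictions follows immediately. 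No window decomposition, no per-window information argument, and no Yao-style potential is needed; the only structural facts used are the classical lower bound as a black box and the trivial error bound per prediction. I would recommend abandoning the window lemma and restructuring your proof around this ``useless predictions, each of error at most one'' device.
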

\begin{proof}
Consider a fixed algorithm accepting action predictions.
Choose $T$ long enough, an arbitrary prediction for each time step
$t=1, \dotsc, T$,
and give them to the algorithm at time $0$. Having the predictions already,
this algorithm becomes a standard randomized algorithm which does not
use any further predictions. We use Proposition~\ref{prop:std_caching_LB}
to generate an instance such that $\E[\ALG] \geq \OPT \ln k$,
where $\ALG$ denotes the cost of the algorithm with predictions
generated in advance.
It is clear that these predictions, generated before the adversary has
chosen the input instance, are useless, not helping the algorithm
to surpass the worst-case bounds.
However, since the universe of pages has size only $k+1$, each of the
predictions can differ from an optimal algorithm by at most one page.

If we want to have $\frac{\E[\ALG]}{\OPT} \leq cf^{-1}(\frac{\eta}{\OPT}) + d$,
then we need
\[ \frac{\eta}{\OPT} \geq f\bigg(\frac{\E[\ALG]}{c\OPT} - d\bigg)
	> f\bigg(\frac{\ln k}{c} - d\bigg). \]
Since every prediction has error at most 1,
we need to receive at least $\eta \geq f(c^{-1}\ln k -d)\OPT$ predictions.
\end{proof}

\subsection{MTS}

\citet{AntoniadisCE0S20} showed the following lower bound on smoothness
of algorithms for general MTS with action predictions.

\begin{proposition}[\citet{AntoniadisCE0S20}]
For $\eta\ge 0$ and $n\in\N$, every deterministic (or randomized) online
algorithm for MTS on the $n$-point uniform metric with access to an action
prediction oracle with error at least $\eta$ with respect to some
optimal offline algorithm has competitive ratio
$\Omega\left(\min\left\{\alpha_n, 1+\frac{\eta}{\OPT}\right\}\right)$, where
$\alpha_n=\Theta(n)$ (or $\alpha_n=\Theta(\log n)$) is the optimal competitive
ratio of deterministic (or randomized) algorithms without prediction.
\end{proposition}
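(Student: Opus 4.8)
I would force the two terms of the minimum with one family of instances on the $n$-point uniform metric: a \emph{segmented} copy of the classical no-prediction hard instance in which every prediction is fixed before the request sequence (hence useless), followed by a padding suffix. The padding suffix is a run of requests whose cost function is $0$ on a fixed state and $+\infty$ on every other state; it pins the offline optimum to that state without affecting the cost of any sensible online algorithm, so by emitting, during the suffix, a prediction pinned to a different state one inflates the total prediction error by $1$ per step at no other cost. Hence, once an instance with the right ratio is built, $\OPT$ can be set at any scale and the error brought up to exactly $\eta$; the only real work is the ratio, whose single knob is the \emph{depth} $d$ of a segment.

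The instance consists of $N$ disjoint segments, each a length-$\Theta(d)$ piece of the classical uniform-metric lower bound of \citet{BorodinLS92} on a block of $\Theta(d)$ fresh states, scaled so that the offline optimum pays $\Theta(1)$ inside the segment; the segment's committed prediction is pinned to a state that its first cost function charges, so the optimum avoids that state and the prediction is wrong at each of the $\Theta(d)$ steps, contributing error $\Theta(d)$. For a deterministic algorithm this is the adaptive ``chase'' that forces ratio $\Theta(d)$ using $d+1$ states, so $d$ ranges up to $\Theta(n)=\alpha_n$. For a randomized algorithm it is a recursive random-halving gadget on $2^d$ states: at each step the cost function charges every state outside a uniformly random half of the currently surviving states, so after $d$ steps a single state survives, the offline optimum sits there and pays $0$ inside the segment, while any algorithm is forced to move with constant probability at each of the $d$ steps and so pays $\Omega(d)$ per segment; this uses $2^d\le n$ states, so $d$ ranges up to $\Theta(\log n)=\alpha_n$, and the randomized bound follows from Yao's principle applied to the induced distribution over instances. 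In all cases, over the $N$ segments $\ALG\ge\Omega(dN)$, $\OPT=\Theta(N)$, and the error before padding is $\Theta(dN)$. Taking $N=\Theta(\OPT)$ and $d=\min\{\alpha_n,\lceil\eta/\OPT\rceil\}$ makes the pre-padding error $\Theta(dN)=\Theta(\min\{\alpha_n\OPT,\eta\})$ at most $\Theta(\eta)$, so it can be padded up to exactly $\eta$ while $\OPT$ grows by $O(1)$; this gives competitive ratio $\ge\Omega(d)=\Omega(\min\{\alpha_n,1+\eta/\OPT\})$. (When $1+\eta/\OPT\ge\alpha_n$ this is the classical $\Omega(\alpha_n)$ bound with maximal depth; when $1+\eta/\OPT<\alpha_n$ it is the new $\Omega(1+\eta/\OPT)$ bound with shallow segments.)

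The main obstacle is the randomized segment gadget: on $2^d$ states and in length \emph{only} $\Theta(d)$, it must simultaneously keep the offline optimum's cost $O(1)$, charge every deterministic algorithm an expected $\Omega(d)$ no matter where it positions itself (so that predictions fixed in advance cannot help), and have length $\Theta(d)$ rather than $\Theta(2^d)$, so the per-segment error stays $\Theta(d)$ and the error-to-$\OPT$ accounting yields exactly the exponent $1$ in $1+\eta/\OPT$. The recursive-halving cost functions are built to meet all three requirements at once, and checking that a deterministic algorithm cannot dodge the constant per-step movement charge --- it sees each cost function before it moves, but which half survives is fresh randomness --- is the technical heart, and also what caps the randomized bound at $\Theta(\log n)$ rather than $\Theta(n)$. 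The deterministic segment and the padding are routine by comparison.
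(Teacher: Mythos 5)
The paper does not actually prove this proposition: it is imported verbatim from \citet{AntoniadisCE0S20} and used as a black box to derive the $a$-separated version, so there is no in-paper argument to compare yours against. Judged on its own, your reconstruction is essentially the right one, and you correctly identify the one non-routine ingredient: the randomized gadget must force ratio $\Omega(d)$ in only $\Theta(d)$ requests (not the $\Theta(2^d)$ or coupon-collector length of the textbook constructions), because on the uniform metric each request contributes at most $1$ to the prediction error, so the per-segment error budget equals the segment length and must match the per-segment forced cost to get the exponent $1$ in $1+\eta/\OPT$. The recursive-halving cost functions achieve this: conditioned on the history, the surviving half is independent of a deterministic algorithm's current state, so each step costs $\ge 1/2$ in expectation, $\OPT$ teleports once to the unique survivor, and Yao transfers the bound. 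The deterministic chase truncated at $d$ steps on $d+1$ states, the committed (hence information-free) predictions, and the zero-cost padding suffix that tops the error up to $\eta$ are all sound.

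One concrete detail as written would break the parameter setting: you put each segment on a block of \emph{fresh} states. With $N$ segments that requires $n\ge N\cdot 2^d$ (resp.\ $N(d+1)$) points, which caps $N$, hence $\OPT$ and $\eta$, and in particular prevents taking $\OPT\to\infty$ for fixed $n$ --- which you need so that the additive constant in the competitive-ratio definition is absorbed. The fix is immediate: reuse the \emph{same} block of $\min\{n,2^d\}$ (resp.\ $d+1$) states in every segment; both gadgets are oblivious to the algorithm's starting position within the block, and $\OPT$ still pays $O(1)$ per segment to relocate. Two smaller points worth tightening: the per-segment error should be made deterministic (e.g.\ always predict a reserved point that is charged in every step, so each segment contributes exactly its length to $\eta$), otherwise ``padding up to exactly $\eta$'' depends on the random instance; and when $d=\lceil\eta/\OPT\rceil$ the pre-padding error is only $\Theta(\eta)$, not $\le\eta$, so either rescale $d$ by a constant or settle for error $\Theta(\eta)$, which suffices for an $\Omega(\cdot)$ statement.
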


We use this proposition to prove the following theorem from which
Theorem~\ref{thm:mts_LB} directly follows.

\begin{theorem}
For $\eta\ge 0$ and $n\in\N$, every deterministic (or randomized) online
algorithm for MTS on the $n$-point uniform metric with access to an action
prediction oracle at most once in $a$ time steps with error at least $\eta$ with
respect to some
optimal offline algorithm has competitive ratio
$\Omega\left(\min\left\{\alpha_n, 1+\frac{a\eta}{\OPT}\right\}\right)$,
where
$\alpha_n=\Theta(n)$ (or $\alpha_n=\Theta(\log n)$) is the optimal competitive
ratio of deterministic (or randomized) algorithms without prediction.
\end{theorem}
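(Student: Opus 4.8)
The plan is to reduce the $a$-separated setting to the known lower bound of \citet{AntoniadisCE0S20} by a time-dilation (blow-up) argument. Given an instance $I$ on the $n$-point uniform metric together with an action-prediction oracle of error $\eta$ that witnesses the $\Omega(\min\{\alpha_n, 1+\eta/\OPT\})$ bound against algorithms querying the oracle \emph{every} step, I would construct an instance $I'$ in which each cost function $\ell_t$ of $I$ is replaced by a block of $a$ consecutive cost functions: the first copy is $\ell_t$ and the remaining $a-1$ copies are the all-zero cost function. The predicted state for the whole block is set to the state $p_t$ predicted by the original oracle for step $t$, and the reference optimal offline algorithm for $I'$ simply stays put during each block at the location it occupied for the corresponding step of $I$.

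The key steps, in order, are as follows. First I would check that padding with zero cost functions does not change the optimum: $\OPT(I') = \OPT(I)$, since any solution for $I'$ can be made to move only at block boundaries without increasing cost (moves inside a block can be postponed, as the intermediate cost functions are zero), and conversely any solution for $I$ lifts to one for $I'$ of the same cost. Second, I would observe that an algorithm $\ALG'$ for $I'$ that is allowed to query the prediction oracle once per $a$ steps receives exactly one prediction per block, namely $p_t$ — so it has access to precisely the information that a step-by-step algorithm on $I$ would have. Hence from $\ALG'$ I can extract an algorithm $\ALG$ for $I$ (record the state of $\ALG'$ at the end of each block) whose cost is at most $\cost(\ALG'(I'))$, because the zero cost functions are free and the triangle inequality makes the net movement of $\ALG$ within a block no larger than the total movement of $\ALG'$. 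Third — and this is the one point needing care — I would control the prediction error. The error of the padded oracle on $I'$, measured against the stay-put reference algorithm, is $\sum_t a\cdot d(p_t, o_t) = a\eta$ at first glance if one counts every copy; but in fact each block contributes $a$ times the per-step error $d(p_t,o_t)$ only if one insists on counting the prediction at every time step. Since the algorithm in the $a$-separated model only \emph{receives} one prediction per block, the natural accounting is that the total error of \emph{received} predictions is $\eta$ while the instance has been stretched by a factor $a$; the lower bound of \citet{AntoniadisCE0S20} applied to $I'$ then yields competitive ratio $\Omega(\min\{\alpha_n, 1+\eta/\OPT(I')\}) = \Omega(\min\{\alpha_n, 1+a\eta/(a\,\OPT(I))\})$, and re-expressing the error budget per original step gives the claimed $\Omega(\min\{\alpha_n, 1+a\eta/\OPT\})$.

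The main obstacle I anticipate is exactly this bookkeeping of the error parameter: one has to be scrupulous about whether $\eta$ refers to the total error over all time steps (in which case the blow-up multiplies it by $a$) or the total error of the predictions actually delivered to the algorithm (in which case the blow-up leaves it fixed but multiplies $\OPT$ and the horizon). The cleanest route is to fix the convention used in Theorem~\ref{thm:mts_LB} — error of the $T/a$ delivered predictions — run the \citet{AntoniadisCE0S20} construction with error budget $\eta$ on the \emph{unpadded} instance, then pad to get $\OPT$ scaled and argue that the adversary has effectively only $\eta/a$-worth of error per original step available, so the achievable ratio is $\Omega(1 + a\eta/\OPT)$ up to the cap $\alpha_n$. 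Finally I would note that $\alpha_n$ is unaffected: padding with zero functions cannot help a prediction-free algorithm, so the worst-case competitive ratio on the uniform metric is still $\Theta(n)$ (deterministic) or $\Theta(\log n)$ (randomized), and the $\min$ with $\alpha_n$ carries over verbatim. The deterministic and randomized cases are handled identically since the reduction is oblivious to the algorithm's internal randomness.
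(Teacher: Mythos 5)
Your reduction goes in the wrong direction and cannot produce the factor $a$. Padding each cost function of $I$ with $a-1$ zero cost functions makes the $a$-separated model on $I'$ \emph{exactly as powerful} as the every-step model on $I$: the algorithm receives one prediction per real request, $\OPT(I')=\OPT(I)$, and the total error of delivered predictions is $\eta$. So the only bound this reduction can transfer is the every-step bound $\Omega\left(\min\left\{\alpha_n,\,1+\eta/\OPT\right\}\right)$, which is weaker than the claim by a factor of $a$ in the error term. The step where you write $1+\eta/\OPT(I') = 1+a\eta/(a\,\OPT(I))$ is an algebraic identity, not a strengthening, and the later suggestion to ``pad to get $\OPT$ scaled'' contradicts your own (correct) observation that zero-padding leaves $\OPT$ unchanged. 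The factor $a$ must come from the adversary forcing the algorithm to serve $a$ cost-bearing requests per delivered prediction, so that one unit of prediction error blinds the algorithm for $a$ requests' worth of cost; zero-padding does precisely the opposite by diluting the action to one real request per query.

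The paper's proof keeps the original request density and instead augments the uniform metric with an extra point $p_\infty$ that carries cost $+\infty$ at every step. Given an $a$-separated predictor whose delivered predictions have total error $\eta$, it completes them to an every-step prediction sequence by inserting $p_\infty$ at all non-query times. Each inserted prediction has error $1$ (uniform metric), and since each delivered prediction also has error at most $1$ we have $\eta \le T/a$, hence the completed error satisfies $\eta' \ge \eta + (T - T/a) \ge a\eta$. Applying the every-step lower bound of \citet{AntoniadisCE0S20} with parameter $\eta'$ then yields $\Omega\left(\min\left\{\alpha_n,\,1+a\eta/\OPT\right\}\right)$. If you want to repair your write-up, you should replace the zero-padding by this completion argument (or, equivalently, design the instance so that $a$ adversarial requests fall between consecutive queries), rather than trying to reinterpret the error bookkeeping on the padded instance.
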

\begin{proof}
We extend the $(n-1)$-point uniform metric from the proposition above by a
single point $p_\infty$
whose cost will be $+\infty$ at each time step, ensuring the optimal
algorithm will never be located there.
Consider a fixed algorithm and
a predictor producing at most one prediction in $a$ time steps
with the total prediction error $\eta$. By issuing prediction $p_\infty$ in
all missing time steps, we complete predictions for each time step with
error at least $\eta' \geq a\eta$.

By proposition above, the algorithm with completed predictions has
competitive ratio at least
\[ \Omega\bigg(\min\big\{\alpha_{n-1}, 1+\frac{\eta'}{\OPT}\big\}\bigg)
\geq \Omega\bigg(\min\big\{\alpha_n, 1+\frac{a\eta}{\OPT}\big\}\bigg),
\]
since $\alpha_n$ and $\alpha_{n-1}$ differ by at most a constant factor.
\end{proof}
\end{fullversion}

\begin{fullversion}
\section{FitF oracle}
\label{sec:FitF_oracle}
In this section we work with a predictor which tells us which page in
our current cache will be requested furthest in the future, we call it
a FitF page.
Note that this is not the same as the predictions considered in
Section~\ref{sec:FnR}, where we receive a page not present in $\belady$'s cache.
$\belady$ evicts a FitF page from its current cache content which may be
different from the FitF page from the current cache content of our algorithm.
Prediction error is the total number of times the predictor reports an incorrect
FitF page.

We split our algorithm into
$\follower$ and $\robust$ part. The $\follower$
(Algorithm~\ref{alg:follower_oracle}),
checks at each page fault whether $\belady$ starting at the same
time with the same cache content also has a page fault.
If yes, it evicts a page reported by the predictor. Otherwise,
it switches to the $\robust$ part (Algorithm~\ref{alg:robust_oracle}).

\begin{algorithm2e}
\caption{Follower with FitF oracle}
\label{alg:follower_oracle}
$P:=$ starting cache content\;
\ForEach{pagefault}{
        Compute $\belady$ for the sequence from the beginning of this execution
        starting with $P$\;
        \uIf{$\belady$ has page fault as well}{
                $p:=$ page in the current cache chosen by the predictor\;
                evict $p$\;
        }
        \Else{
                Run one phase of Algorithm~\ref{alg:robust_oracle} starting
                with the current cache content\;
        }
}
\end{algorithm2e}

\begin{lemma}
\label{lem:follower_oracle}
Consider one execution of Algorithm~\ref{alg:follower_oracle},
denoting $\sigma$ the request subsequence and $\varphi$ the number
of incorrect predictions received during this execution.
Algorithm~\ref{alg:follower_oracle} pays the same cost as
$\belady$ serving $\sigma$ and starting with cache content $P$.
There is a tie-breaking rule for $\belady$ such that the cache contents
of both algorithms after processing $\sigma$ differ in at most $\varphi$ pages.
\end{lemma}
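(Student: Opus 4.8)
The plan is to prove Lemma~\ref{lem:follower_oracle} by induction on the page faults processed during a single execution of Algorithm~\ref{alg:follower_oracle}, maintaining two invariants: (i) the cost paid by $\follower$ so far equals the cost paid by $\belady$ (started at the same time with cache $P$) on the prefix of $\sigma$ seen so far, and (ii) there is a tie-breaking rule for this instance of $\belady$ under which the number of pages in $\belady$'s cache but absent from $\follower$'s cache is at most the number of incorrect predictions received so far. The cost claim is almost immediate from the structure of the algorithm: $\follower$ only moves at a page fault, and at each such step the code explicitly checks whether $\belady$ (recomputed for the sequence from the start of the execution) also has a page fault. As long as we remain in $\follower$, the answer is yes, so both algorithms pay exactly $1$ at that step; the execution ends precisely when this check fails, and at that moment $\belady$ has $0$ cost for the newly requested page, so $\follower$ stops before paying. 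Combined with Observation~\ref{lem:FitF_monot}, which says the cost $\belady$ incurs on $\sigma$ until time $t$ equals the cost of $\belady$ run only on the prefix up to $t$, the cost accounting is consistent.

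The substantive work is invariant (ii). Here the idea is to argue that $\follower$ and $\belady$ evict the ``same'' page whenever the prediction is correct, and can diverge by at most one page when it is wrong. Both algorithms are lazy: they evict exactly one page, only at a page fault. By Observation~\ref{obs:lazy_diverg}, the symmetric difference (hence also the one-sided difference $|B\setminus S|$) of two lazy caches can grow only when both have a page fault, and then by at most $1$. Since within a $\follower$ step both algorithms fault simultaneously, this is exactly the place to spend the ``budget.'' When the predictor correctly reports the page $\belady$ would evict from $\follower$'s current cache, I want to choose $\belady$'s tie-breaking so that $\belady$ evicts a page keeping its cache as close as possible to $\follower$'s; the key point is that the furthest-in-the-future page of a set depends only on the set, so if the two caches agreed before this fault they agree after, and if they differed by some pages before, a correct prediction cannot increase the difference. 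When the prediction is wrong, $\follower$ may evict a page $\belady$ keeps (and $\belady$ evicts some other page), increasing $|B\setminus S|$ by at most $1$, which we charge to that incorrect prediction. Summing over all page faults in the execution gives $|B\setminus S|\le\varphi$ at the end.

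The main obstacle I anticipate is making the tie-breaking argument for correct predictions fully rigorous, because $\belady$'s own choice of which page to evict is only determined up to ties, and we need a single global tie-breaking rule (depending on the whole suffix of $\sigma$, as in the discussion after Observation~\ref{lem:FitF_monot}) that is simultaneously consistent across all the page faults in the execution. The clean way to do this is: fix the tie-breaking rule for $\belady$ to be the one induced by the arrival order of pages in the remaining suffix of $\sigma$ (breaking among never-again-requested pages arbitrarily but consistently); then show by induction that under this rule, at every $\follower$ page fault, $B\setminus S$ just before the fault is a set of pages none of which is the unique FitF page of $\follower$'s cache unless that FitF page was already in $B\setminus S$ — so evicting the FitF page of $S$ (what $\follower$ does on a correct prediction) and letting $\belady$ evict the FitF page of $B$ does not create a new element of $B\setminus S$. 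One has to handle carefully the degenerate cases (the requested page lies in $B\setminus S$, a page not requested again, caches initially differing), but these are routine once the invariant is stated precisely. I would also remark that, just as $\robust$ is not lazy as written but can be lazified without cost increase, it suffices to compare lazified versions, so Observation~\ref{obs:lazy_diverg} applies without loss of generality.
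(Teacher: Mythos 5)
Your proposal is correct and follows essentially the same route as the paper's proof: the cost claim falls out of the termination condition of $\follower$, and the cache-difference bound is a charging argument in which a suitable tie-breaking rule (yours via the arrival order of the suffix, the paper's by forcing $\belady$ to match $\follower$'s eviction of never-again-requested pages in $A\cap B$) ensures the symmetric difference grows only at a fault where the reported page is not the true FitF page of $\follower$'s cache. The paper's version is just a more compact rendering of your invariant (ii): the difference can increase only when $\follower$ evicts $p\in A\cap B$ while $\belady$ evicts some $q\in A\cap B$ requested later than $p$, which certifies a prediction error.
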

\begin{proof}
Whenever the algorithm has a page fault and $\belady$ not, the execution
of Algorithm~\ref{alg:follower_oracle} terminates. Therefore, both
algorithms have the same cost during the execution.

Denote $A$ and $B$ the cache contents of our algorithm and $\belady$
respectively.
We choose the following tie-breaking rule for $\belady$:
whenever the algorithm evicts $p\in A\cap B$ which is no more requested
in $\sigma$, $\belady$ evicts $p$ as well.
The size of $A\setminus B$ increases only when
the algorithm evicts a predicted page $p\in A\cap B$ and $\belady$
evicts a different page $q\in A\cap B$.
This can happen only if the next request of $p$ comes earlier than $q$
by the tie-breaking rule above.
Since $p,q\in A$, the oracle made a prediction error.
\end{proof}

$\robust$ part (Algorithm~\ref{alg:robust_oracle}) uses a parameter $b$
which controls the number of predictions used during its execution.
It runs for a duration of a single marking phase split into $\log k$
windows, as in Section~\ref{sec:FnR}, making sure that the number
of predictions received in each window is at most the number of clean pages
received so far. Evictions of random unmarked pages are used at page faults
with no available prediction. At the end, it loads all marked pages.
This is to ensure that the difference between the optimal and algorithm's
cache content can be bounded by the cost of the optimal algorithm
during the phase (using Observation~\ref{obs:marking_diverg})
instead of accumulating over repeated executions of $\follower$ and $\robust$.

\begin{algorithm2e}
\caption{Robust with FitF oracle}
\label{alg:robust_oracle}
$P:=$ starting cache content\;
$S:= [t = k-2^j+1\;|\, \text{ for } j=\log k, \dotsc, 0]$\;
$W_i := [S[i], S[i+1]-1]$ for $i=1, \dotsc, \log k+1$
\tcp*{Split the phase into windows}
\ForEach{pagefault at time $t$ during the phase}{
        $c_t:=$ number of clean pages which arrived so far\;
        \uIf{number of received predictions in the phase is less than $bc_t$}{
                \If{number of received predictions in this window is less than $c_t$}{
                $p:=$ page in the current cache chosen by the predictor\;
                evict $p$\;
                }
        }
        \Else{
                evict a random unmarked page\;
        }
}
Once phase has ended, load all marked pages to the cache and run Algorithm~\ref{alg:follower_oracle}\;
\end{algorithm2e}

\begin{lemma}
\label{lem:robust_oracle}
Consider one execution of Algorithm~\ref{alg:robust_oracle}
during which it receives $\varphi$ incorrect predictions.
The expected cost incurred by
Algorithm~\ref{alg:robust_oracle} is at most
$2\Delta^B + 3\varphi(1+b^{-1}\log k)$, where $\Delta^B$ denotes the cost
incurred by $\belady$ starting at the same time with the same cache content.
\end{lemma}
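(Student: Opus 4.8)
The plan is to mirror the window-by-window accounting of $\robust_f$ from Section~\ref{sec:robust}, but with the FitF-oracle semantics replacing the synchronization machinery. First I would fix the phase, let $M$ denote the set of $k$ pages requested during it, $c$ the number of clean pages, and for each window $W_i$ let $c_i$ be the number of clean pages that have arrived before the start of $W_i$ (so $c_{\log k+2}=c$). As in Observation~\ref{obs:marking_diverg} and Observation~\ref{lem:clean_pages}, I would first record $c \le \Delta^B(H^-)+\Delta^B(H) \le 2\Delta^B$ if we absorb the pre-phase cost, and the cost of loading all marked pages at the end is at most $|M\setminus S|\le \Delta^B$ where $S$ is the algorithm's cache at that moment; these are the ``$2\Delta^B$'' contributions.

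The heart of the argument is bounding the cost incurred \emph{inside} the phase, which splits into (a) page faults resolved by following the oracle's advice, and (b) page faults resolved by random eviction of an unmarked page. For (a), every time the algorithm evicts a page $p$ on the oracle's advice, if the prediction is correct then $p$ is a genuine FitF page of the current cache and, by the same reasoning as Observation~\ref{obs:marking_FitF_err} applied to the algorithm's own cache, $p$ is not requested again in the current window, so it contributes nothing further until the window ends; if the prediction is \emph{incorrect}, we charge that single fault to one of the $\varphi$ bad predictions. Thus type-(a) faults beyond $O(\varphi)$ are ``free'' within their window. For (b), I would reuse the classical Marker-style potential computation from Lemma~\ref{lem:cost_per_window}: in window $W_i$ at most $c_{i+1}$ pages are missing from the algorithm's cache at the window's start, the missing ones that are not requested in $W_i$ cost nothing, and the expected number of faults on randomly evicted unmarked pages over $W_i$ is at most $(\text{number of such pages})$ because $W_i$ has $|W_i|=2^{\log k-i}$ arrivals and the unmarked set has size $\ge k/2^i$ there, giving the telescoping bound $\sum_i (c_{i+1}-c_i + \text{(wrong-advice pages carried in)}) = O(c + \varphi)$. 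Combining, the in-phase cost is $O(\Delta^B) + O(\varphi)$; the factor $b^{-1}\log k$ enters because once the algorithm has spent its budget of $bc_t$ predictions it must fall back on random evictions, and the ``carried in'' unmarked pages from windows where advice ran out are bounded by how the guarantee $\#\text{predictions in }W_i \le c_t$ interacts with the global cap $bc_t$: summed over $\log k$ windows this can force up to $b^{-1}\log k$ extra random evictions per incorrect prediction, which I would make precise exactly as in the proof of Proposition~\ref{prop:robust_rank_vs_QH} (distributing the budget, using convexity to see the worst case is uniform).

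The main obstacle I anticipate is the careful bookkeeping of how an incorrect prediction propagates into later windows when the per-window prediction budget is exhausted: a single wrong FitF page evicted early can sit outside the cache and generate a chain of subsequent faults, and I must show this chain is absorbed within the current window (so it contributes $O(1)$) \emph{unless} the budget ran out, in which case the $b^{-1}\log k$ term kicks in. Getting the constant $3$ exactly right is routine once the structure is set; the conceptual work is the window-localization of advice errors, which is the FitF-oracle analogue of the rank argument and where I'd spend the most care. The rest — the $2\Delta^B$ from the endpoint synchronization and the telescoping of $c_{i+1}-c_i$ — follows the template of Lemmas~\ref{lem:clean_pages}, \ref{lem:M_sync}, and \ref{lem:cost_per_window} essentially verbatim.
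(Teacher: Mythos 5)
Your overall skeleton — window decomposition, treating correctly-predicted evictions as permanently resolved, a Marker-style probability bound for the random evictions, and charging incorrect predictions directly — is the same as the paper's, and two of your side remarks are easily fixed (a correctly evicted FitF page is never requested again in the whole \emph{phase}, not just the window, by the reasoning of Observation~\ref{obs:marking_FitF_err}, and this phase-permanence is actually needed so that the random evictions are drawn from unmarked pages \emph{excluding} the $g$ correctly evicted ones; also, since $\Delta^B$ here is $\belady$ started with the algorithm's own cache content, the clean pages give $c\le\Delta^B$ directly — you do not need $\Delta^B(H^-)$, which belongs to the other setting).

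The genuine gap is in how you produce the $3\varphi\,b^{-1}\log k$ term. You correctly guess it comes from budget exhaustion across the $\log k$ windows, but your plan to ``make this precise exactly as in the proof of Proposition~\ref{prop:robust_rank_vs_QH}'' points at the wrong machinery: that proposition's rank/convexity argument is what yields the \emph{sublinear} smoothness $f^{-1}(\eta/\OPT)$ for $\robust_f$, and it has no analogue here. The actual mechanism is an elementary three-case count per window. If the budget is not depleted, the window's cost is just the number of predictions issued, $g_i+\varphi_i$. If the per-window cap of $c_{i+1}$ predictions is hit, then the number of missing pages left to random eviction is $c_{i+1}-g_i=\varphi_i$, so everything is charged to $3\varphi_i$. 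The crucial case is when the \emph{global} cap $b c_{i+1}$ is hit: then $bc_{i+1}$ predictions have been issued so far, all but at most $\varphi$ of them correct and hence permanently removing a page, which gives $c_{i+1}-g\le \frac{1}{b}(bc_{i+1}-g)\le \varphi/b$ unresolved missing pages in that window; each window of this type therefore costs $O(\varphi/b)$ in expectation, and summing over the $\log k$ windows yields the $3\varphi\,b^{-1}\log k$ term. Without this step your plan has no argument that the post-exhaustion cost of a window is $\varphi/b$ rather than $c_{i+1}$, and the convexity route you propose would not supply one.
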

\begin{proof}
There are three kind of page faults:
\begin{enumerate}
\item evicted page is chosen by the predictor
\item requested page was chosen before by the predictor,
	evicted page was chosen at random
\item both evicted and requested pages were chosen at random
\end{enumerate}
In the worst case, we can assume that
once we run
out of budget for predictions, all incorrectly evicted pages
are requested in page faults of type 2 and returned to the cache.
Now, let $g$ denote the number of pages evicted due to correct predictions
-- they are not going to be requested in this phase
anymore (Observation~\ref{obs:marking_FitF_err}).
All other evicted pages are chosen uniformly at random among unmarked pages
which were not evicted due to correct predictions.
So, until another batch of page faults of type 1, we have only
page faults on arrivals and the probability of a page fault on arrival $a$
is at most
\[ \frac{c_a - g}{k-(a-c_t)-g},\]
where $c_a$ is the number of clean pages until arrival $a$ and
$k-(a-c_t)$ is the number of unmarked pages, at most $g$ of them
were evicted due to correct predictions.

We count the number of page faults in window $i$ for $i=1,\dotsc, \log k +1$.
We denote $m_i$ the number of page faults of type 1 and
resulting into eviction of $g_i$ correctly predicted pages.
Then, by our assumption, we have $m_i-g_i$ page faults of type 2.
The expected number of page faults of type 3 depends on when do types 1 and 2
happen. In the worst case, they all happen in the beginning of $W_i$
as well as all arrivals of clean pages.
We consider three cases.

\paragraph{Case A.} Prediction budget was not depleted, there were
only evictions of type 1.
\[ \Delta^A(W_i) = m_i = \varphi_i + g_i.\]

\paragraph{Case B.} There were $m_i = c_{i+1}$ predictions during $W_i$
and we have $\varphi_i = c_{i+1} - g_i$. After page faults of type 2,
there are at most $c_{i+1}-g_i$ randomly chosen unmarked pages evicted.
Therefore, the expected number of page faults of type 3 is at most
\begin{align*}
\sum_{a\in W_i} \frac{c_{i+1}-g_a}{k-(a-c_{i+1})-g_a}
        \leq \sum_{a\in W_i} \frac{c_{i+1}-g_i}{k-(a-c_{i+1})-g_i}\\
        \leq \sum_{a\in W_i} \frac{c_{i+1}-g_i}{k-a}
        \leq \frac{k}{2^{i}}\cdot \frac{c_{i+1}-g_i}{k/2^{i}}
        = c_{i+1} - g_i.
\end{align*}
Therefore, counting evictions of types 1, 2, and 3, we have
\[
\Delta^A(W_i) \leq (\varphi_i + g_i) + \varphi_i + (c_{i+1}-g_i)
	\leq g_i + 3\varphi_i.
\]

\paragraph{Case C.} There were $bc_{i+1}$ predictions since the beginning
of the phase. We have $m_i\leq c_{i+1}$ and
$c_{i+1}-g_i \leq\frac1b(bc_{i+1}-g_i) \leq \frac1b \varphi$
where $\varphi$ is the total number of incorrect predictions received
since the beginning of the phase. We have
\[
\Delta^A(W_i) \leq c_{i+1} + (c_{i+1}-g_i) + (c_{i+1}-g_i)
	\leq g_i + 3(c_{i+1} - g_i),
\]
which is at most $g_i + 3\varphi/b$.

Now, the sum of costs over all the windows is at most
\[ \sum_i g_i + \sum_i 3\varphi_i + \sum_i 3\varphi/b + c
	\leq 2c + 3\varphi + \frac{3\varphi}{b} \log k,
\]
where $c=\sum_i c_i \leq \Delta^B$, because we consider $\belady$
starting with the same cache content as the algorithm which does not
contain the clean pages.
\end{proof}

\begin{theorem}
Let $b \in \{1, \dotsc, \log k\}$ be a parameter.
During a request sequence with optimum cost $\OPT$, our algorithm
receives at most $O(b)\OPT$ predictions and its expected cost is always bounded by
$O(\log k) \OPT$.
If only $\varphi$ predictions are incorrect,
its expected cost is at most
\[ \bigg(2 + \frac{\varphi}{\OPT}(4 + 3b^{-1}\log k)\bigg) \OPT.\]
Moreover, if $\varphi=0$, its cost is equal to $\OPT$.
\end{theorem}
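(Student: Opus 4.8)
The plan is to split the run of the algorithm into the interleaved executions of $\follower$ and $\robust$, apply the per‑execution bounds of Lemmas~\ref{lem:follower_oracle} and~\ref{lem:robust_oracle}, and sum. Write $F_1,F_2,\dots$ for the $\follower$ executions, $R_1,R_2,\dots$ for the $\robust$ executions, and $\ALG(X)$ for the algorithm's cost on a time interval $X$. By construction the timeline is $F_1,R_1,F_2,R_2,\dots$, with $R_j$ triggered the first time, inside $F_j$, that the algorithm faults while the Belady solution recomputed from the start of $F_j$ does not; since $\follower$ enters $F_j$ with exactly that recomputed Belady's cache and then copies the predictor, the two caches can diverge only after an incorrect prediction, so each $R_j$ is preceded by at least one incorrect prediction within $F_j$. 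Two consequences are immediate: if $\varphi=0$ then no $\robust$ execution ever happens, the algorithm equals $\follower$ throughout, and by Lemma~\ref{lem:follower_oracle} it pays exactly Belady's cost $=\OPT$ (the ``moreover'' claim); and $\follower$ queries the predictor at most once at each fault at which Belady also faults, i.e.\ $\sum_j \ALG(F_j)$ times in total, while $\robust$ in phase $R_j$ issues at most $b\cdot c(R_j)$ queries, where $c(R_j)$ is the number of clean pages of that marking phase. Both sums will be $O(\OPT)$ by the estimates below, hence $O(b)\OPT$ queries overall.

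For the cost, decompose $\E[\ALG]=\sum_j \ALG(F_j)+\E[\sum_j \ALG(R_j)]$. By Lemma~\ref{lem:follower_oracle}, $\ALG(F_j)$ equals the cost of Belady restarted, at the start of $F_j$, from the algorithm's cache there; because each $\robust$ execution ends by loading all marked pages, this cache is exactly the set $M_{j-1}$ of the $k$ pages requested during $R_{j-1}$ (and the true initial cache when $j=1$). By Lemma~\ref{lem:robust_oracle}, $\E[\ALG(R_j)]\le 2c(R_j)+3\varphi_j^R(1+b^{-1}\log k)$, where $\varphi_j^R$ counts the incorrect predictions inside $R_j$; summed over $j$ these error terms contribute at most $3\varphi(1+b^{-1}\log k)$.

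The heart of the proof is to charge $\sum_j\ALG(F_j)+2\sum_j c(R_j)$ against $2\OPT$ plus $O(\varphi)$. For the $\follower$ part one uses the folklore fact that the optimal paging cost from a cache $X$ exceeds the optimal cost from a cache $Y$ by at most $|Y\setminus X|$: thus $\ALG(F_j)\le\OPT(F_j)+|o_j\setminus M_{j-1}|$, where $o_j$ is OPT's cache entering $F_j$, and $|o_j\setminus M_{j-1}|=|M_{j-1}\setminus o_j|\le\OPT(R_{j-1})$ by Observation~\ref{obs:marking_diverg} applied to the marking phase $R_{j-1}$. For the $\robust$ part, $c(R_j)=|M_j\setminus A_j|$ with $A_j$ the algorithm's cache entering $R_j$; bound this by $|M_j\setminus o_j^R|\le\OPT(R_j)$ (the start‑of‑phase form of Observation~\ref{obs:marking_diverg}, $o_j^R$ being OPT's cache entering $R_j$) plus $|A_j\setminus o_j^R|$, and control the latter discrepancy through the entry discrepancy $|M_{j-1}\setminus o_j|$, the $\follower$‑phase error count $\varphi_j^F$ (the cache‑divergence clause of Lemma~\ref{lem:follower_oracle}), and the number of mutual faults of OPT and Belady inside $F_j$ (both lazy, so Observation~\ref{obs:lazy_diverg} applies). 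Summing, the terms $\OPT(F_j)$ and $\OPT(R_j)$ range over disjoint intervals, so $\sum_j\OPT(F_j)+\sum_j\OPT(R_j)\le\OPT$; the inequality $x+2y\le 2(x+y)$ then turns the $\OPT$‑part into $2\OPT$, and the remaining restart/discrepancy terms, together with $3\varphi(1+b^{-1}\log k)$, must be made to fit inside $(4+3b^{-1}\log k)\varphi$. The robustness bound $O(\log k)\OPT$ falls out of the same decomposition: from $\ALG(R_j)\le 2c(R_j)+3\varphi_j^R(1+b^{-1}\log k)$ with $\varphi_j^R\le b\,c(R_j)$ and $b\le\log k$ one gets $\ALG(R_j)=O(\log k)\,c(R_j)$, and $\sum_j c(R_j)=O(\OPT)$, $\sum_j\ALG(F_j)=O(\OPT)$ from the above.

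The step I expect to be the main obstacle is exactly that final bookkeeping: a priori the restart penalties $|o_j\setminus M_{j-1}|$ and the ``stale'' pages carried in the algorithm's cache at the end of $F_j$ contribute yet another copy of $\sum_j\OPT(R_{j-1})$, which does not lie inside the $O(\varphi)$ budget, so a naive charging only yields a constant larger than $2$. Closing this requires a careful, non‑double‑counting accounting that exploits that resynchronizing the cache to the marked set $M_{j-1}$ is wasteful only to the extent the predictor erred during $R_{j-1}$ (and likewise that the stale pages at the end of $F_j$ are paid for by the errors in $F_j$), so that, once the $\OPT(R_j)$‑part has been extracted with coefficient $2$, the residue at each phase boundary is an error term rather than a constant fraction of $\OPT(R_{j-1})$. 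The boundary cases ($F_1$ with exact initial cache, a $\follower$ phase that ends the input, and a truncated final $\robust$ phase) are routine and handled separately.
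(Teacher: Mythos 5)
Your setup — interleaving $\follower$ and $\robust$ executions, invoking Lemmas~\ref{lem:follower_oracle} and \ref{lem:robust_oracle} per execution, observing that every $\robust$ phase is triggered by an incorrect prediction (hence $1$-consistency), and charging the $\follower$ cost via $\ALG(F_j)\le \OPT(F_j)+|M_{j-1}\setminus o_j|\le \OPT(F_j)+\OPT(R_{j-1})$ — is exactly the paper's decomposition, and the query count and the $O(\log k)$ robustness go through as you describe. But the proposal has a genuine gap, which you yourself flag: you do not close the accounting that yields the leading constant $2$ and pushes everything else into the $\varphi$ budget. The missing idea is the following asymmetry. Writing $\Delta^B_i$ for the cost of $\belady$ restarted from the algorithm's cache at the start of interval $i$, one always has $\Delta^B_i\le \Delta^O_i+(\text{cache discrepancy between the algorithm and }\OPT\text{ at the start of }i)$; the point is that this discrepancy is an \emph{error} term, at most $\varphi_{i-1}$, when $i$ is a $\robust$ interval (because during the preceding $\follower$ interval the algorithm's cache drifts from the restarted $\belady$ only at incorrect predictions, by the second clause of Lemma~\ref{lem:follower_oracle}), whereas it is an \emph{optimum} term, at most $\Delta^O_{i-1}$, only when $i$ is a $\follower$ interval (via Observation~\ref{obs:marking_diverg} applied to the marking phase that just ended). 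Summing, each $\Delta^O_j$ is counted at most twice, giving $2\OPT$, and the residue is $\sum_{i\in R}\varphi_{i-1}+\sum_i 3\varphi_i(1+b^{-1}\log k)\le\varphi(4+3b^{-1}\log k)$.

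Your proposed route for the $\robust$ part — bounding $c(R_j)=|M_j\setminus A_j|$ by going through $\OPT$'s cache $o_j^R$ and controlling $|A_j\setminus o_j^R|$ with Observation~\ref{obs:lazy_diverg} via the number of mutual faults of $\OPT$ and $\belady$ inside $F_j$ — would not repair this: those mutual-fault counts are of order $\Delta^O(F_j)$, not of order $\varphi_j$, so that chain reintroduces precisely the extra copy of $\OPT$ you are worried about. The fix is to avoid routing through $\OPT$'s cache at the start of a $\robust$ phase altogether and instead compare the algorithm's cache there to the \emph{restarted $\belady$'s} cache, which Lemma~\ref{lem:follower_oracle} controls by $\varphi_{i-1}$ directly. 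With that substitution your sketch matches the paper's proof.
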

\begin{proof}
We split the time horizon into intervals corresponding to executions
of $\follower$ and $\robust$. For each interval $i$, we denote
$\varphi_i$ the number of received incorrect predictions,
$\Delta^B_i$
the cost incurred by $\belady$ started with the same content as our algorithm
and $\Delta^O_i$ the cost incurred by the optimal solution during interval $i$.
We denote $F$ the set of intervals
during which $\follower$ was executed and $R$ the set of intervals
during which $\robust$ was executed.
We also define $0\in R$ an empty interval in the beginning
of the request sequence with $\Delta^O_0 = \Delta^B_0 = 0$.

In order to prove bounds on robustness and number of used predictions,
we provide relations between $\Delta^B_i$ and $\Delta^O_i$
independent of $\varphi$.
For each $i \in F$, we have $i-1\in R$. Interval $i-1$ is a marking phase
and $\robust$ has all marked pages in the cache at the end (Lemma~\ref{lem:robust_oracle}).
By Observation~\ref{obs:marking_diverg}, the starting cache content of
$\follower$ in interval $i$ differs from optimal cache
in at most $\Delta^O_{i-1}$ pages.
Therefore, we have
\begin{equation}
\label{eq:oracle_DBF}
\Delta^B_i \leq \Delta^O_i + \Delta^O_{i-1} \quad\forall i\in F.
\end{equation}
For each $i\in R$, we have $i-1\in F$ and $i-2\in R$.
By Observation~\ref{obs:lazy_diverg}, the difference between the cache of $\follower$ and optimum increases during interval $i-1$ by at most
$\Delta^O_{i-1}$.
Since the starting cache of $\follower$ in interval $i-1$ differs from
optimal in $\Delta^O_{i-2}$ pages, the starting cache of $\robust$ in
interval $i$ differs from optimum by at most $\Delta^O_{i-2}+\Delta^O_{i-1}$.
Therefore, we have
\begin{equation}
\label{eq:oracle_DBR}
\Delta^B_i \leq \Delta^O_i + \Delta^O_{i-1} + \Delta^O_{i-2} \quad\forall i\in R.
\end{equation}

Using equations~\eqref{eq:oracle_DBF} and \eqref{eq:oracle_DBR},
we can bound the number of used predictions as
\[
\sum_{i\in F} \Delta^B_i + \sum_{i\in R} b\Delta^B_i
    \leq 3b\OPT.
\]

Since $\varphi \leq \Delta^B_i$, for $i\in F$, and $\varphi_i \leq b\Delta^B_i$
for $i\in R$, we have the following robustness bound:
\begin{align*}
\ALG &\leq \sum_{i\in F} \Delta^B_i
    + \sum_{i\in R} \big(\Delta^B_i + \varphi_i(3+3b^{-1}\log k)\big)\\
&\leq \sum_{i\in F} \Delta^B_i
    + \sum_{i\in R} \Delta^B_i (1+b)(3+3b^{-1}\log k)\\
&\leq \OPT \cdot O(\log k),
\end{align*}
where the last inequality follows from
\eqref{eq:oracle_DBF}, \eqref{eq:oracle_DBR}, and $b\leq \log k$.

Now, we analyze smoothness. We can bound $\Delta^B_i - \Delta^O_i$ by
the difference between optimal and algorithm's cache in the beginning
of the interval $i$.
This is at most $\varphi_{i-1}$ for each $i\in R$
(Lemma~\ref{lem:follower_oracle}) and at most $\Delta^O_{i-1}$
for each $i\in F$ by \eqref{eq:oracle_DBF}.
Lemmas~\ref{lem:follower_oracle} and \ref{lem:robust_oracle}
imply
\begin{align*}
\ALG &\leq \sum_{i\in F} \Delta^B_i
    + \sum_{i\in R} \big(2\Delta^B_i + \varphi_i(3+3b^{-1}\log k)\big)\\
&\leq \sum_{i\in F} (\Delta^O_i + \Delta^O_{i-1})
    + \sum_{i\in R} \big(\Delta^O_i + \varphi_{i-1} + \varphi_i(3+3b^{-1}\log k)\big)\\
&\leq 2\OPT + \varphi(4+3b^{-1}\log k).
\end{align*}

$1$-consistency of our algorithm can be seen from the fact that
each execution of $\robust$ is triggered by an incorrect prediction.
Therefore, with perfect predictions, only $\follower$ is used and
behaves the same as $\belady$.
\end{proof}
\end{fullversion}

\setcitestyle{numbers}
\bibliography{references,draft}
\bibliographystyle{abbrvnat}

\newpage
\appendix
\onecolumn

\begin{fullversion}
\section{Computations for Table 1}
\label{app:pred_numbers}
In this section, we present the computations for the numbers of predictions we
obtained in Table ~\ref{tab:f_smooth}.

For $f(i)= i$ and $f(i) = i^2$, we have
$f(\log k)$ equal to $\log k$ and $\log^2 k$ respectively.

For $f(i) = 2^i-1$, we identify
the first window $i$ longer than $f(i)-f(i-1)$.
Note that the length of window $i$ is $k/2^i = 2^{\log k - i}$
and this is equal to the sum of lengths of the windows $j > i$.
The total number of predictions used will be therefore $f(i) + 2^{\log k - i}$.
For $i = \log \sqrt{k}+1$, we have
$f(i) - f(i-1) = 2^i = 2^{\frac12 \log k + 1} >
2^{\log k - i}$. Therefore, we use
$2^i-1 + 2^{\log k -i} \geq 3\sqrt{k}$ predictions in each robust phase.
Since offline optimum has to pay at least $1$ per robust phase,
we use at most $O(\sqrt{k})\OPT$ predictions in total.

For $f(i)=0$, we ask for a prediction at each arrival of a clean page.
The number of queries used will therefore be at most the number of clean
arrivals, which is at most $2\OPT$.

\end{fullversion}

\end{document}